\icmltitlerunning{SGD: General Analysis and Improved Rates}
\newcommand{\Exp}{\mathbb{E}}
\newcommand{\E}[1]{{\mathbb{E}\left[#1\right] }}    
\newcommand{\EE}[2]{{\mathbb{E}_{#1}\left[#2\right] }} 
\newcommand{\Prob}[1]{\mathbb{P} \left[ #1\right]}
\newcommand{\R}{\mathbb{R}}
\newcommand{\bA}{\mathbf{A}}
\newcommand{\eqdef}{:=}
\newcommand{\cC}{{\cal C}}
\newcommand{\cD}{{\cal D}}
\newcommand{\cL}{{\cal L}}
\newcommand{\mM}{{\bf M}}
\newcommand{\mP}{{\bf P}}
\definecolor{shadecolor}{gray}{0.90}
\declaretheoremstyle[
headfont=\normalfont\bfseries,
notefont=\mdseries, notebraces={(}{)},
bodyfont=\normalfont,
postheadspace=0.5em,
spaceabove=1pt,
mdframed={
  skipabove=8pt,
  skipbelow=8pt,
  hidealllines=true,
  backgroundcolor={shadecolor},
  innerleftmargin=4pt,
  innerrightmargin=4pt}
]{shaded}
\declaretheorem[style=shaded,within=section]{definition}
\declaretheorem[style=shaded,sibling=definition]{theorem}
\declaretheorem[style=shaded,sibling=definition]{proposition}
\declaretheorem[style=shaded,sibling=definition]{assumption}
\declaretheorem[style=shaded,sibling=definition]{corollary}
\declaretheorem[style=shaded,sibling=definition]{lemma}
\declaretheorem[style=shaded,sibling=definition]{remark}
\declaretheorem[style=shaded,sibling=definition]{example}
\providecommand{\norm}[1]{\left\| #1\right\|}
\newcommand{\dotprod}[1]{\left< #1\right>}
\begin{document}

\twocolumn[
\icmltitle{SGD: General Analysis and Improved Rates}


\begin{icmlauthorlist}
\icmlauthor{Robert M.\ Gower}{telecom}
\icmlauthor{Nicolas Loizou}{edi}
\icmlauthor{Xun Qian}{kaust}
\icmlauthor{Alibek Sailanbayev}{kaust}
\icmlauthor{Egor Shulgin}{mipt}
\icmlauthor{Peter Richt\'{a}rik}{kaust,edi,mipt}
\end{icmlauthorlist}

\icmlaffiliation{kaust}{King Abdullah University of Science and Technology, Kingdom of Saudi Arabia}
\icmlaffiliation{mipt}{Moscow Institute of Physics and Technology, Russian Federation}
\icmlaffiliation{edi}{University of Edinburgh, United Kingdom}
\icmlaffiliation{telecom}{T\'{e}l\'{e}com Paris Tech, France}

\icmlcorrespondingauthor{Peter Richt\'{a}rik}{peter.richtarik@kaust.edu.sa}

\icmlkeywords{SGD, arbitrary sampling, importance sampling, minibatches, expected smoothness}

\vskip 0.3in
]

\printAffiliationsAndNotice{}

\begin{abstract}
We propose a  general yet simple theorem describing the convergence of SGD under the {\em arbitrary sampling} paradigm.  
Our theorem describes the convergence of an infinite array of variants of SGD, each of which is associated with a specific probability law governing the data selection rule used to form  minibatches. This is the first time such an analysis is performed, and most of our variants of SGD were never explicitly considered in the literature before.  Our analysis relies on the recently introduced notion of expected smoothness and does not rely on a uniform bound on the variance of the stochastic gradients. By specializing our theorem to different mini-batching strategies, such as sampling with replacement and independent sampling, we derive exact expressions for the stepsize as a function of the mini-batch size. With this we can also determine the mini-batch size that optimizes the total complexity, and show explicitly that as the variance of the stochastic gradient evaluated at the minimum grows, so does the optimal mini-batch size. For zero variance, the optimal mini-batch size is one. Moreover, we prove insightful  stepsize-switching rules  which describe when one should switch from a constant to a decreasing stepsize regime. 
\end{abstract}



%
%

\section{Introduction}

We consider the optimization problem
\begin{equation}
\label{eq:prob}
 x^* = \arg\min_{x\in\R^d} \left[ f(x) = \frac{1}{n} \sum_{i=1}^n f_i(x)\right], 
\end{equation}
where each $f_i: \R^d \to \R$ is smooth (but not necessarily convex). Further, we assume that $f$ has a unique\footnote{This assumption can be relaxed; but for simplicity of exposition we enforce it.} global minimizer $x^*$ and is $\mu$--strongly quasi-convex~\cite{karimi2016linear, Necoara-Nesterov-Glineur-2018-linear-without-strong-convexity}: 
\begin{equation}\label{eq:strconvexcons}
 f(x^*) \geq f(x)+ \dotprod{\nabla f(x) , x^*-x} + \frac{\mu}{2} \norm{x^*-x}^2
\end{equation}
for all $x \in \R^d$.


\subsection{Background and contributions}
Stochastic gradient descent (SGD) \cite{robbins1951stochastic, NemYudin1978, NemYudin1983book, Pegasos, Nemirovski-Juditsky-Lan-Shapiro-2009, HardtRechtSinger-stability_of_SGD}, has become the workhorse for training supervised machine learning problems which have the generic form \eqref{eq:prob}.  

\textbf{Linear convergence of SGD.} 
\citet{moulines2011non} provided a  non-asymptotic analyses of SGD showing  linear convergence for strongly convex $f$ up to a certain noise level. \citet{needell2014stochastic} improved upon these results by removing the quadratic dependency on the condition number in the iteration complexity results, and considered importance sampling.  
The analysis of~\citet{needell2014stochastic} was later extended to a mini-batch variant 
where the mini-batches are formed by partitioning the data~\cite{batchSGDNW16}. These works are the main starting point for ours.  

{\em Contributions:} We further {\em tighten and generalize these results to virtually all forms of sampling.} We introduce an {\em expected smoothness} assumption (Assumption~\ref{ass:Expsmooth}),  first introduced in~\cite{GowerRichBach2018} in the context of a certain class of variance-reduced methods. This assumption is a joint property of $f$ and the sampling scheme $\cD$ utilized by an SGD method, and  allows us prove a generic complexity result (Theorem~\ref{theo:strcnvlin}) that holds for arbitrary sampling schemes $\cD$.  Our work  is the {\em first time SGD is analysed under this assumption.} We obtain {\em linear convergence rates  without strong convexity}; in particular, assuming strong quasi-convexity (this class includes some non-convex functions as well). Furthermore, we {\em do not require the functions $f_i$ to be convex.}

\textbf{Gradient noise assumptions.} \citet{Shamir013} extended the analysis of SGD to convex non-smooth optimization (including the strongly convex case). However, their proofs still rely on the assumption that the variance of the stochastic gradient is bounded for all iterates of the algorithm:
there  exists $c \in \R$ such that $\Exp_{i}\|\nabla f_{i} (x^k)\|^2 \leq c$ for all $k$. The same assumption was used in the analysis of several recent papers \cite{recht2011hogwild, hazan2014beyond, rakhlin2012making}. 
A much more relaxed weak growth assumption $\Exp_{i}\|\nabla f_{i} (x^k)\|^2 \leq c_1 + c_2 \Exp\|\nabla f(x^k)\|^2$ for all $k$,  was apparently first used in the later 90's to prove the asymptotic convergence of SGD (see Proposition 4.2 of~\citet{Bertsekas:neurodynamic}). 
 \citet{bottou2018optimization} establish a linear convergence of SGD under this weak growth assumption. Recently, \citet{pmlr-v80-nguyen18c} turn this assumption into a theorem by establishing formulas $c_1$ and $c_2$ under some reasonable conditions, and  provide further insights into the workings of SGD and its parallel asynchronous cousin, Hogwild!. Similar conditions have been also proved and used in the analysis of decentralized variants of SGD \cite{lian2017can,assran2018stochastic}. 
 Based on a strong growth condition ($c_1 =0$), \citet{schmidt2013fast} were the first to establish linear convergence of SGD, with~\citet{Volkan_Bang:2017} later giving sufficient and necessary conditions for the linear convergence of SGD under this condition. 
%


{\em Contributions:} Our analysis does not directly assume a growth condition. Instead, we make use of the remarkably weak {\em expected smoothness} assumption.

\textbf{Optimal mini-batch size.} Recently it was experimentally shown by \citet{imagenet1hour} that using larger mini-batches sizes is key to efficient training of large scale non-convex problems, leading to the training of ImageNet in under 1 hour. The authors conjectured that the stepsize should grow linearly with the mini-batch size. 

{\em Contributions:} We prove (see Section~\ref{sec:specialexample}) that this is the case, upto a certain optimal mini-batch size, and provide {\em exact formulas for the dependency of the stepsizes on the mini-batch sizes. }

\textbf{Learning schedules.} \citet{pmlr-v84-chee18a}  develop techniques for detecting the convergence of SGD within a region around the solution. 

{\em Contributions:} We provide a closed-form formula for when  should SGD {\em switch from a constant stepsize to a decreasing stepsize} (see Theorem~\ref{theo:decreasingstep}). Further, we clearly show how the {\em optimal stepsize (learning rate) increases and the iteration complexity decreases as the mini-batch size} increases for both independent sampling and sampling with replacement. We also recover the well known $L/\mu \log (1/ \epsilon)$ convergence rate of gradient descent (GD) when the mini-batch size is $n$; this is the {\em first time a generic SGD analysis recovers the correct rate of GD.}

\textbf{Over-parameterized models.} There has been some recent work in analysing SGD in the 
setting where the underlying model being trained has more parameters than there is data available. 
In this \emph{zero--noise} setting, \citet{MaBB18} showed that SGD converges linearly.

{\em Contributions:}  In the case of over-parametrized models, we extend the findings of~\citet{MaBB18}\footnote{Recently, the results of \citet{MaBB18} were extended to the accelerated case by~\citet{vaswani2018fast}; however, we do not study accelerated methods in this work.} to independent sampling and sampling with replacement by showing that the optimal mini-batch size is $1$.   Moreover, we provide results in the more general setting where the model is not necessarily over-parametrized.

 \textbf{Practical performance.} We  corroborate our theoretical results with extensive experimental testing.
  



\subsection{Stochastic reformulation}

In this work we provide a single theorem through which we can analyse all importance sampling and mini-batch variants of SGD. 
To do this, we need to introduce a {\em sampling vector} which we will use to re-write our problem~\eqref{eq:prob}.

 

\begin{definition}\label{def:unbiasedv} We say that a random vector $v \in \R^n$ drawn from some distribution $\cD$ is a {\em sampling vector}  if its mean is the vector of all ones:
\begin{equation}\label{eq:nuh98f}
\EE{\cD}{v_i} =1, \quad \forall i\in [n].
\end{equation}
\end{definition}
With each distribution $\cD$ we now introduce a \emph{stochastic reformulation} of~\eqref{eq:prob} as follows
\begin{equation} \label{eq:reformulation}
 \min_{x\in \R^d} \; \EE{\cD}{f_v(x) \eqdef \frac{1}{n}\sum_{i=1}^n v_i f_i(x)}.
 \end{equation}
By the definition of the sampling vector, $f_v(x)$ and  $\nabla f_v(x)$ are unbiased estimators of $f(x)$ and $\nabla f(x),$ respectively, and hence probem \eqref{eq:reformulation} is indeed equivalent (i.e., a reformulation) of the original problem \eqref{eq:prob}. In the case of the gradient, for instance, we get  \begin{equation}\label{eq:unbiasedgrad}
 \EE{\cD}{\nabla f_v(x)} \overset{\eqref{eq:reformulation}}{=} \frac{1}{n}\sum_{i=1}^n \EE{\cD}{v_i} \nabla f_i(x) \overset{\eqref{eq:nuh98f}}{=} \nabla f(x).
\end{equation}

Similar but different stochastic reformulations were recently proposed by \citet{ASDA} and further used in \citep{loizou2017momentum,loizou2019convergence} for the more special problem of solving linear systems, and by \citet{GowerRichBach2018} in the context of variance-reduced methods.  Reformulation~\eqref{eq:reformulation} can be  solved using  SGD in a natural way:
 \begin{eqnarray} 
\boxed{ x^{k+1} =  x^k- \gamma^k \nabla f_{v^k}(x^k)}  \label{eq:sgdstep}
\end{eqnarray}  
where $v^k \sim \cD$ is sampled i.i.d.\ at each iteration and $\gamma^k>0$ is a  stepsize. However, for different distributions $\cD$, \eqref{eq:sgdstep} has a {\em different interpretation} as an SGD method for solving the {\em original} problem \eqref{eq:prob}. In our main result we will analyse \eqref{eq:sgdstep} for any $\cD$ satisfying \eqref{eq:nuh98f}. By substituting specific choices of $\cD$, we obtain specific variants of SGD for solving \eqref{eq:prob}.

\section{Expected Smoothness and Gradient Noise}

In our analysis of SGD \eqref{eq:sgdstep} applied to the stochastic reformulation \eqref{eq:reformulation} we rely on a generic and remarkably weak assumption of {\em expected smoothness}, which we now define  and relate to existing growth conditions.

\subsection{Expected smoothness}

Expected smoothness~\cite{GowerRichBach2018} is an assumption that combines both the properties of the distribution $\cD$ and the smoothness properties of  function $f$.
\begin{assumption}[Expected Smoothness]
\label{ass:Expsmooth} We say that $f$ is $\cL$--smooth in expectation with respect to distribution $\cD$ if there exists  $\cL=\cL(f,\cD)>0$  such that
\begin{equation}
\label{eq:expsmooth}
\EE{\cD}{\norm{\nabla f_v(x)-\nabla f_v(x^*)}^2} \leq 2\cL (f(x)-f(x^*)),
\end{equation}
for all $x\in\R^d$. For simplicity, we will write $(f,\cD)\sim ES(\cL)$ to say that \eqref{eq:expsmooth} holds.  When $\cD$ is clear from the context, we will often ignore mentioning it,  and simply state that the expected smoothness constant is $\cL.$
\end{assumption}

There are scenarios where the above inequality is tight. Indeed, in the setting of stochastic reformulations of linear systems considered in \cite{ASDA}, one has $f_v(x) = \frac{1}{2}\|\nabla f_v(x)\|^2$, $\nabla f_v(x^*) = 0$ and $f_v(x^*)=0$, which means that \eqref{eq:expsmooth} holds as an identity with $\cL=1$.

In Section~\ref{sec:cLandsigma} we  show how convexity and $L_i$--smoothness of $f_i$ implies expected smoothness. However, the opposite implication does not hold. Indeed, the expected smoothness assumption can hold even when the $f_i$'s and $f$ are not convex, as we show in the next example.

\begin{example}[Non-convexity and expected smoothness] Let $f_i=\phi$ for $i=1,\ldots, n$, where $\phi$ is a $L_{\phi}$--smooth and non-convex function which has a global minimum $x^*\in\R^d$ (such functions exist\footnote{\tiny There exists invex functions that satisfy these conditions~\cite{karimi2016linear}. As an example $\phi(x) = x^2 + 3 \sin^2(x)$ is smooth, non-convex, and has a unique global minimizer.}). Consequently $f = \phi$  and $f_v = \frac{\sum_i v_i}{n} \phi $. Letting 
$\theta \eqdef  \mathbb{E}_{\cD}\big[\big(\sum_i v_i \big)^2\big],$ we have
\begin{align*}
\EE{\cD}{\|\nabla f_v(x) - \nabla f_v(x^*)\|^2} &= \frac{\theta}{n^2}  \|\nabla \phi(x) - \nabla \phi(x^*)\|^2 \nonumber \\
&\leq  \frac{2 \theta L_\phi}{n^2} (f(x)-f(x^*)),
\end{align*}
 where the last inequality follows from  Proposition~\ref{prop:ihs9hd}.
So, $(f,\cD)\sim ES(\cL)$ for $\cL = \frac{\theta L_\phi}{n^2} $. 
\end{example}

%

\subsection{Gradient noise}

Our second key assumption is finiteness of gradient noise, defined next: 
\begin{assumption}[Finite Gradient Noise] \label{ass:grad-noise} 
 The {\em gradient noise} $\sigma=\sigma(f,\cD)$, defined by
\begin{equation}\label{eq:sigma_def} \sigma^2  \eqdef \Exp_{\cD}[\norm{\nabla f_v(x^*)}^2],\end{equation}
is finite.
\end{assumption}

This is a very weak assumption, and should intuitively be really seen as an assumption on $\cD$ rather than on $f$. For instance, if the sampling vector $v$ is non-negative with probability one and $\Exp[v_i \sum_j v_j]$ is finite for all $i$, then $\sigma$ is finite. When \eqref{eq:prob} is the training problem of an over-parametrized model, which  often occurs in deep neural networks, each individual loss function $f_i$ attains its minimum at $x^*$, and 
thus $\nabla f_i(x^*) =0.$  It follows that $\sigma=0$.

\subsection{Key lemma and connection to the weak growth condition} \label{sec:bifg79f9d}

A common assumption used to prove the convergence of SGD is uniform boundedness of the stochastic gradients\footnote{Or it is assumed that $\Exp\|\nabla f_{v} (x^k)\|^2 \leq c$ for all $k$ iterates. But this too has issues since it implicitly assumes that the iterates remain within a compact set, and yet it it used to prove the convergence to within a compact set, raising issues of  a circular argument.}: there exist $0<c < \infty$ such that $\Exp\|\nabla f_{v} (x)\|^2 \leq c$ for all $x$. However, this assumption often does not hold, such as in the case when $f$ is strongly convex~\cite{bottou2018optimization, pmlr-v80-nguyen18c}. 
We do not assume such a bound. Instead, we use the following direct consequence of expected smoothness to bound the expected norm of the stochastic gradients.
%
\begin{lemma}
\label{lem:weakgrowth}
If $(f,\cD)\sim ES(\cL)$, then
\begin{align}
\label{upperbound}
\Exp_{\cD} \left[ \|\nabla f_{v} (x)\|^2 \right] & \leq  4  \cL ( f(x)-f(x^*) ) + 2 \sigma^2.
\end{align}
\end{lemma}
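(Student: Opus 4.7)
The plan is to derive the bound by a standard add-and-subtract argument anchored at the optimum $x^*$, invoking Assumption~\ref{ass:Expsmooth} on one piece and Assumption~\ref{ass:grad-noise} on the other. Concretely, I would write
\[
\nabla f_v(x) \;=\; \bigl(\nabla f_v(x) - \nabla f_v(x^*)\bigr) + \nabla f_v(x^*),
\]
and apply the elementary inequality $\|a+b\|^2 \le 2\|a\|^2 + 2\|b\|^2$ pointwise in $v$ to obtain
\[
\|\nabla f_v(x)\|^2 \;\le\; 2\|\nabla f_v(x) - \nabla f_v(x^*)\|^2 + 2\|\nabla f_v(x^*)\|^2.
\]

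Next I would take expectation over $v \sim \cD$ on both sides. The first term is bounded via the expected smoothness assumption~\eqref{eq:expsmooth}, giving $\Exp_\cD\|\nabla f_v(x) - \nabla f_v(x^*)\|^2 \le 2\cL(f(x)-f(x^*))$. The second term is exactly $\sigma^2$ by the definition~\eqref{eq:sigma_def}. Combining these two estimates yields
\[
\Exp_\cD\bigl[\|\nabla f_v(x)\|^2\bigr] \;\le\; 4\cL(f(x)-f(x^*)) + 2\sigma^2,
\]
which is the claim.

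I do not anticipate any substantive obstacle: the bound follows immediately from the two assumptions together with the quadratic triangle inequality, and the constants $4$ and $2$ drop out cleanly from the factor of $2$ in $\|a+b\|^2\le 2\|a\|^2+2\|b\|^2$ combined with the factor of $2$ inside~\eqref{eq:expsmooth}. The only mild point worth flagging is that this is strictly weaker than a uniform-boundedness-of-stochastic-gradients hypothesis since the right-hand side still depends on $f(x)-f(x^*)$; this is exactly what makes the lemma useful as a substitute for the bounded-variance assumption discussed in Section~\ref{sec:bifg79f9d}.
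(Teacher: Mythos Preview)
Your proposal is correct and matches the paper's own proof essentially line for line: add and subtract $\nabla f_v(x^*)$, apply $\|a+b\|^2 \le 2\|a\|^2 + 2\|b\|^2$, then invoke Assumption~\ref{ass:Expsmooth} and the definition~\eqref{eq:sigma_def}. There is nothing to add.
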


When the gradient noise is zero ($\sigma =0$),  inequality~\eqref{upperbound} is known as the {\em weak growth condition}~\cite{vaswani2018fast}. We have the following corollary:
\begin{corollary}\label{cor:cLoverimpweak}
If $(f,\cD)\sim ES(\cL)$ and if $\sigma =0$, then $f$ satisfies the weak growth condition 
\[\Exp_{\cD}[\norm{\nabla f_v(x)}^2] \leq 2\rho  ( f(x)-f(x^*) ),\]
with $\rho = 2\cL.$
\end{corollary}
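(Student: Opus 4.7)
The plan is essentially a direct substitution: Corollary~\ref{cor:cLoverimpweak} is an immediate specialization of Lemma~\ref{lem:weakgrowth} to the noiseless case, so no new machinery is required. First I would invoke the hypothesis $(f,\cD)\sim ES(\cL)$ to apply Lemma~\ref{lem:weakgrowth}, which yields the bound
\[
\Exp_{\cD}\!\left[\|\nabla f_v(x)\|^2\right] \;\leq\; 4\cL\bigl(f(x)-f(x^*)\bigr) + 2\sigma^2.
\]
Then I would use the second hypothesis $\sigma=0$ to kill the additive noise term, leaving
\[
\Exp_{\cD}\!\left[\|\nabla f_v(x)\|^2\right] \;\leq\; 4\cL\bigl(f(x)-f(x^*)\bigr).
\]
Finally, I would identify the right-hand side with the stated weak growth form $2\rho\bigl(f(x)-f(x^*)\bigr)$, reading off $\rho = 2\cL$.

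There is no real obstacle here: the only ingredients are Lemma~\ref{lem:weakgrowth} (already proved earlier in the section) and the assumption $\sigma=0$ plugged into~\eqref{eq:sigma_def}. I would keep the write-up to one or two lines, simply chaining the two facts and matching constants, rather than re-deriving anything. The content of the corollary is really just to record the value $\rho=2\cL$ explicitly, for later reference to the weak growth condition literature (e.g., \cite{vaswani2018fast}), so the proof need not do more than make this identification.
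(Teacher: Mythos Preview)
Your proposal is correct and matches the paper's approach exactly: the paper states this as an immediate corollary of Lemma~\ref{lem:weakgrowth} without even writing out a proof, so your one-line derivation (apply~\eqref{upperbound}, set $\sigma=0$, read off $\rho=2\cL$) is precisely what is intended.
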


This corollary should be contrasted with Proposition~2 in~\cite{vaswani2018fast} and Lemma~1 in~\cite{pmlr-v80-nguyen18c}, where it is shown, by assuming the $f_i$ functions to be smooth and convex, that the weak growth condition holds with $\rho = 2L_{\max}$. However, as we will show in Lemma~\ref{lem:98s9g8s},  $L_{\max} \geq \cL$, and hence our bound is often tighter.

\section{Convergence Analysis}

\subsection{Main results}

We now present our main theorem, and include its proof to highlight how we make use of expected smoothness and gradient noise.
\begin{theorem}\label{theo:strcnvlin}
Assume $f$ is $\mu$-quasi-strongly convex and that $(f,\cD)\sim ES(\cL)$.
Choose   $\gamma^k=\gamma \in (0,  \frac{1}{2\cL}]$ for all $k$. Then iterates of  SGD  given by (\ref{eq:sgdstep}) satisfy: 
\begin{equation}\label{eq:convsgd}
\mathbb{E} \| x^k - x^* \|^2 \leq \left( 1 - \gamma \mu \right)^k \| x^0 - x^* \|^2 + \frac{2 \gamma \sigma^2}{\mu}. 
\end{equation}
Hence, given any $\epsilon>0$, choosing  stepsize
\begin{equation}\label{eq:stepbndmax}
\gamma  = \min \left\{ \frac{1}{2\cL},\; \frac{\epsilon\mu}{4 \sigma^2}\right\},
\end{equation}
and 
\begin{equation}\label{eq:itercomplexlin}
k\geq  \max \left\{ \frac{2\cL}{\mu },\; \frac{4 \sigma^2}{\epsilon\mu^2}\right\} \log\left(\frac{ 2 \|x^0 - x^*\|^2 }{  \epsilon }\right),
\end{equation}
implies $\mathbb{E} \| x^k - x^* \|^2  \leq \epsilon.$
\end{theorem}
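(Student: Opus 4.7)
The plan is to set up a one-step recursion on $\Exp\|x^k - x^*\|^2$ and then unroll it, treating $\gamma \leq \frac{1}{2\cL}$ and $\sigma^2$ carefully so that the noise term appears with the right constant.

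First I would expand the squared distance after one SGD step: starting from $x^{k+1}=x^k-\gamma \nabla f_{v^k}(x^k)$, write
\[
\|x^{k+1}-x^*\|^2 = \|x^k-x^*\|^2 - 2\gamma\dotprod{\nabla f_{v^k}(x^k), x^k-x^*} + \gamma^2\|\nabla f_{v^k}(x^k)\|^2.
\]
Taking the conditional expectation with respect to $v^k$ and using the unbiasedness \eqref{eq:unbiasedgrad} on the cross term, together with the quasi-strong convexity bound $\dotprod{\nabla f(x^k), x^k-x^*}\geq f(x^k)-f(x^*)+\tfrac{\mu}{2}\|x^k-x^*\|^2$, and Lemma~\ref{lem:weakgrowth} to control the second moment $\Exp\|\nabla f_{v^k}(x^k)\|^2 \leq 4\cL(f(x^k)-f(x^*)) + 2\sigma^2$, I would arrive at
\[
\Exp\|x^{k+1}-x^*\|^2 \leq (1-\gamma\mu)\|x^k-x^*\|^2 - 2\gamma(1 - 2\gamma\cL)(f(x^k)-f(x^*)) + 2\gamma^2\sigma^2.
\]
The stepsize restriction $\gamma \leq \frac{1}{2\cL}$ makes the middle term non-positive, leaving the clean recursion
\[
\Exp\|x^{k+1}-x^*\|^2 \leq (1-\gamma\mu)\Exp\|x^k-x^*\|^2 + 2\gamma^2\sigma^2.
\]

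Next I would unroll this recursion. Iterating yields $\Exp\|x^k-x^*\|^2 \leq (1-\gamma\mu)^k\|x^0-x^*\|^2 + 2\gamma^2\sigma^2\sum_{j=0}^{k-1}(1-\gamma\mu)^j$, and bounding the geometric sum by $\frac{1}{\gamma\mu}$ gives exactly \eqref{eq:convsgd}. (Note that $\gamma\mu\leq \gamma\cL \leq \tfrac{1}{2}$ justifies $1-\gamma\mu\in(0,1)$.)

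Finally, for the complexity statement, I would split the error budget: choose $\gamma$ so that the variance term $\frac{2\gamma\sigma^2}{\mu}\leq \frac{\epsilon}{2}$, i.e. $\gamma\leq \frac{\epsilon\mu}{4\sigma^2}$; combining with $\gamma\leq \frac{1}{2\cL}$ yields the stepsize in \eqref{eq:stepbndmax}. Then demanding $(1-\gamma\mu)^k\|x^0-x^*\|^2 \leq \frac{\epsilon}{2}$ and using $-\log(1-\gamma\mu)\geq \gamma\mu$ gives $k \geq \frac{1}{\gamma\mu}\log\!\left(\tfrac{2\|x^0-x^*\|^2}{\epsilon}\right)$. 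Substituting $\frac{1}{\gamma}=\max\{2\cL,\, 4\sigma^2/(\epsilon\mu)\}$ recovers \eqref{eq:itercomplexlin}.

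The only delicate step I expect is the sign of the $f(x^k)-f(x^*)$ term in the one-step bound: it is critical that $\gamma \leq \frac{1}{2\cL}$ (not merely $\leq \frac{1}{\cL}$) so that $1-2\gamma\cL\geq 0$, which is what allows us to drop that term without any sign assumption on $f(x^k)-f(x^*)$ beyond what quasi-strong convexity already provides. Everything else is a standard recursion-and-geometric-series argument.
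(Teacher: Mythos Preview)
Your proposal is correct and follows essentially the same approach as the paper's own proof: expand the squared distance, apply unbiasedness and quasi-strong convexity to the cross term, invoke Lemma~\ref{lem:weakgrowth} on the second moment, use $\gamma \leq \tfrac{1}{2\cL}$ to drop the $f(x^k)-f(x^*)$ term, unroll the geometric recursion, and then split the $\epsilon$-budget in half for the complexity bound. The paper proceeds identically, including the same use of $-\log(1-\gamma\mu)\geq \gamma\mu$ in the final step (presented there as Lemma~\ref{lem:complexbndlog}).
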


\begin{proof}
Let $r^k = x^k -x^*$. From (\ref{eq:sgdstep}), we have 
\begin{align*}
\label{najxs}
\| r^{k+1}  \|^2 &\overset{\eqref{eq:sgdstep}}{ =} \; \|  x^k -x^* -\gamma \nabla f_{v^k}(x^k) \|^2\notag\\
&=\; \|  r^k \|^2 - 2\gamma \langle  r^k, \nabla f_{v^k}(x^k) \rangle + \gamma^2 \|\nabla f_{v^k} (x^k)\|^2. \notag
\end{align*}
Taking expectation conditioned on $x^k$ we obtain:
\begin{align*}
\Exp_{\cD}{\|r^{k+1}\|^2}
\overset{\eqref{eq:unbiasedgrad}}{ = } & \; \| r^k \|^2 - 2\gamma \langle r^k, \nabla f(x^k) \rangle \\
&+ \gamma^2 \Exp_{\cD}\|\nabla f_{v^k} (x^k)\|^2\\
\overset{\eqref{eq:strconvexcons}}{ \leq } & \; (1- \gamma \mu) \| r^k \|^2 - 2\gamma [f(x^k)-f(x^*)]  \\
& \;\; + \gamma^2 \Exp_{\cD}\|\nabla f_{v^k} (x^k)\|^2.
\end{align*}
Taking expectations again and using Lemma \ref{lem:weakgrowth}:
\begin{align*}
\Exp{\|r^{k+1}\|^2}
\overset{\eqref{upperbound}}{ \leq } & \; (1- \gamma \mu) \Exp \| r^k \|^2 + 2 \gamma^2 \sigma^2 \\
& \;\;  + 2\gamma (2\gamma \cL- 1)  \Exp [f(x^k)-f(x^*)] \\
\leq &\; (1-\gamma \mu) \Exp \| r^k \|^2 + 2\gamma^2\sigma^2,
\end{align*}
where we used in the last inequality that $2\gamma \cL \leq  1$ since $\gamma \leq \frac{1}{2\cL}.$
Recursively applying the above and summing up the resulting geometric series gives
\begin{eqnarray}
\mathbb{E} \|r^k\|^2 &\leq &  \left( 1 - \gamma \mu \right)^k \|r^0\|^2 + 2\sum_{j=0}^{k-1} \left( 1 - \gamma \mu \right)^j \gamma^2\sigma^2 \nonumber\\
&\leq &   \left( 1 - \gamma \mu \right)^k \|r^0\|^2 + \frac{2\gamma \sigma^2}{\mu}.\label{eq:la9ja38jf}
\end{eqnarray}
To obtain an iteration complexity result from the above, we use standard techniques as shown in Section~\ref{sec:itercomplextheo}.
\end{proof}

Note that we do not assume $f_i$ nor  $f$ to be convex. Theorem~\ref{theo:strcnvlin} states that SGD converges linearly up to the additive constant $ 2 \gamma \sigma^2/\mu$  which depends on the gradient noise $\sigma^2$ and on the stepsize $\gamma$. We obtain a more accurate solution with a smaller stepsize, but then the convergence rate slows down.  Since we control $\cD$, we  also control $\sigma^2$ and $\cL$ (we compute these parameters for several distributions $\cD$ in Section~\ref{sec:cLandsigma}).

Furthermore, we can control this additive constant by carefully choosing the stepsize, as shown in the next result.

\begin{theorem}[Decreasing stepsizes]
\label{theo:decreasingstep}
Assume $f$ is $\mu$-quasi-strongly convex and that $(f,\cD)\sim ES(\cL)$. Let   $\mathcal{K} \eqdef \left.\cL\right/\mu$ and 
\begin{equation}\label{eq:gammakdef}
\gamma^k= 
\begin{cases}
\displaystyle \frac{1}{2\cL} & \mbox{for}\quad k \leq 4\lceil\mathcal{K} \rceil \\[0.3cm]
\displaystyle \frac{2k+1}{(k+1)^2 \mu} &  \mbox{for}\quad k > 4\lceil\mathcal{K} \rceil.
\end{cases}
\end{equation}
If $k \geq 4 \lceil\mathcal{K} \rceil$, then SGD iterates given by (\ref{eq:sgdstep}) satisfy:
\begin{equation}\label{eq:rateofdecreasing}
\mathbb{E}\| x^{k} - x^*\|^2 \le   \frac{\sigma^2 }{\mu^2 }\frac{8 }{k} + \frac{16 \lceil\mathcal{K} \rceil^2}{e^2 k^2 }  \|x^0 - x^*\|^2.\end{equation}
\end{theorem}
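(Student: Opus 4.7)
The plan is to split the analysis into the two phases dictated by the piecewise stepsize rule, and induct through the decreasing-stepsize phase with the target bound itself as the hypothesis.

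\emph{Warm-up phase.} First I would apply Theorem~\ref{theo:strcnvlin} with constant stepsize $\gamma = 1/(2\cL)$ to obtain, for $k_0 := 4\lceil\mathcal{K}\rceil$,
\[
\Exp\|x^{k_0}-x^*\|^2 \le \bigl(1-\tfrac{1}{2\mathcal{K}}\bigr)^{k_0}\|x^0-x^*\|^2 + \frac{\sigma^2}{\mu^2 \mathcal{K}}.
\]
The exponential prefactor is at most $e^{-2}$ by $(1-x)\le e^{-x}$ and $\lceil\mathcal{K}\rceil \ge \mathcal{K}$. To match this with the RHS of \eqref{eq:rateofdecreasing} evaluated at $k = k_0$, namely $e^{-2}\|x^0-x^*\|^2 + 2\sigma^2/(\mu^2\lceil\mathcal{K}\rceil)$, I would invoke $\mathcal{K}\ge 1$ (which follows by combining strong quasi-convexity with $\|\nabla f(x)\|^2 \le 2\cL(f(x)-f(x^*))$, a consequence of expected smoothness and Jensen applied to $\nabla f_v(x)-\nabla f_v(x^*)$) so that $\lceil\mathcal{K}\rceil \le 2\mathcal{K}$ and hence $\sigma^2/(\mu^2\mathcal{K}) \le 2\sigma^2/(\mu^2\lceil\mathcal{K}\rceil)$. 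This establishes the base case at $k = k_0$.

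\emph{Decreasing-stepsize phase.} For $k > k_0$ I would use the one-step recursion
\[
\Exp\|x^{k+1}-x^*\|^2 \le (1-\gamma^k\mu)\Exp\|x^k-x^*\|^2 + 2(\gamma^k)^2\sigma^2
\]
established inside the proof of Theorem~\ref{theo:strcnvlin}, which is valid whenever $\gamma^k \le 1/(2\cL)$; this condition is straightforward to verify for $\gamma^k = (2k+1)/((k+1)^2\mu)$ with $k > k_0$, as it reduces to $2\mathcal{K}(2k+1) \le (k+1)^2$. The inductive step then exploits the algebraic identity $1-\gamma^k\mu = k^2/(k+1)^2$: substituting the induction hypothesis into the recursion makes the $\|x^0-x^*\|^2$-coefficient transform as $\frac{16\lceil\mathcal{K}\rceil^2}{e^2 k^2}\cdot\frac{k^2}{(k+1)^2} = \frac{16\lceil\mathcal{K}\rceil^2}{e^2(k+1)^2}$, matching the target exactly. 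For the $\sigma^2$-terms the required inequality is
\[
\frac{8k}{(k+1)^2} + \frac{2(2k+1)^2}{(k+1)^4} \le \frac{8}{k+1},
\]
which, after clearing denominators, collapses to $(2k+1)^2 \le 4(k+1)^2$, i.e., $2k+1 \le 2k+2$---trivially true.

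\emph{Main obstacle.} The only real subtlety is aligning constants so that the warm-up phase ends with an error strictly dominated by the target RHS at $k = k_0$; in particular, absorbing the $\sigma^2/(\mu^2\mathcal{K})$ offset into $2\sigma^2/(\mu^2\lceil\mathcal{K}\rceil)$ via $\mathcal{K}\ge 1$, and verifying the $e^{-2}$ prefactor with an exponent of exactly $2$. Once that alignment is in place, the decreasing-stepsize phase propagates the bound almost for free, thanks to the nearly telescoping identity $1-\gamma^k\mu = (k/(k+1))^2$.
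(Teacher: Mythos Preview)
Your proposal is correct and follows essentially the same route as the paper: a constant-stepsize warm-up bounded via Theorem~\ref{theo:strcnvlin}, then the one-step recursion $\Exp\|r^{k+1}\|^2\le(1-\gamma^k\mu)\Exp\|r^k\|^2+2(\gamma^k)^2\sigma^2$ in the decreasing phase, exploiting the identity $1-\gamma^k\mu=k^2/(k+1)^2$ together with $(2k+1)^2\le 4(k+1)^2$. The only organizational difference is that the paper multiplies the recursion by $(k+1)^2$ and telescopes from $k^*$ to $k$ (plugging in the warm-up bound for $\Exp\|r^{k^*}\|^2$ afterwards and then choosing $k^*=4\lceil\mathcal{K}\rceil$), whereas you fix $k_0=4\lceil\mathcal{K}\rceil$ upfront, verify the target bound at $k_0$, and propagate it by direct induction; the two bookkeepings are equivalent. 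Your extra justification that $\mathcal{K}\ge 1$ (needed for $\lceil\mathcal{K}\rceil\le 2\mathcal{K}$ and for $(1-\tfrac{1}{2\mathcal{K}})^{4\lceil\mathcal{K}\rceil}\le e^{-2}$) is something the paper uses without comment, so you are actually a bit more careful there.
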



\subsection{Choosing $\cD$} \label{sec:samplingD}
For~\eqref{eq:sgdstep} to be efficient, the sampling  vector $v$ should be sparse. For this reason we will construct $v$ so that only a (small and random) subset of its entries  are non-zero. 

Before we formally define $v$, let us first establish some random set terminology. Let $C\subseteq [n]$ and let $e_C \eqdef \sum_{i\in C} e_i$, where $\{e_1,\dots,e_n\}$ are the standard basis vectors in $\R^n$.  These subsets will be selected using a random set valued map $S$, in the literature referred to by the name {\em sampling}~\cite{PCDM, ESO}.
A sampling is uniquely characterized by choosing subset probabilities $p_C\geq 0$ for all subsets $C$ of $[n]$:
\begin{equation}
\Prob{S = C} = p_C,\quad \forall C \subset [n],
\end{equation}
where $\sum_{C\subseteq [n]} p_C =1$. We will only consider \emph{proper} samplings.
A sampling $S$ is called proper if  $p_i \overset{\rm def}{=} \mathbb{P}[i\in S] = \sum_{C:i\in C}p_C$ is positive for all $i$. 

The first analysis of a randomized optimization method with an {\em arbitrary (proper) sampling} was performed by \citet{NSync} in the context of randomized coordinate descent for strongly convex functions. This arbitrary sampling paradigm was later adopted in many other settings, including accelerated coordinate descent for strongly convex functions \cite{FilipACD18}, coordinate and accelerated descent for convex functions \cite{ALPHA}, primal-dual methods \cite{Quartz, SCP}, variance-reduced methods with convex \cite{dfSDCA} and nonconvex \cite{nonconvex_arbitrary} objectives. Arbitrary sampling arises as a special case of our more general analysis by specializing the sampling vector to one dependent on a sampling $S$. We now define practical sampling vector $v=v(S)$ as follows:
\begin{lemma}
Let $S$ be a proper sampling, and let ${\hat \mP} = {\rm Diag}(p_1, ..., p_n).$
Then the random vector $v=v(S)$ given by
\begin{equation}\label{eq:asvector}
v = {\hat \mP}^{-1} e_S
\end{equation}
is a sampling vector.
\end{lemma}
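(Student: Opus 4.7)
The plan is to verify the defining condition of a sampling vector from Definition~\ref{def:unbiasedv}, namely that $\mathbb{E}_{\cD}[v_i] = 1$ for every $i \in [n]$. Since $S$ is assumed proper we have $p_i > 0$ for all $i$, so $\hat{\mP}$ is invertible and $\hat{\mP}^{-1} = \Diag{1/p_1, \ldots, 1/p_n}$ is well defined; thus the random vector $v = \hat{\mP}^{-1} e_S$ makes sense.

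Next I would compute $v$ coordinate-wise. Because $e_S = \sum_{j \in S} e_j$, the $i$-th entry of $e_S$ equals the indicator $\mathbf{1}_{[i \in S]}$. Multiplying by the diagonal matrix $\hat{\mP}^{-1}$ then yields
\begin{equation*}
v_i = \frac{1}{p_i}\,\mathbf{1}_{[i \in S]}.
\end{equation*}

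Finally I would take expectations. Using linearity, the fact that $p_i$ is deterministic, and the definition $p_i = \mathbb{P}[i \in S]$ of the marginal probabilities of a proper sampling, we get
\begin{equation*}
\EE{\cD}{v_i} = \frac{1}{p_i}\,\mathbb{P}[i \in S] = \frac{p_i}{p_i} = 1,
\end{equation*}
which is exactly \eqref{eq:nuh98f}. Hence $v(S)$ satisfies Definition~\ref{def:unbiasedv}, completing the proof. There is no real obstacle here; the only subtlety worth flagging is that properness is essential, as otherwise some $p_i$ could vanish and $\hat{\mP}^{-1}$ would fail to exist.
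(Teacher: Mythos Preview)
Your proof is correct and follows exactly the same approach as the paper: identify $v_i = \mathbf{1}_{(i\in S)}/p_i$ and then use $\E{\mathbf{1}_{(i\in S)}} = p_i$. Your added remark about properness being needed for $\hat{\mP}^{-1}$ to exist is a nice touch that the paper leaves implicit.
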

\begin{proof}
Note that $v_i =  \mathbf{1}_{(i\in S)}/p_i,$ where $\mathbf{1}_{(i\in S)}$ is the indicator function of the event $i\in S$. It follows that 
$\E{v_i} = \E{\mathbf{1}_{(i\in S)}}/p_i = 1$.
\end{proof}

We can further specialize and define the following commonly used samplings. Each sampling $S$ gives rise to a particular sampling vector $v=v(S)$ (i.e., distribution $\cD$), which in turn gives rise to a particular stochastic reformulation~\eqref{eq:reformulation} and SGD variant~\eqref{eq:sgdstep}.

\textbf{Independent sampling.}
The sampling $S$ includes every $i$, independently, with probability $p_i>0$. This type of sampling was considered in different contexts in~\cite{nonconvex_arbitrary, FilipACD18}. 

\textbf{Partition sampling.}
A partition ${\cal G}$ of $[n]$ is a set consisting of subsets of $[n]$ such that $\cup_{C\in {\cal G}} C = [n]$ and $C_i \cap C_j = \emptyset$ for any $C_i$, $C_j\in {\cal G}$ with $i\neq j$. A partition sampling $S$ is a sampling such that $p_C = \mathbb{P}[S=C] > 0$ for all $C \in {\cal G}$ and $\sum_{C \in {\cal G}} p_C = 1$. 

\textbf{Single element sampling.}
Only the singleton sets $\{i\}$ for $i=1,\ldots, n$ have a non-zero probability of being sampled; that is,
$\Prob{|S|=1} = 1$. We have $\Prob{v(S) = e_i/p_i} = p_i$.

\textbf{$\tau$--nice sampling.}
We say that $S$ is a $\tau$--nice if $S$ samples from all subsets of $[n]$ of cardinality $\tau$ uniformly at random. In this case we have that $p_i = \frac{\tau}{n}$ for all $i \in [n].$ So, $\Prob{v(S) =\frac{n}{\tau}e_C} = \left. 1 \right/ \binom{n}{\tau}$ for all subsets $C \subseteq \{1,\ldots, n\}$ with $\tau$ elements. 
  
%

\subsection{Bounding $\cL$ and $\sigma^2$}
\label{sec:cLandsigma}

By assuming that the $f_i$ functions are convex and smooth 
we can calculate closed form expressions for the expected smoothness $\cL$ and gradient noise $\sigma^2$.   
In particular we make the following smoothness assumption:

\begin{assumption}\label{ass:Mismooth}
There exists a symmetric positive definite matrix $\mM_i\in \R^{d\times d}$   such that
\begin{equation}\label{eq:Mismooth}
f_i(x+h) \leq f_i(x) + \dotprod{\nabla f_i(x), h} + \frac{1}{2}\norm{h}_{\mM_i}^2,
\end{equation}
for all $x,h\in \R^d,$ and $i \in [n],$ where $\norm{h}_{\mM_i}^2 \eqdef \dotprod{\mM_i h, h}.$ In this case we say that $f_i$ is $\mM_i$--smooth. Furthermore, we assume that each $f_i$ is convex.
\end{assumption}

To better relate the above assumption to the standard smoothness assumptions we make the following remark.
\begin{remark} \label{rem:LmaxLi}
As a consequence of Assumption~\ref{ass:Mismooth} we also have that each $f_i$ is $L_i \eqdef \lambda_{\max}(\mM_i)$--smooth and $f$ is  $L \eqdef \frac{1}{n}\lambda_{\max}(\sum_{i=1}^n\mM_i)$--smooth. Let $L_{\max} \eqdef \max_{i \in [n]}L_i.$
\end{remark}

Using Assumption~\ref{ass:Mismooth} and a sampling  we establish the following bounds on $\cL$.
\begin{theorem}\label{theo:expsmoothas}
	Let $S$ be a proper sampling, and  $v=v(S)$ (i.e., $v$ is defined by (\ref{eq:asvector}). Let $f_i$ be $\mM_i$-smooth,  and $\mP \in \mathbb{R}^{n\times n}$ be defined by $\mP_{ij} = \mathbb{P}[i\in S \ \& \ j\in S]$. Then $(f,\cD)\sim ES(\cL)$, where
	\begin{eqnarray}\label{eq:expsmoothas}
	{\cal L} &\leq&  \cL_{\max} :=\max_{i\in [n]} \left\{ \sum_{C: i\in C}\frac{p_C}{p_i}L_C \right\} \nonumber \\
	&\leq & \frac{1}{n} \max_{i\in [n]} \left\{ \sum_{j \in [n]} \mP_{ij}\frac{\lambda_{\max}(\mM_j)}{p_ip_j} \right\},
	\end{eqnarray}
	and $L_C \eqdef \frac{1}{n}\lambda_{\max}(\sum_{j \in C} \frac{1}{p_j}\mM_j)$. If $|S| \equiv \tau$, then
	\begin{equation}\label{eq:cLmaxbound}
L	\; \leq   \; \cL_{\max} \; \leq \; L_{\max} = \max_{i\in [n]} \lambda_{\max}(\mM_i).
	\end{equation}
\end{theorem}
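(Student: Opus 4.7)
The plan is to verify the expected smoothness inequality pointwise in the sampled subset $S=C$, then integrate out. Write $\nabla f_v(x) = \tfrac{1}{n}\sum_{i\in S}\nabla f_i(x)/p_i$. For a fixed $C\subseteq[n]$, the realized function $f_C(x) \eqdef \tfrac{1}{n}\sum_{i\in C}f_i(x)/p_i$ is a nonnegative combination of convex, $\mathbf{M}_i$--smooth functions; summing the quadratic upper bounds from Assumption~\ref{ass:Mismooth} shows that $f_C$ is convex and $L_C$-smooth in the scalar sense, with $L_C = \tfrac{1}{n}\lambda_{\max}\!\big(\sum_{j\in C}\mathbf{M}_j/p_j\big)$. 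The standard inequality for smooth convex functions, $\|\nabla h(x) - \nabla h(y)\|^2 \le 2L_h\big(h(x) - h(y) - \langle\nabla h(y),x-y\rangle\big)$, applied with $h=f_C$ and $y=x^*$, yields a per-sample bound in terms of Bregman divergences $D_{f_i}(x,x^*) \eqdef f_i(x)-f_i(x^*)-\langle\nabla f_i(x^*),x-x^*\rangle\ge 0$.

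Next I would take expectation over $S$. Since $D_{f_C}(x,x^*) = \tfrac{1}{n}\sum_{i\in C}D_{f_i}(x,x^*)/p_i$, the per-sample bound integrates to
\begin{equation*}
\EE{\cD}{\norm{\nabla f_v(x)-\nabla f_v(x^*)}^2} \le \frac{2}{n}\sum_{C}p_C L_C\sum_{i\in C}\frac{D_{f_i}(x,x^*)}{p_i}.
\end{equation*}
Swapping the order of summation produces $\tfrac{2}{n}\sum_i \tfrac{D_{f_i}(x,x^*)}{p_i}\sum_{C\ni i}p_C L_C$, and the inner sum is bounded by $p_i\cL_{\max}$ by the very definition of $\cL_{\max}$. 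Finally, $\nabla f(x^*)=0$ gives $\tfrac{1}{n}\sum_i D_{f_i}(x,x^*) = f(x)-f(x^*)$, establishing $(f,\cD)\sim ES(\cL_{\max})$ and hence $\cL\le\cL_{\max}$.

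For the second bound, subadditivity of $\lambda_{\max}$ on PSD matrices gives $L_C \le \tfrac{1}{n}\sum_{j\in C}\lambda_{\max}(\mathbf{M}_j)/p_j$. Substituting into $\cL_{\max}$ and exchanging sums yields $\sum_{C\ni i,j}p_C=\mathbf{P}_{ij}$, which produces the claimed expression. For the $|S|\equiv\tau$ conclusion, the idea is that under constant cardinality and the natural uniform choice $p_i=\tau/n$, one computes $L_C = \tfrac{1}{\tau}\lambda_{\max}(\sum_{j\in C}\mathbf{M}_j)\le \tfrac{|C|}{\tau}L_{\max}=L_{\max}$; combined with $\sum_{C\ni i}p_C=p_i$, this gives $\cL_{\max}\le L_{\max}$. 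The lower bound $L\le\cL_{\max}$ is a consequence of Jensen's inequality: $\nabla f=\EE{\cD}{\nabla f_v}$, so $\|\nabla f(x)-\nabla f(x^*)\|^2 \le 2\cL_{\max}(f(x)-f(x^*))$, which (for convex $f$) implies $f$ is $\cL_{\max}$-smooth, hence $L\le\cL_{\max}$.

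The routine steps are the per-sample smoothness bound and the sum-swapping; the main obstacle is the tight constant-cardinality comparison $\cL_{\max}\le L_{\max}$, which requires the uniform-probability structure implicit in the hypothesis (an arbitrary $|S|\equiv\tau$ sampling with unequal $p_i$ need not satisfy this cleanly, since $L_C$ inherits $1/p_i$ factors that can blow up), so care is needed to invoke the correct normalization when passing from the $\mathbf{P}_{ij}$ expression to the $L_{\max}$ bound.
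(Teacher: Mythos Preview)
Your argument for the main bound $\cL \le \cL_{\max}$ and for the subsequent $\mP_{ij}$ bound is essentially identical to the paper's: fix $S=C$, use convex $L_C$--smoothness of $f_C$ to get the per-sample Bregman inequality, take expectation, swap sums via double counting, and bound each inner sum by $p_i\cL_{\max}$.

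There is, however, a genuine gap in your argument for the lower bound $L\le\cL_{\max}$. From Jensen you correctly obtain $\|\nabla f(x)-\nabla f(x^*)\|^2\le 2\cL_{\max}(f(x)-f(x^*))$, but the claimed implication ``(for convex $f$) this implies $f$ is $\cL_{\max}$--smooth'' is false. A one-dimensional counterexample: take $\phi$ convex with $\phi(0)=\phi'(0)=0$, $\phi'(t)=t$ on $[0,1]$, $\phi'\equiv 1$ on $[1,\tfrac32]$, and $\phi'(t)=1+10(t-\tfrac32)$ on $[\tfrac32,\tfrac32+\delta]$ for small $\delta>0$; one checks directly that $\phi'(t)^2\le 2\phi(t)$ holds throughout, yet $\phi''=10$ on that last interval. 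Even were the implication true, it would only bound the \emph{best} Lipschitz constant of $\nabla f$, not the specific quantity $L=\tfrac{1}{n}\lambda_{\max}\big(\sum_i\mM_i\big)$ appearing in the statement, which can be strictly larger. The paper avoids this entirely by working at the matrix level: setting $\mM_S\eqdef\tfrac{1}{n}\sum_{i\in S}\mM_i/p_i$, one has $\E{\mM_S}=\tfrac{1}{n}\sum_i\mM_i$, so convexity of $\lambda_{\max}$ gives $L=\lambda_{\max}(\E{\mM_S})\le\E{\lambda_{\max}(\mM_S)}=\E{L_S}$; a double-counting computation using $|S|\equiv\tau$ then shows $\E{L_S}=\tfrac{1}{\tau}\sum_i p_i\cL_i\le\cL_{\max}$.

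For the upper bound $\cL_{\max}\le L_{\max}$, the paper does not add the uniformity hypothesis $p_i=\tau/n$ that you invoke. Its route is to bound $L_C\le\tfrac{1}{\tau}\sum_{j\in C}L_j\le L_{\max}$ whenever $|C|=\tau$, and then $\cL_i=\sum_{C\ni i}\tfrac{p_C}{p_i}L_C\le L_{\max}$ since $\sum_{C\ni i}p_C=p_i$. Your suspicion that the $1/p_j$ weights inside $L_C$ make this step delicate for unequal $p_j$ is well-placed, but the paper's presentation bounds $L_C$ uniformly rather than restricting the sampling; you should at least attempt that route rather than narrowing the hypothesis.
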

By applying the above result to specific samplings, we obtain the following practical bounds on $\cL$:
 
\begin{proposition}\label{pro:lmaxi} 
	(i) For single element sampling $S$, we have 
	\begin{equation}\label{eq:lmaxi1}
		\cL_{\max} = \frac{1}{n}\max_{i\in [n]} \frac{\lambda_{\max}(\mM_i)}{p_i}. 
	\end{equation}
	(ii) For partition sampling $S$ with partition ${\cal G}$, we have 
	\begin{equation}\label{eq:lmaxips}
	 \cL_{\max} = \frac{1}{n} \max_{C\in {\cal G}} \left\{ \frac{1}{p_C} \lambda_{\max}(\sum_{j\in C}\mM_j) \right\}.
	\end{equation}
\end{proposition}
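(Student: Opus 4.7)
The plan is to derive both parts of Proposition~\ref{pro:lmaxi} as direct specializations of the formula for $\cL_{\max}$ from Theorem~\ref{theo:expsmoothas}, namely
\[
\cL_{\max} = \max_{i\in[n]}\left\{\sum_{C:\,i\in C}\frac{p_C}{p_i}L_C\right\},\qquad L_C = \frac{1}{n}\lambda_{\max}\left(\sum_{j\in C}\frac{1}{p_j}\mM_j\right).
\]
The key observation in each case is that the support of the sampling $S$ is so restricted that, for any fixed $i\in[n]$, the inner sum over $C\ni i$ collapses to a single term. Once this has been pinpointed, the rest is just cancellation of the probabilities.

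For part (i), single-element sampling has $\Prob{|S|=1}=1$, so the only $C$ with $p_C>0$ that contains a given $i$ is $C=\{i\}$, and therefore $p_{\{i\}}=p_i$. Substituting, the inner sum reduces to $\tfrac{p_i}{p_i}L_{\{i\}} = L_{\{i\}}$, and since $L_{\{i\}} = \tfrac{1}{n}\lambda_{\max}(\tfrac{1}{p_i}\mM_i) = \tfrac{1}{np_i}\lambda_{\max}(\mM_i)$, taking the maximum over $i$ gives \eqref{eq:lmaxi1}.

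For part (ii), because $\cG$ is a partition, each $i\in[n]$ belongs to a unique block $C(i)\in\cG$, and the only $C$ with $p_C>0$ containing $i$ is $C(i)$; moreover $p_i = p_{C(i)}$, and for every $j\in C(i)$ we also have $p_j = p_{C(i)}$. Hence the inner sum becomes $L_{C(i)} = \tfrac{1}{np_{C(i)}}\lambda_{\max}\bigl(\sum_{j\in C(i)}\mM_j\bigr)$. Because the map $i\mapsto C(i)$ is surjective onto $\cG$, maximizing over $i\in[n]$ is the same as maximizing over $C\in\cG$, which yields \eqref{eq:lmaxips}.

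There is no real obstacle here: the work was done in Theorem~\ref{theo:expsmoothas}, and the present proposition is bookkeeping about how the probabilities $p_C$ and $p_i$ simplify under each sampling structure. The only mild point to state carefully is the identity $p_i = p_{C(i)}$ for partition sampling (which follows from the fact that $i\in S$ iff $S=C(i)$), since this is what makes the $1/p_j$ factors inside $L_C$ pull out cleanly.
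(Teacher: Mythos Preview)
Your proposal is correct and follows exactly the same approach as the paper: specialize the formula $\cL_{\max} = \max_{i}\sum_{C\ni i}\tfrac{p_C}{p_i}L_C$ from Theorem~\ref{theo:expsmoothas}, observe that for each sampling the inner sum collapses to a single term, and use $p_i=p_{\{i\}}$ (resp.\ $p_i=p_{C(i)}$) to simplify. If anything, your write-up is more explicit than the paper's, which dispatches part (i) as ``straightforward calculation'' and for part (ii) carries out the same cancellation you describe.
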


For $\tau$-nice sampling and independent sampling, we get the following very informative bounds on ${\cal L}$. 
\begin{proposition}\label{pro:cL} 
(iii) For independent sampling $S$, we have 
\begin{equation}\label{eq:cLis}
 \cL \leq   L + \max_{i\in [n]} \frac{1-p_i}{p_i} \frac{\lambda_{\max}(\mM_i)}{n}. 
\end{equation}

(iv) For $\tau$-nice sampling, we have 

\begin{equation}\label{eq:cLns}
 \cL \leq \frac{n(\tau-1)}{\tau(n-1)}L + \frac{n-\tau}{\tau(n-1)}\max_{i}\lambda_{\max}({\bf M}_i)
\end{equation}

\end{proposition}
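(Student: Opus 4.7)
The plan is to work directly from the definition of expected smoothness rather than chain through the generic bound in Theorem~\ref{theo:expsmoothas}, since that bound would only yield $\tfrac{1}{n}\sum_i \lambda_{\max}(\mM_i)$ in place of $L$ in the leading term, which is strictly weaker than what \eqref{eq:cLis}--\eqref{eq:cLns} claim. Writing $h_i \eqdef \nabla f_i(x) - \nabla f_i(x^*)$ and using $v_i = \mathbf{1}_{(i\in S)}/p_i$, we have $\nabla f_v(x) - \nabla f_v(x^*) = \tfrac{1}{n}\sum_i v_i h_i$, so the task reduces to estimating $\tfrac{1}{n^2}\mathbb{E}\|\sum_i v_i h_i\|^2$. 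Throughout, the two smoothness inputs are: (a) $\|\nabla f(x)-\nabla f(x^*)\|^2 \leq 2L(f(x)-f(x^*))$ from $L$-smoothness and convexity of $f$; and (b) $\|h_i\|^2 \leq 2\lambda_{\max}(\mM_i)\, D_{f_i}(x,x^*)$ from $\mM_i$-smoothness and convexity of $f_i$, together with $\sum_i D_{f_i}(x,x^*) = n(f(x)-f(x^*))$ since $\nabla f(x^*)=0$.

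For part (iii), independent sampling makes the $v_i$ mutually independent with mean $1$ and $\mathrm{Var}(v_i) = (1-p_i)/p_i$, so a bias-variance decomposition gives
\[
\mathbb{E}\Big\|\sum_i v_i h_i\Big\|^2 = \Big\|\sum_i h_i\Big\|^2 + \sum_i \tfrac{1-p_i}{p_i}\|h_i\|^2.
\]
Now $\|\sum_i h_i\|^2 = n^2\|\nabla f(x)-\nabla f(x^*)\|^2 \leq 2n^2 L(f(x)-f(x^*))$ by (a), while bounding each $\|h_i\|^2$ by (b) and factoring the maximum out of the sum gives $\sum_i \tfrac{1-p_i}{p_i}\|h_i\|^2 \leq 2n \max_i \tfrac{(1-p_i)\lambda_{\max}(\mM_i)}{p_i}(f(x)-f(x^*))$. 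Dividing by $n^2$ yields \eqref{eq:cLis}.

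For part (iv), the $v_i$'s are no longer independent. A direct computation yields $\mathbb{E}[v_i v_j] = n(\tau-1)/(\tau(n-1))$ for $i\neq j$ and $\mathbb{E}[v_i^2] = n/\tau$, hence $\mathrm{Cov}(v_i, v_j) = -(n-\tau)/(\tau(n-1))$. Expanding the squared norm and regrouping, I would collect contributions into the form $\alpha\|\sum_i h_i\|^2 + \beta \sum_i \|h_i\|^2$; after the simplification $1 - \tfrac{n-\tau}{\tau(n-1)} = \tfrac{n(\tau-1)}{\tau(n-1)}$, the coefficients come out to $\alpha = \tfrac{n(\tau-1)}{\tau(n-1)}$ and $\beta = \tfrac{n(n-\tau)}{\tau(n-1)}$. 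Applying (a) to the $\alpha$-term and (b) plus $\max_i$ to the $\beta$-term, exactly as in part (iii), and dividing by $n^2$, produces \eqref{eq:cLns}.

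The main subtlety I anticipate is ensuring that the leading coefficient is the global constant $L$ rather than the weaker $\tfrac{1}{n}\sum_i \lambda_{\max}(\mM_i)$. This forces me to keep $\|\sum_i h_i\|^2$ grouped as a single squared norm of $\nabla f(x)-\nabla f(x^*)$ and to invoke smoothness of the averaged function $f$ itself, rather than distributing the squared norm over summands by Cauchy--Schwarz. Once this grouping is in place, both parts reduce to bookkeeping of second-moment and covariance coefficients, which is what makes \eqref{eq:cLis}--\eqref{eq:cLns} sharper than specializing Theorem~\ref{theo:expsmoothas}.
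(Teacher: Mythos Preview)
Your proposal is correct and follows essentially the same approach as the paper. The paper computes $\mathbb{E}\|\nabla f_v(x)-\nabla f_v(y)\|^2$ via the matrix $\mP_{ij}/(p_ip_j)$, observes that for both samplings this ratio is a constant $c_2$ off the diagonal, and splits the expectation as $c_2\|\nabla f(x)-\nabla f(y)\|^2 + \sum_i \tfrac{1-p_ic_2}{n^2 p_i}\|\nabla f_i(x)-\nabla f_i(y)\|^2$ before applying (a) and (b); your bias--variance and covariance decompositions are exactly this same splitting expressed in probabilistic language, with $c_2=1$ for independent sampling and $c_2=\tfrac{n(\tau-1)}{\tau(n-1)}$ for $\tau$--nice.
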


\citet{SAGAminib} were the first to suggest using~\eqref{eq:cLns} as an approximation for $\cL$.  Through extensive experiments, they showed that the bound~\eqref{eq:cLns} is very tight. Here we give the first proof that~\eqref{eq:cLns} is indeed a valid upper bound.

For $v=v(S)$ given by~\eqref{eq:asvector}, formulas for the gradient noise $\sigma^2$ are provided in the next result:
\begin{theorem}\label{lem:sigma}
	Let $h_i = \nabla f_i(x^*)$. Then 
	\begin{equation}\label{eq:sigma}
	 \sigma^2 =  \frac{1}{n^2} \sum_{i,j\in [n]} \frac{\mP_{ij}}{p_ip_j} \langle h_i, h_j \rangle.
	\end{equation}
\end{theorem}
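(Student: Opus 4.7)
The plan is to expand $\sigma^2 = \EE{\cD}{\|\nabla f_v(x^*)\|^2}$ directly using the definition of $v = v(S) = {\hat \mP}^{-1} e_S$ and take expectations term-by-term; the result is essentially a second-moment calculation for the sampling vector.

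First, I would write the stochastic gradient at $x^*$ explicitly. Since $f_v(x) = \frac{1}{n}\sum_{i=1}^n v_i f_i(x)$, linearity of differentiation gives $\nabla f_v(x^*) = \frac{1}{n}\sum_{i=1}^n v_i h_i$ with $h_i = \nabla f_i(x^*)$. Squaring the norm,
\begin{equation*}
\|\nabla f_v(x^*)\|^2 = \frac{1}{n^2} \sum_{i,j \in [n]} v_i v_j \dotprod{h_i,h_j}.
\end{equation*}

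Second, I would take the expectation over $\cD$, which acts only on the scalars $v_i v_j$ (the inner products are deterministic). Recalling from the proof of the preceding lemma that $v_i = \mathbf{1}_{(i \in S)}/p_i$, we have
\begin{equation*}
\EE{\cD}{v_i v_j} = \frac{\EE{}{\mathbf{1}_{(i \in S)} \mathbf{1}_{(j \in S)}}}{p_i p_j} = \frac{\Prob{i \in S \ \& \ j \in S}}{p_i p_j} = \frac{\mP_{ij}}{p_i p_j},
\end{equation*}
by the definition of $\mP$ given in the statement of Theorem~\ref{theo:expsmoothas}. Substituting this into the double sum yields exactly~\eqref{eq:sigma}.

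There is no real obstacle here: the whole argument is two lines once one recognizes that $\mP$ is nothing but the matrix of pairwise inclusion probabilities of the sampling $S$, which is exactly the second moment of the coordinates of $v$ (after rescaling by $p_i p_j$). The only subtle point worth a brief sentence is that the diagonal term $\mP_{ii}$ equals $\Prob{i \in S} = p_i$, so the diagonal contribution reduces to $\frac{1}{n^2}\sum_i \|h_i\|^2/p_i$, which is the familiar single-element-sampling variance formula and serves as a useful sanity check.
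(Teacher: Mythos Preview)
Your proposal is correct and follows essentially the same route as the paper: expand $\|\nabla f_v(x^*)\|^2$ as the double sum $\frac{1}{n^2}\sum_{i,j} v_i v_j \dotprod{h_i,h_j}$, then use $v_i = \mathbf{1}_{(i\in S)}/p_i$ and $\E{\mathbf{1}_{(i\in S)}\mathbf{1}_{(j\in S)}} = \mP_{ij}$ to take the expectation term-by-term. The paper's proof is exactly this two-line computation, without the diagonal sanity check you added.
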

Specializing the above theorem to specific samplings $S$ gives the following formulas for $\sigma^2$:
\begin{proposition}\label{pro:sigma}	
	(i) For single element sampling $S$, we have 
	\begin{equation}\label{eq:sigma1}
	 \sigma^2 = \frac{1}{n^2} \sum_{i\in [n]} \frac{1}{p_i} \|h_i\|^2. 
	\end{equation}

	(ii) For independent sampling $S$ with $\mathbb{E}[|S|] = \tau$, we have 
	\begin{equation}\label{eq:sigmais}
	 \sigma^2 = \frac{1}{n^2} \sum_{i\in [n]} \frac{1-p_i}{p_i} \|h_i\|^2.
	\end{equation}
	
	(iii) For $\tau$-nice sampling $S$, we have 
	\begin{equation}\label{eq:sigmans}
	 \sigma^2 = \frac{1}{n\tau} \cdot \frac{n-\tau}{n-1} \sum_{i\in [n]} \|h_i\|^2. 
	\end{equation}
	
	(iv) For partition sampling $S$ with partition ${\cal G} $, we have 
	\begin{equation}\label{eq:sigmaps}
	 \sigma^2 = \frac{1}{n^2} \sum_{C\in {\cal G}} \frac{1}{p_C}\| \sum_{i\in C} h_i\|^2. 
	\end{equation}
\end{proposition}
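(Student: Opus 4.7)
The plan is to derive each of the four formulas (i)--(iv) by directly substituting the appropriate values of $p_i$ and $\mathbf{P}_{ij} = \Prob{i\in S\ \&\ j\in S}$ into the master formula
\[
\sigma^2 = \frac{1}{n^2}\sum_{i,j\in[n]}\frac{\mathbf{P}_{ij}}{p_i p_j}\langle h_i,h_j\rangle
\]
from Theorem~\ref{lem:sigma}, and then simplifying using one crucial identity: since $x^*$ is the global minimizer of $f = \frac{1}{n}\sum_i f_i$, we have $\sum_{i=1}^n h_i = n\nabla f(x^*) = 0$, and therefore
\[
\sum_{i\neq j}\langle h_i,h_j\rangle = \Big\|\sum_i h_i\Big\|^2 - \sum_i \|h_i\|^2 = -\sum_i \|h_i\|^2.
\]

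For case (i), single element sampling has $|S|=1$, so $\mathbf{P}_{ij} = 0$ for $i\neq j$ and $\mathbf{P}_{ii} = p_i$; only the diagonal survives and yields $\sigma^2 = \frac{1}{n^2}\sum_i \frac{1}{p_i}\|h_i\|^2$. For case (iv), partition sampling with partition $\mathcal{G}$ assigns $p_i = p_{C}$ to every $i\in C$ and gives $\mathbf{P}_{ij} = p_C$ when $i,j$ lie in the same block $C$ and $0$ otherwise; grouping the double sum by block and factoring yields $\frac{1}{n^2}\sum_{C\in\mathcal{G}}\frac{1}{p_C}\|\sum_{i\in C}h_i\|^2$.

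For case (ii), independent sampling satisfies $\mathbf{P}_{ii}=p_i$ and $\mathbf{P}_{ij} = p_i p_j$ for $i\neq j$ by independence, so the double sum splits into $\sum_i \frac{1}{p_i}\|h_i\|^2 + \sum_{i\neq j}\langle h_i,h_j\rangle$; applying the identity above collapses this to $\sum_i(\tfrac{1}{p_i}-1)\|h_i\|^2$, giving the claimed formula. For case (iii), $\tau$-nice sampling satisfies $p_i = \tau/n$ and $\mathbf{P}_{ij} = \tau(\tau-1)/(n(n-1))$ for $i\neq j$; substituting and again using the identity yields
\[
\sigma^2 = \frac{1}{n^2}\left[\frac{n}{\tau} - \frac{n(\tau-1)}{\tau(n-1)}\right]\sum_i \|h_i\|^2,
\]
and factoring $\frac{n}{\tau}\left[1 - \frac{\tau-1}{n-1}\right] = \frac{n}{\tau}\cdot\frac{n-\tau}{n-1}$ gives the stated form.

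There is no serious obstacle: each case reduces to computing $\mathbf{P}_{ij}$ for the sampling and then applying the vanishing-sum identity $\sum_i h_i = 0$ to eliminate the off-diagonal cross terms (except in the partition case, where they are kept inside the squared norm of the block sum). The only mildly delicate piece is verifying the second moment $\mathbf{P}_{ij}$ for $\tau$-nice sampling, which follows from the combinatorial identity $\binom{n-2}{\tau-2}/\binom{n}{\tau} = \tau(\tau-1)/(n(n-1))$.
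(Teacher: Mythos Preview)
Your proposal is correct and follows essentially the same approach as the paper: both plug the sampling-specific values of $\mathbf{P}_{ij}$ into the master formula of Theorem~\ref{lem:sigma} and exploit $\sum_i h_i = n\nabla f(x^*) = 0$ to kill the cross terms. The only cosmetic difference is that in (ii) and (iii) the paper completes the off-diagonal sum to the full double sum $\sum_{i,j}\langle h_i,h_j\rangle = \|\sum_i h_i\|^2 = 0$, whereas you use the equivalent identity $\sum_{i\neq j}\langle h_i,h_j\rangle = -\sum_i\|h_i\|^2$; the algebra is otherwise identical.
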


Generally, we do not know the values of $h_i = \nabla f_i(x^*)$. But if we have prior knowledge that $x^*$ belongs to some set ${\cal C}$, we can obtain upper bounds for $\sigma^2$ for these samplings from Proposition~\ref{pro:sigma} in a straightforward way.


\section{Optimal Mini-Batch Size}
\label{sec:specialexample}

Here we develop the iteration complexity for different samplings by plugging in the bounds on $\cL$ and $\sigma$ given in Section~\ref{sec:cLandsigma} into Theorem~\ref{theo:strcnvlin}. To keep the notation brief, in this section we drop the logarithmic term $ \log\left(2 \|x^0 - x^*\|^2 /  \epsilon \right)$ from the iteration complexity results.
Furthermore, for brevity and to better compare our results to others in the literature,  we will use $L_i = \lambda_{\max}(\mM_i)$  and $L_{\max} = \max_{i\in [n]}L_i$  (see Remark~\ref{rem:LmaxLi}). Finally let $\overline{h} =  \frac{1}{n} \sum_{i\in [n]} \| h_i\|^2$ for brevity.

\textbf{Gradient descent.} As a first sanity check, we consider the case where $|S|= n$ with probability one. That is, each iteration~\eqref{eq:sgdstep} uses the full batch gradient. Thus $\sigma =0$ and it is not hard to see that for $\tau =n$ in~\eqref{eq:cLns} or $p_i=1$ for all $i$ in~\eqref{eq:cLis} we have $\cL_{\max} =L.$ Consequently, the resulting iteration complexity~\eqref{eq:itercomplexlin} is now $k \geq 2L/\mu$. This is exactly the rate of gradient descent, which is precisely what we would expect since the resulting method {\em is} gradient descent. Though an obvious sanity check, we believe this is the first convergence theorem of SGD that includes gradient descent as a special case. Clearly, this is a necessary pre-requisite if we are to hope to understand the complexity of mini-batching.

\subsection{Nonzero gradient noise}

To better appreciate how our iteration complexity evolves with increased mini-batch sizes, we now consider independent sampling with $|S| = \tau$ and $\tau$-nice sampling.

\textbf{Independent sampling.}
 Inserting the bound on $\cL$~\eqref{eq:cLis} and $\sigma$~\eqref{eq:sigmais} into~\eqref{eq:itercomplexlin} gives the following iteration complexity
\begin{align}\label{eq:itercomplexlinis}
k & \geq    \frac{2}{\mu }\max \left\{  L + \max_{i\in [n]} \frac{1-p_i}{np_i}L_i \right., \left.  \frac{2}{\mu \epsilon}\frac{1-p_i}{np_i} \overline{h}\right\}.
\end{align} 
This is a completely new mini-batch complexity result, which opens up the possibility of optimizing the mini-batch size and probabilities of sampling.
For instance, if we fix uniform probabilities with $p_i = \frac{\tau}{n}$ then~\eqref{eq:itercomplexlinis} becomes $k \geq   \frac{2}{\mu }\max \left\{  l(\tau), r(\tau) \right\}$, where
\begin{equation}
l(\tau)   \eqdef L +  \left(\frac{1}{\tau}-\frac{1}{n}\right)L_{\max}; \; 
r(\tau) \eqdef  \frac{2}{\mu \epsilon} \left(\frac{1}{\tau}-\frac{1}{n}\right)\overline{h}. \label{eq:lrforis}
\end{equation}
This complexity result corresponds to using the stepsize
\begin{align}\label{eq:gammalinisuni}
\gamma &    =   \frac{1}{2}\min \left\{  \frac{1}{  l(\tau)}, \frac{1}{r(\tau)} \right\}
\end{align} 
if $\tau <n$, otherwise only the left-hand-side term in the minimization remains. The stepsize~\eqref{eq:gammalinisuni} is increasing since both $l(\tau)$ and $r(\tau)$ decrease as $\tau$ increases.

With such a simple expression for the iteration complexity we can choose a mini-batch size that optimizes the \emph{total complexity}. By defining the \emph{total complexity} $T(\tau)$ as the number of iterations $k$ times the number of gradient evaluations ($\tau$) per iteration  gives 
\begin{eqnarray}\label{eq:totcomplexlinisuni}
  T(\tau) \eqdef \frac{2}{\mu n }\max \left\{  \tau nL +  \left(n - \tau\right)L_{\max} ,\frac{2\left(n - \tau\right)\overline{h}}{\mu \epsilon} \right\}.
\end{eqnarray} 

Minimizing $T(\tau)$ in $\tau$ is easy because
 $T(\tau)$ is a max of a linearly increasing term $\tau \times l(\tau)$ and a linearly decreasing term $\tau \times r(\tau)$ in $\tau$. Furthermore $n\times l(n) \geq 0 = n \times r(n) $. Consequently, if $ l(1) \geq r(1)$, then $\tau^* = 1$, otherwise
\begin{eqnarray}\label{eq:tau_optimal_independent}
 \tau^* = n\frac{\frac{2}{\mu \epsilon}\overline{h} -L_{\max}}{\frac{2}{\mu \epsilon}\overline{h} -L_{\max} +nL}.
\end{eqnarray} 
Since  $r(1)$ is proportional to the noise and $1/\epsilon$ and $l(1)$ is proportional to the smoothness constants the condition $l(1) \leq r(1)$ holds when there is comparatively a lot of noise or the precision is high. As we will see in Section~\ref{sec:zerogradnoise} this logic extends to the case where the noise is zero, where the optimal mini-batch size is $\tau^* =1.$

%
  
\textbf{$\tau$--nice sampling.}  Inserting the bound on $\cL$~\eqref{eq:cLns} and $\sigma$~\eqref{eq:sigmans} into~\eqref{eq:itercomplexlin} gives the  iteration complexity $k \geq \frac{2}{\mu} \max \{l(\tau), r(\tau) \}$, where
\begin{align}
l(\tau) &  = \frac{n (\tau -1)}{ \tau (n-1)}L  +\frac{n-\tau}{ \tau (n-1)}L_{\max},\label{eq:ltaunice} \\
r(\tau) & =   \frac{2(n-\tau)}{\epsilon \mu(n-1)}  \frac{\overline{h}}{\tau}, \label{eq:rtaunice}
\end{align} 
which holds for the stepsize 
\begin{equation} \label{eq:stepsize_tau}
\gamma =\frac{1}{2} \min \left\{\frac{1}{l(\tau)}, \frac{1}{r(\tau)} \right\} .
\end{equation} 
Again, this is an increasing function in $\tau.$
 
  We are now again able to  calculate the mini-batch size that optimizes the total complexity $T(\tau)$ given by 
$T(\tau) = \frac{2\tau}{\mu} \max \{l(\tau), r(\tau)\}.$ 
Once again 
 $T(\tau)$ is a max of a linearly increasing term $\tau \times l(\tau)$ and a linearly decreasing term $\tau \times r(\tau)$ in $\tau$. Furthermore $r(n) = 0 \leq l(n)$. Consequently, if $r(1)\leq   l(1)$ then $\tau^* = 1$, otherwise
 \begin{equation} \label{eq:tau_optimal_nice}
 \tau^* = n \frac{L  - L_{\max} +\frac{2}{\epsilon \mu} \cdot  \overline{h}}{nL  - L_{\max} +\frac{2}{\epsilon \mu} \cdot  \overline{h}}.
\end{equation}  

\subsection{Zero gradient noise}
\label{sec:zerogradnoise}

Consider the case where the gradient noise is zero ($\sigma =0$).
According to Theorem~\ref{theo:strcnvlin}, the resulting complexity of SGD with constant stepsize $\gamma = \frac{1}{2\cL}$ is given by the very simple expression
\begin{equation}\label{eq:la9sjjh8a3}
 k\geq   \frac{2\cL}{\mu },
\end{equation}
where we have dropped the logarithmic term $\log\left(\left.  \|x^0 - x^*\|^2 \right/  \epsilon \right)$.
In this setting, due to Corollary~\ref{cor:cLoverimpweak}, we know that $f$ satisfies the weak growth condition. Thus our results are directly comparable to those developed in~\cite{MaBB18} and in~\cite{vaswani2018fast}. 

In particular, Theorem~1 in~\cite{MaBB18} states that when running SGD with mini-batches based on sampling with replacement, the resulting iteration complexity is 
\begin{equation}\label{eq:mabb18}
 k \geq  \frac{L}{ \mu}\frac{\tau -1}{\tau}+\frac{L_{\max}}{ \mu}\frac{1}{\tau},
\end{equation}
again dropping the logarithmic term.
Now gaining insight into the complexity~\eqref{eq:la9sjjh8a3} is a matter of studying the expected smoothness parameter $\cL$ for different sampling strategies. 
%

%

\textbf{Independent sampling.}
Setting $\sigma =0$ (thus $\overline{h} =0$) and using 
uniform probabilities with $p_i = \frac{\tau}{n}$ in~\eqref{eq:itercomplexlinis}
gives 
\begin{eqnarray} \label{eq:totcomplexlinisuni2}
k &\geq  &   
\frac{2L}{\mu} +  \left(\frac{1}{\tau}-\frac{1}{n}\right)\frac{2L_{\max}}{\mu}.
\end{eqnarray}
 
\textbf{$\tau$ --nice sampling.}
If we use a uniform sampling and $\sigma =0$  then the resulting iteration complexity is given by 
\begin{eqnarray}\label{eq:itercomplexlinns2}
k &\geq &  
\frac{n (\tau -1)}{ \tau (n-1)}\frac{2L}{\mu}   +\frac{n-\tau}{ \tau (n-1)}\frac{2L_{\max}}{\mu}.
\end{eqnarray} 

Iteration complexities~\eqref{eq:mabb18}, \eqref{eq:totcomplexlinisuni2} and~\eqref{eq:itercomplexlinns2} tell essentially the same story. Namely, the complexity improves as $\tau$ increases to $n$, but this improvement is not enough when considering the total complexity (multiplying by $\tau$). Indeed, for total complexity, these results all say that $\tau =1$ is optimal.

\section{Importance Sampling}\label{sec:partialbiassamp}
In this section we propose importance sampling for single element sampling and independent sampling with $\mathbb{E}[|S|] = \tau$, respectively. Due to lack of space, the details of this section are in the appendix, Section \ref{sec:important}. Again we drop the log term in~(\ref{eq:itercomplexlin}) and adopt the notation in Remark~\ref{rem:LmaxLi}. 

\subsection{Single element sampling}
For single element sampling, plugging (\ref{eq:lmaxi1}) and (\ref{eq:sigma1}) into (\ref{eq:itercomplexlin}) gives the following iteration complexity 
$$
 \frac{2}{\epsilon \mu^2} \max\left\{ \frac{\epsilon \mu}{n}\max_{i\in [n]}\frac{L_i}{p_i} , \frac{2}{n^2}\sum_{i\in [n]}\frac{1}{p_i}\|h_i\|^2  \right\},
$$
where $0<p_i\leq 1$ and $\sum_{i\in [n]}p_i = 1$. In order to optimize this iteration complexity over $p_i$, we need to solve a $n$ dimensional linearly constrained nonsmooth convex minimization problem, which could be harder than the original problem (\ref{eq:prob}). So instead, we will focus on minimizing ${\cal L}_{\max}$ and $\sigma^2$ over $p_i$ seperately.
  We will then use these two resulting (sub)optimal probabilities to construct a sampling.

In particular, for single element sampling we can recover the {\em partially biased sampling} developed in~\cite{needell2014stochastic}. First, 
from~(\ref{eq:lmaxi1}) it is easy to see that the probabilities that minimize $\cL_{\max}$ are 
$
p_i^{\cal L} = \left. L_i \right/ \sum_{j\in [n]}L_j,
$
for all $i$.
 Using these suboptimal probabilities we can construct a partially biased sampling by letting $ \hat{p}_i \eqdef \frac{1}{2} p_i^{\cal L} +  \frac{1}{2n}.$
Plugging this sampling in~\eqref{eq:lmaxi1} gives 
${\cal L}_{\max} \leq 2\overline{L} \eqdef \frac{2}{n} \sum_{i \in [n]}L_i$, and from (\ref{eq:sigma1}), we have $\sigma^2 \leq \frac{2}{n}\sum_{i \in [n]} \|h_i\|^2 \eqdef 2 \overline{h}$. This sampling is the same as the partially biased sampling in ~\cite{needell2014stochastic}. From (\ref{eq:itercomplexlinis}) in Theorem \ref{theo:strcnvlin}, we get that the total complexity is now given by
\begin{equation}\label{eq:asdj88jaaa3}
 k \geq \max\left\{ \frac{4\overline{L}}{\alpha \mu}, \frac{8 \overline{h}}{\epsilon\mu^2} \right\}.
\end{equation}

For uniform sampling, ${\cal L}_{\max} = \max_{i\in [n]}L_i\geq \overline{L}$ and $\sigma^2 = \frac{1}{n}\sum_{i \in [n]} \|h_i\|^2$. Hence, compared to uniform sampling, the iteration complexity of partially biased sampling is at most two times larger, but could be $n/2$ smaller in the extreme case where 
$
L_{\max} = n \, \overline{L}.
$ 

\subsection{Minibatches}

Importance sampling for minibatches was first considered in \citep{csiba2016importance}; but not in the context of SGD. Here we propose the first importance sampling for minibatch SGD. In Section~\ref{sec:indepensamp} in the appendix we introduce the use of partially biased sampling together with independent sampling with $|S| = \tau$ and show that we can achieve a total complexity of (by Proposition \ref{pro:pialphapbs})
\begin{equation}\label{eq:asdj88jaaa23}
 k \geq \max\left\{ \left(1 - \frac{2}{\tau}\right)\frac{2\overline{L}}{\alpha \mu}, \left(\frac{2}{\tau} - \frac{1}{n}\right)\frac{8 \overline{h}}{\epsilon\mu^2} \right\},
\end{equation}
which not only eliminates the dependence on $L_{\max}$, but also improves as the mini-batch size $\tau$ increases. 

\section{Experiments }
\label{exper}
In this section, we empirically validate our theoretical results. We perform three experiments  in each of which we  highlight a different aspect of our contributions. 

In the first two experiments we focus on ridge regression and regularized logistic regression problems (problems with strongly convex objective $f$ and components $f_i$) and we evaluate the performance of SGD on both synthetic and real data. 
In the second experiment (Section~\ref{sec:exp2}) we compare the convergence of SGD for several choices of the distribution $\cD$ (different sampling strategies) as described in Section~\ref{sec:samplingD}. In the last experiment (Section~\ref{sec:exp3}) we focus on the problem of principal component analysis (PCA) which by construction can be seen as a problem with a strongly convex objective $f$ but with non-convex functions $f_i$ \cite{allen2016improved, garber2015fast,  shalev2016sdca}.

In all experiments, to evaluate SGD we use the relative error measure  $\frac{\|x^k - x^* \|^2}{\|x^0 - x^* \|^2}$. For all implementations, the starting point $x^0$ is sampled from the standard Gaussian. We run each method until $\|x^k-x^* \|^2\leq 10^{-3}$ or until a pre-specified maximum number of epochs is achieved. For the horizontal axis we always use the number of epochs. 

For more experiments we refer the interested reader to Section~\ref{addExp} of the Appendix. 

\textbf{Regularized Regression Problems:} 
In the case of the \emph{ridge regression} problem we solve:
$$\min_x  f(x) =\frac{1}{2 n} \sum_{i=1}^{n} ( \bA[i, :] x - y_i)^2 + \frac{\lambda}{2} \| x \|^2,
$$
while for the \emph{$L2$-regularized logistic regression} problem we solve:
$$
\min_x  f(x) = \frac{1}{2 n} \sum_{i=1}^{n} \log \left(1+\exp(-y_i \bA[i, :] x) \right) + \frac{\lambda}{2} \| x \|^2 .
$$

In both problems $\mathbf{A} \in \mathbb{R}^{n \times d}, y \in \mathbb{R}^n$ are the given data and $\lambda > 0$ is the regularization parameter. 
We generated synthetic data in both problems by sampling the rows of matrix $\mathbf{A}$ ($\bA[i, :]$)  from the standard Gaussian distribution $\mathcal{N}(0, 1)$. Furthermore 
for ridge regression we sampled the entries of $y$ from the standard Gaussian distribution while in the case of logistic regression $y \in \{-1, 1\}^n$ where $\mathbb{P}(y_i=1) = \mathbb{P}(y_i=-1) = \frac{1}{2}$.
For our experiments on real data we choose several LIBSVM \cite{chang2011libsvm} datasets.

\begin{figure}[t]
\centering
\begin{subfigure}{.24\textwidth}
  \centering
  \includegraphics[width=1\linewidth]{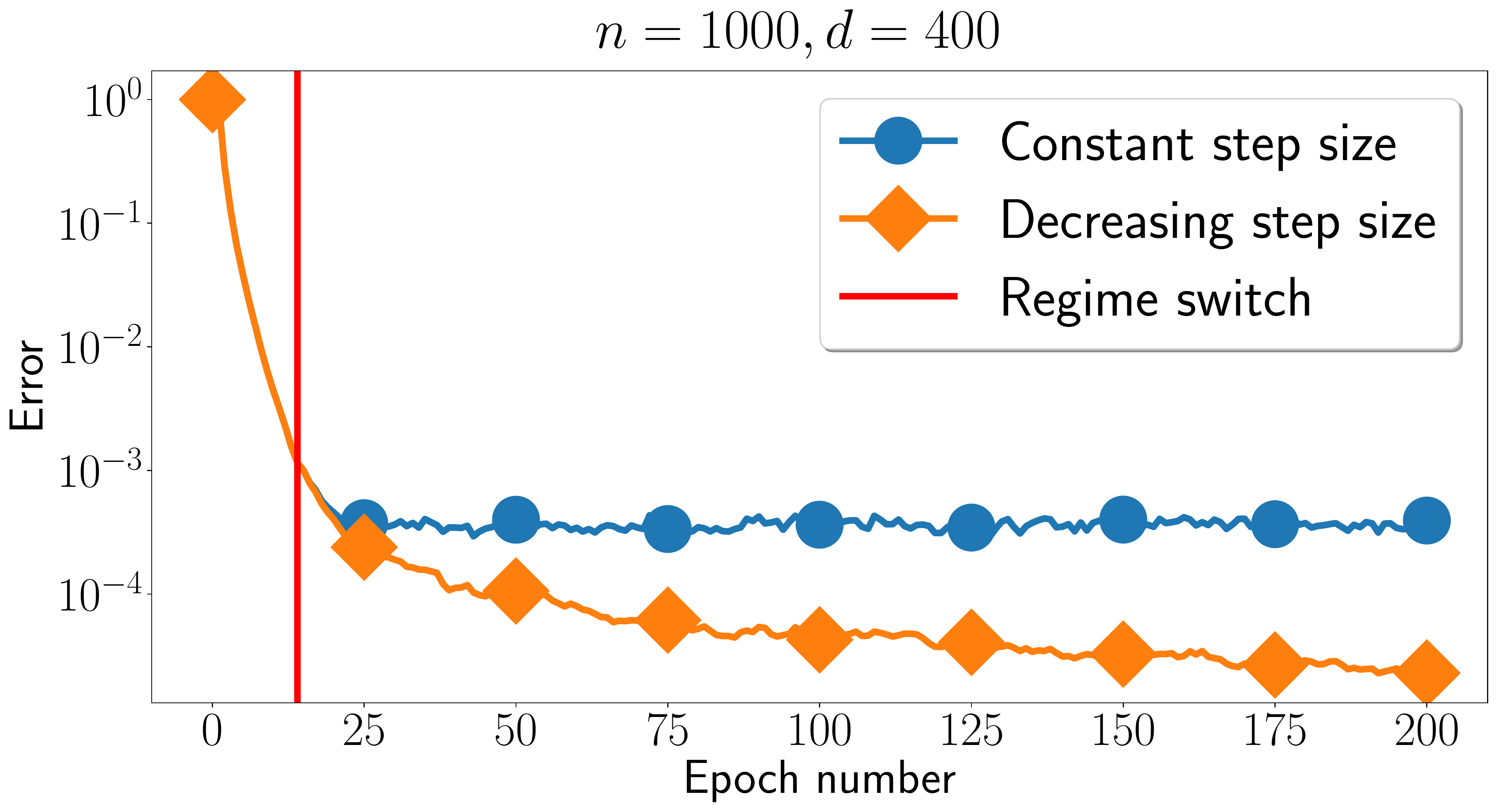}
\end{subfigure}%
\begin{subfigure}{.24\textwidth}
  \centering
  \includegraphics[width=1\linewidth]{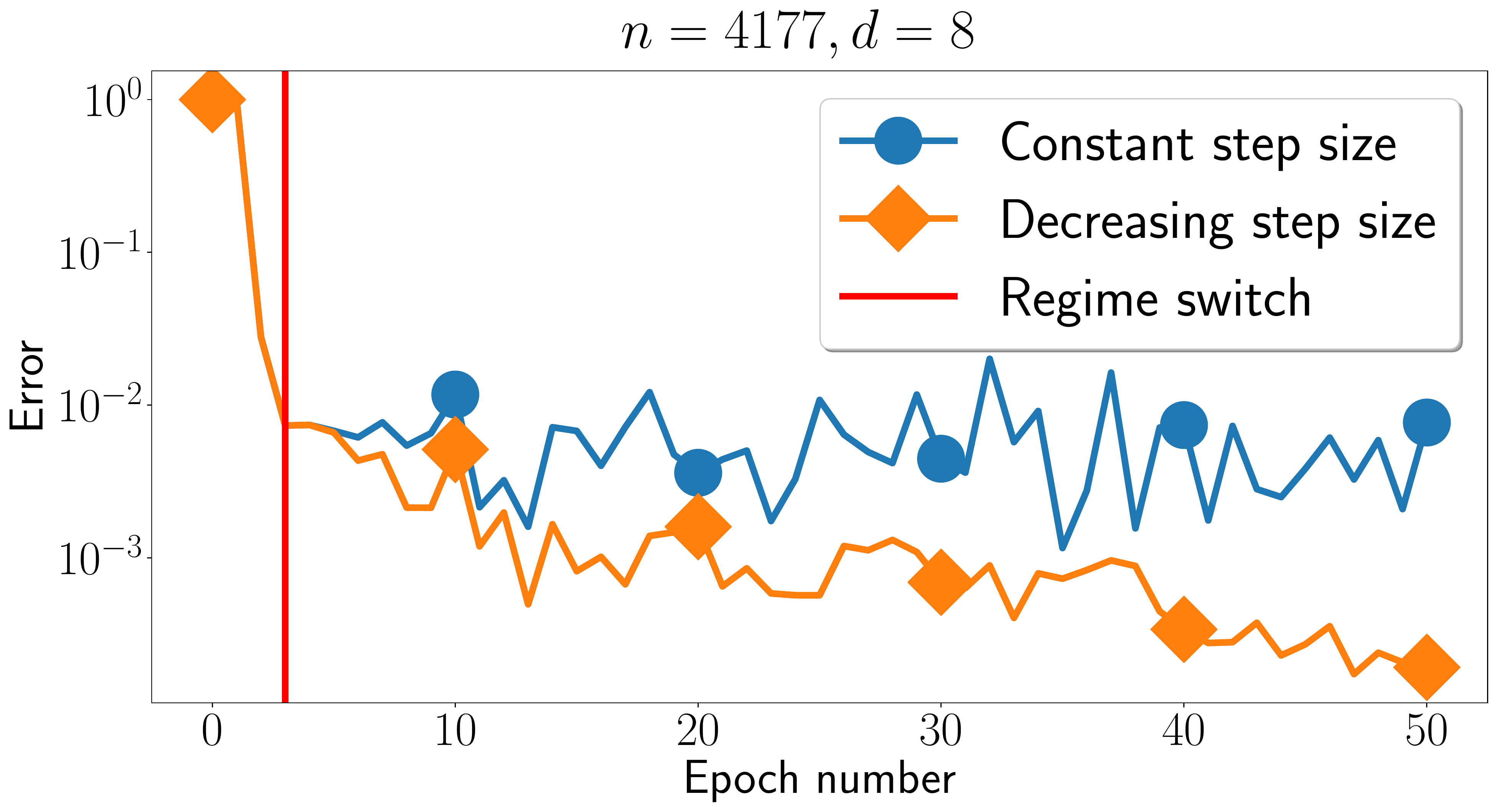}
\end{subfigure}\\
\begin{subfigure}{.24\textwidth}
  \centering
  \includegraphics[width=1\linewidth]{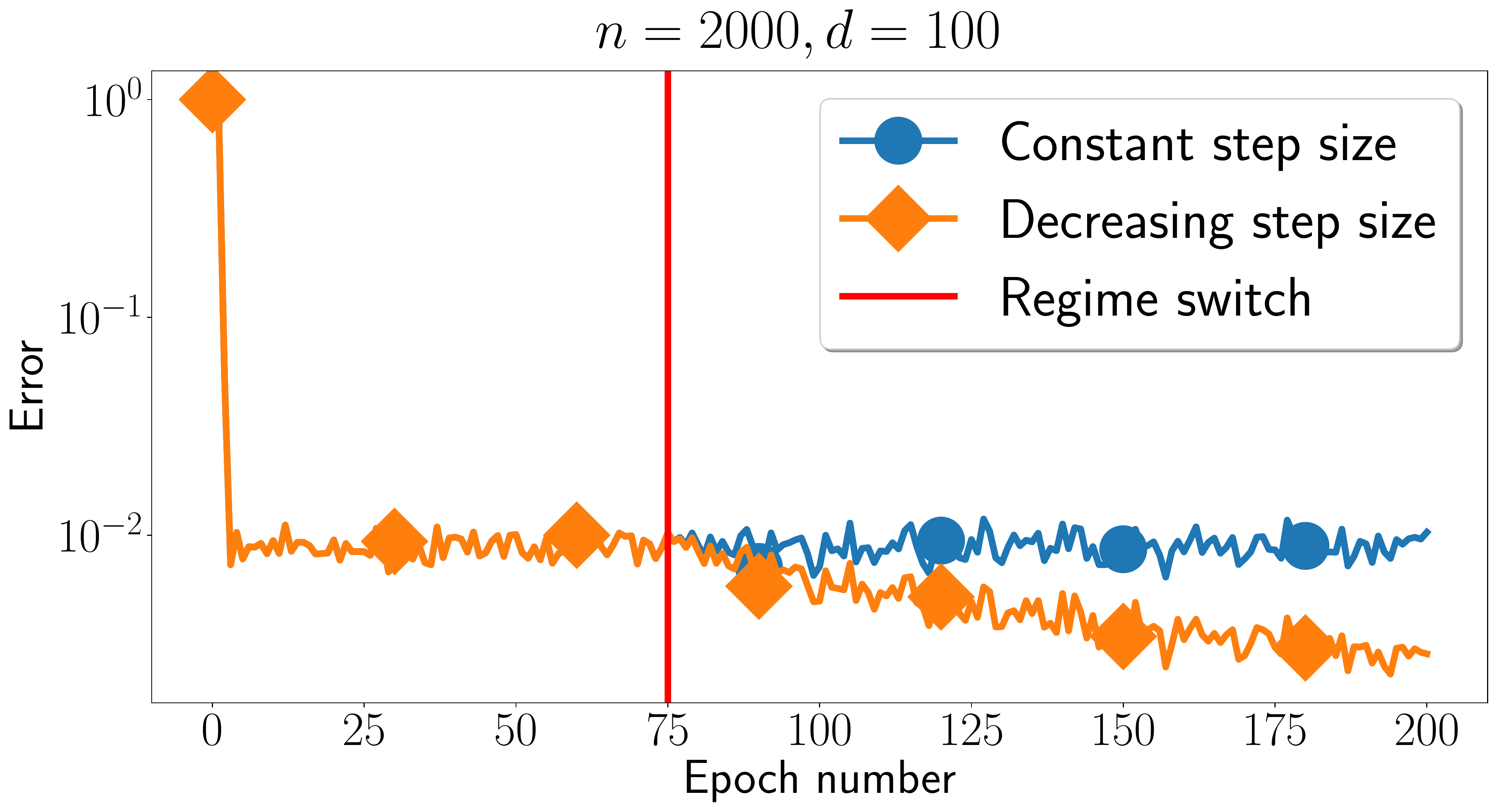}
\end{subfigure}%
\begin{subfigure}{.24\textwidth}
  \centering
  \includegraphics[width=1\linewidth]{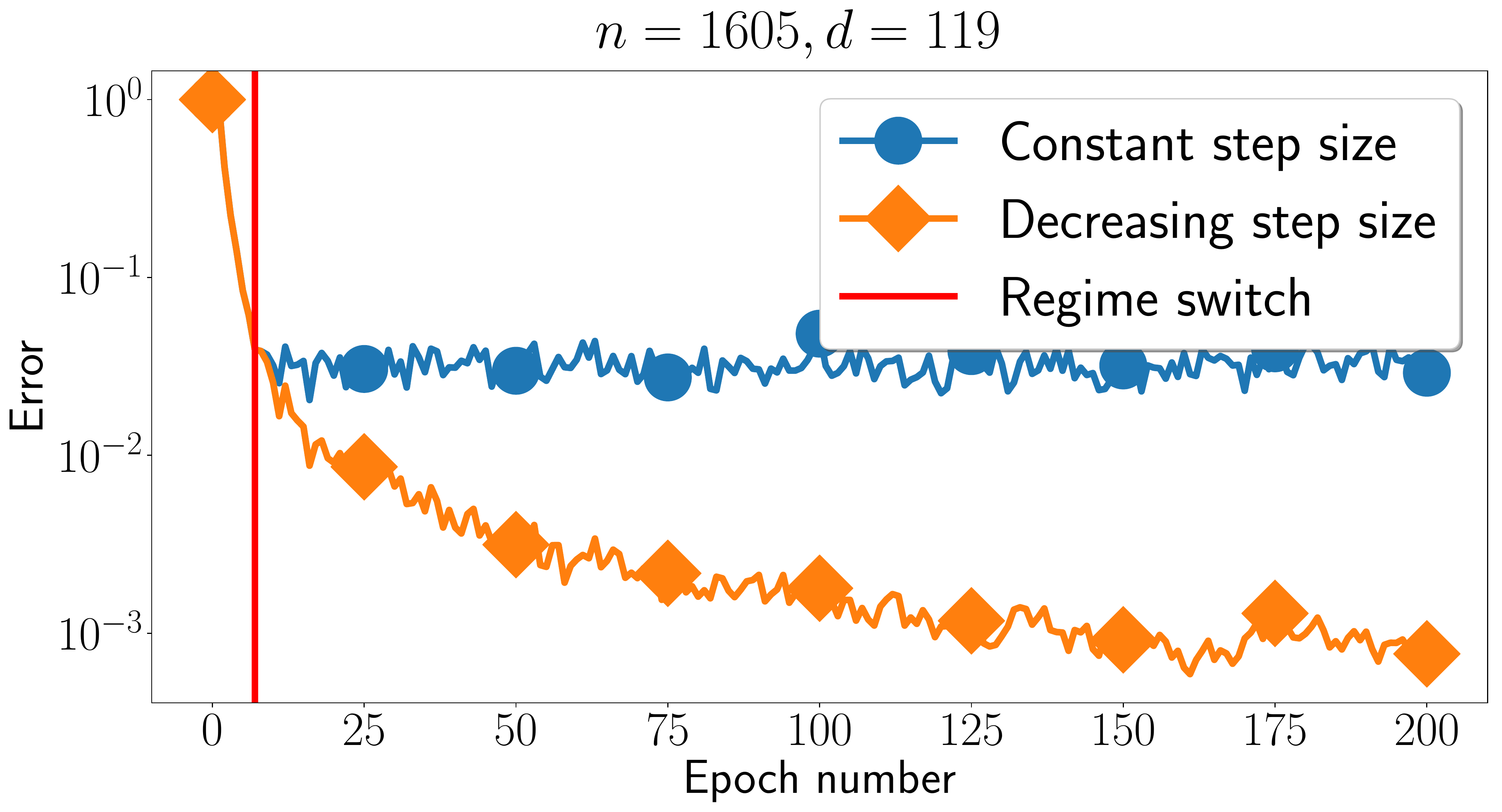}
\end{subfigure}
\caption{\footnotesize Comparison between constant and decreasing step size regimes of SGD. Ridge regression problem (first row): on left - synthetic data, on right - real dataset: abalone from LIBSVM.  Logistic regression problem(second row): on left - synthetic data, on right - real data-set: a1a from LIBSVM. In all experiments $\lambda = 1/n$.
}
\label{constantVsdecreasing}
\end{figure}

\subsection{Constant vs decreasing step size} \label{sec:exp1}

We now  compare the performance of SGD in the constant and decreasing stepsize regimes considered in  Theorems \ref{theo:strcnvlin} (see \eqref{eq:stepbndmax}) and  \ref{theo:decreasingstep} (see \eqref{eq:gammakdef}), respectively. Here we use
a uniform single element sampling.
 As expected from theory, we see in  Figure~\ref{constantVsdecreasing} that the decreasing stepsize regime is vastly superior at reaching a higher precision than the constant step-size variant. In our plots, the vertical red line denotes the value of $4\lceil\mathcal{\left.\cL\right/\mu}\rceil$ predicted from Theorem~\ref{theo:decreasingstep} and highlights the point where SGD needs to change its update rule from constant to decreasing step-size.

\begin{figure}[t] 
\centering
\begin{subfigure}{.45\textwidth}
  \centering
  \includegraphics[width=1\linewidth]{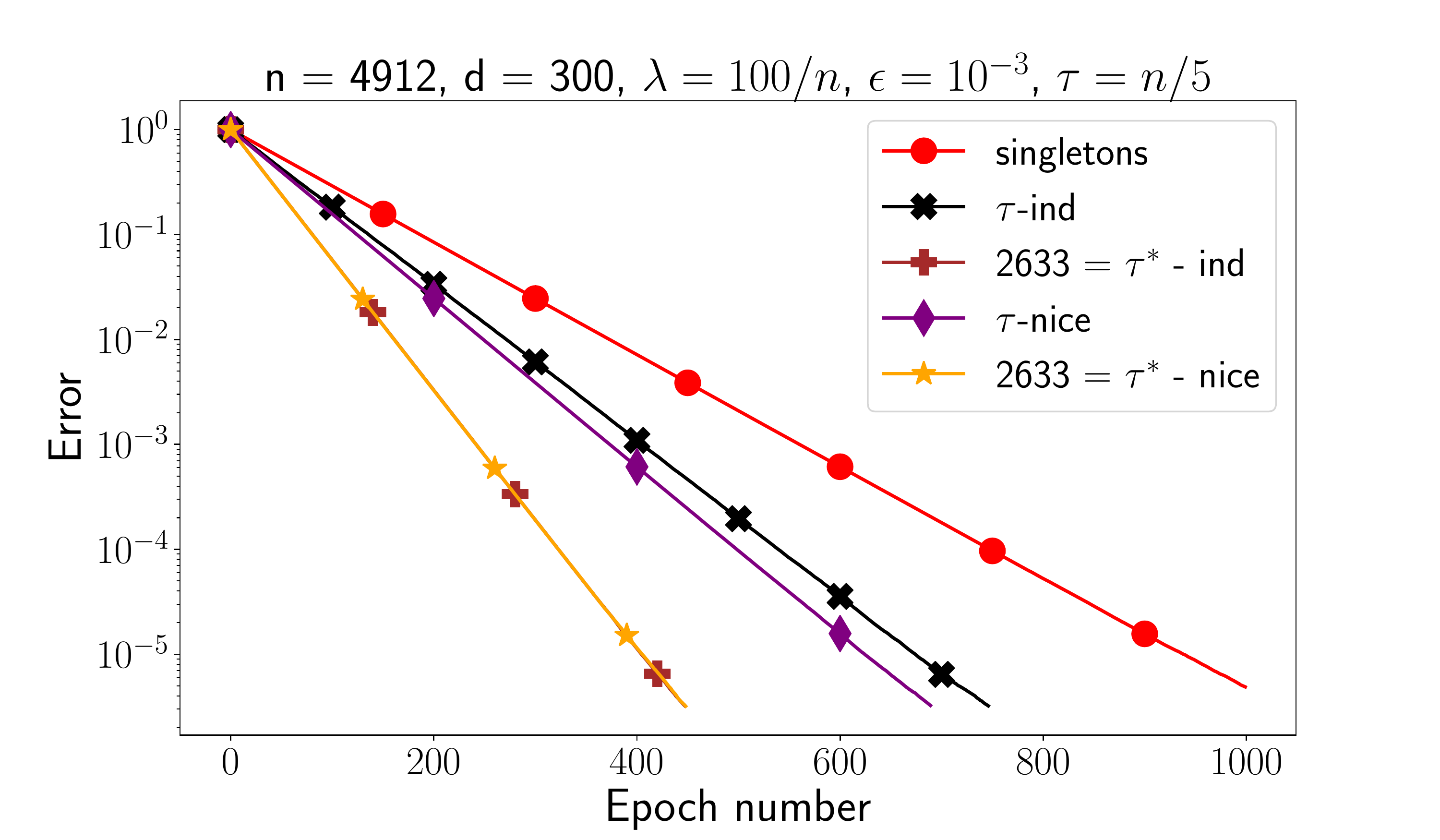}
\end{subfigure}  \\
\begin{subfigure}{.45\textwidth}
  \centering
  \includegraphics[width=1\linewidth]{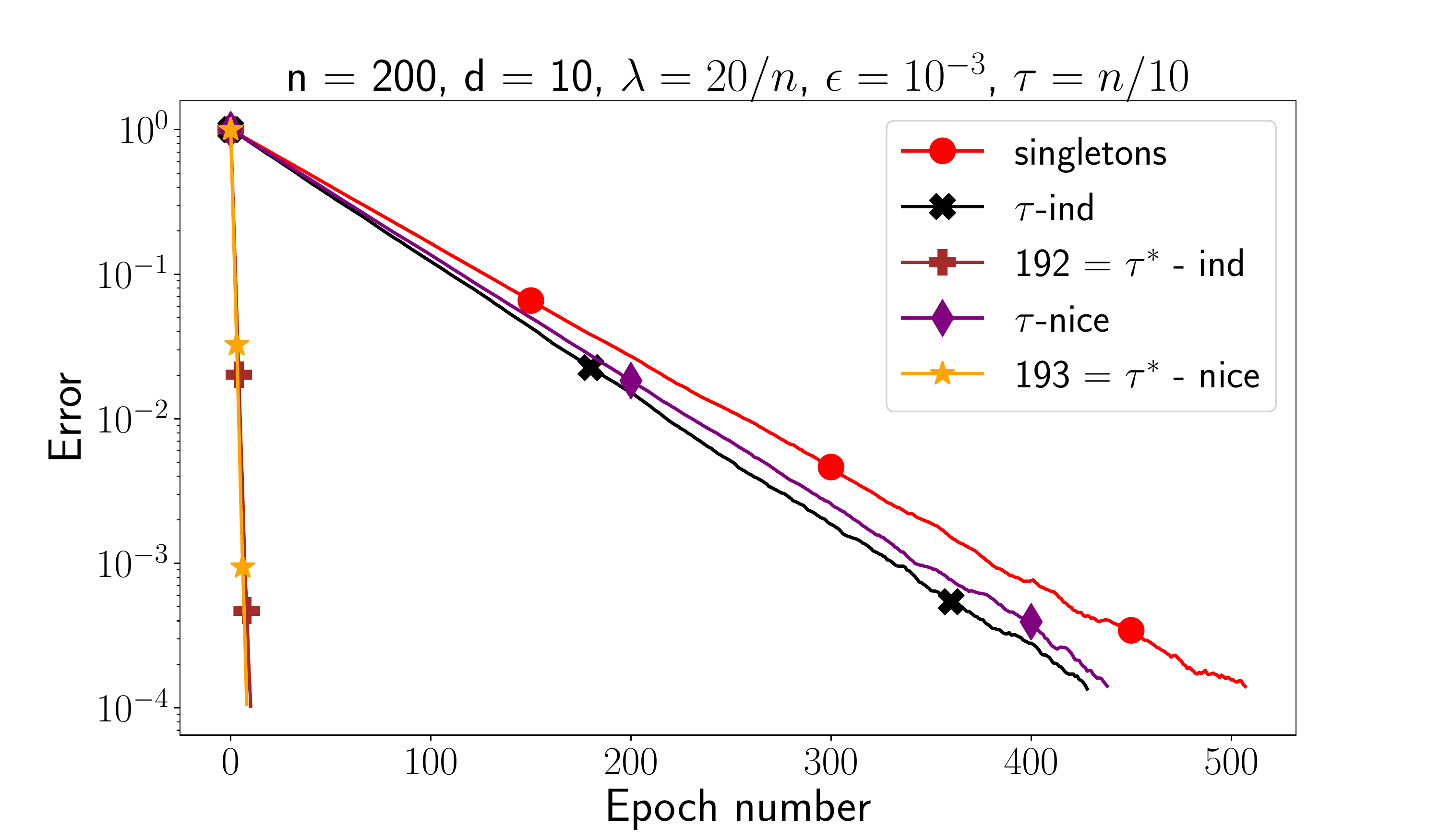}
\end{subfigure}
\caption{\footnotesize Performance of SGD with several minibatch strategies for logistic regression. Above: the w3a data-set from LIBSVM. Below: standard Gaussian data. }
\label{FigSamplings}
\end{figure}

\subsection{Minibatches} \label{sec:exp2}

In Figures~\ref{FigSamplings} and \ref{FigSamplingsXXX} we compare the single element sampling (uniform and importance), $\tau$ independent sampling (uniform, uniform with optimal batch size and importance) and $\tau$ nice sampling (with some $\tau$ and with optimal $\tau^*$).
The probabilities of importance samplings in the single element sampling and $\tau$ independent sampling are calculated by formulas (\ref{eq:optimal_prob_sing}) and (\ref{eq: optimal_prob_independ}) in the Appendix. 
Formulas for optimal minibatch size $\tau^*$ in independent sampling and $\tau$-nice samplings are given in (\ref{eq:tau_optimal_independent}) and (\ref{eq:tau_optimal_nice}), respectively. Observe that minibatching with optimal $\tau^*$ gives the best convergence. In addition, note that for constant step size, the importance sampling variants depend on the accuracy $\epsilon$. From Figure~\ref{FigSamplings} we can see that before the error reaches the required accuracy, the importance sampling variants are comparable or better than their coresponding uniform sampling variants. 

\subsection{Sum-of-non-convex functions}\label{sec:exp3}

In Figure~\ref{FigPCA}, our goal is to illustrate that Theorem~\ref{theo:strcnvlin} holds even if the functions $f_i$ are non convex. This experiment is based on the experimental setup given in~\cite{allen2016improved}.
We first generate random vectors $a_1, \dots, a_n, b \in \mathbb{R}^d$ from $\mathcal{U}(0, 10)$  and set
$ \mathbf{A} \eqdef \frac{1}{n} \sum_{i=1}^n a_i a_i^\top $. Then we consider the problem:
\[
\min_x  f(x) =\frac{1}{2n} \sum_{i=1}^n x^\top (a_i a_i^\top + D_i) x  + b^\top x,
\]
where $D_i,$ $i \in [n]$ are diagonal matrices satisfying $D\eqdef D_1 + \cdots + D_n = 0$. 
In particular, to guarantee that $D=0$, we randomly select half of the matrices and assign their $j$-th diagonal value
$(D_i)_{jj}$ equal to $11$; for the other half we assign $(D_i)_{jj}$ to be $-11$. We repeat that for all diagonal values.
Note that under this construction, each $f_i$ is a non-convex function.
Once again,  in the first plot we observe that while both are equally fast in the beginning, the decreasing stepsize variant is better at reaching higher accuracy than the fixed stepsize variant. In the second plot we see, as expected, that all four minibatch versions of SGD outperform single element SGD. However, while  the $\tau$-nice and $\tau$-independent samplings with $\tau=n/5$ lead to a slight improvement only, the  theoretically optimal choice $\tau=\tau^*$ leads to a vast improvement. 

\begin{figure}[th]
\centering
\begin{subfigure}{.45\textwidth}
  \centering
  \includegraphics[width=1\linewidth]{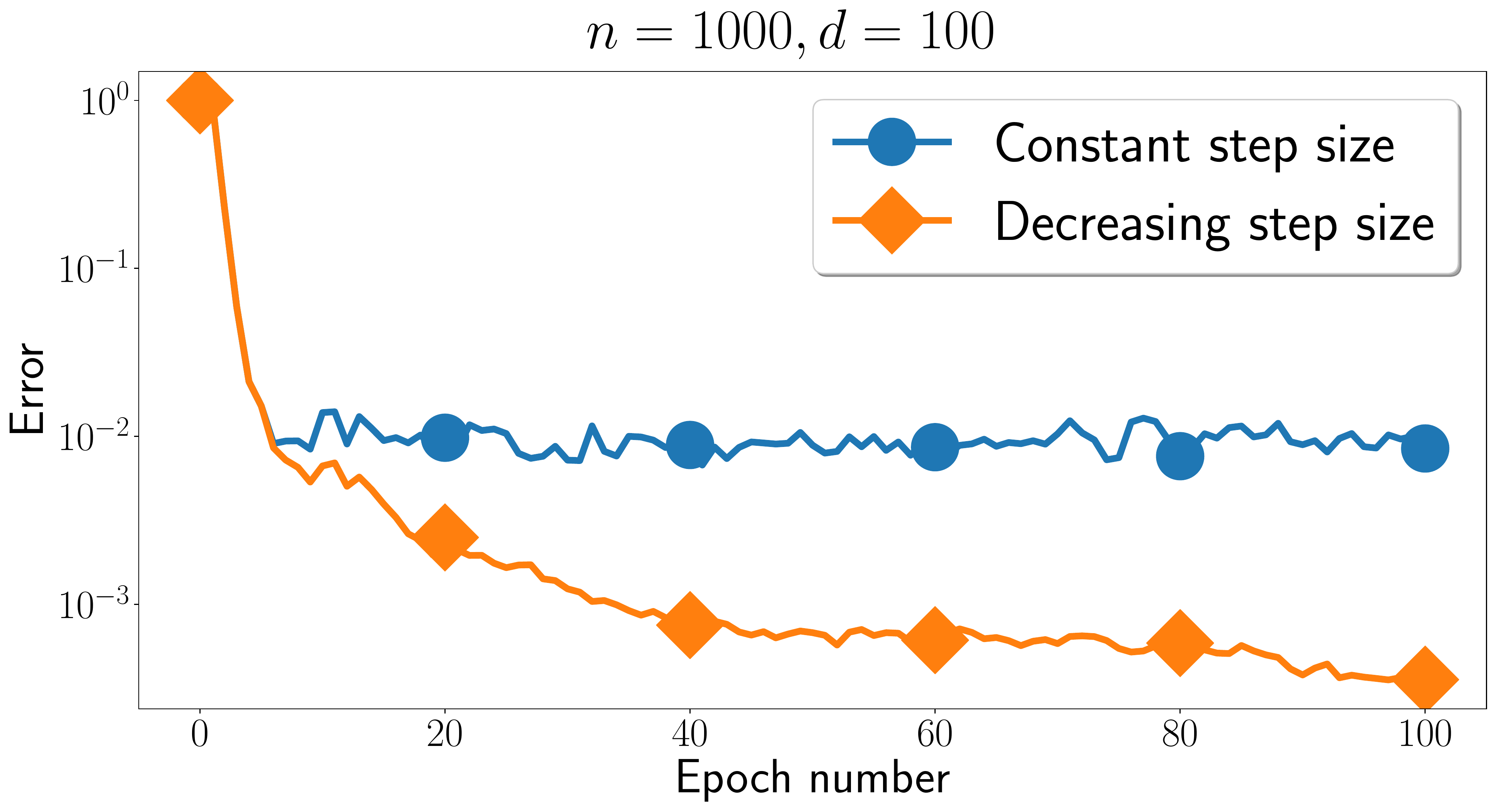}
\end{subfigure}\\
\begin{subfigure}{.45\textwidth}
  \centering
  \includegraphics[width=1\linewidth]{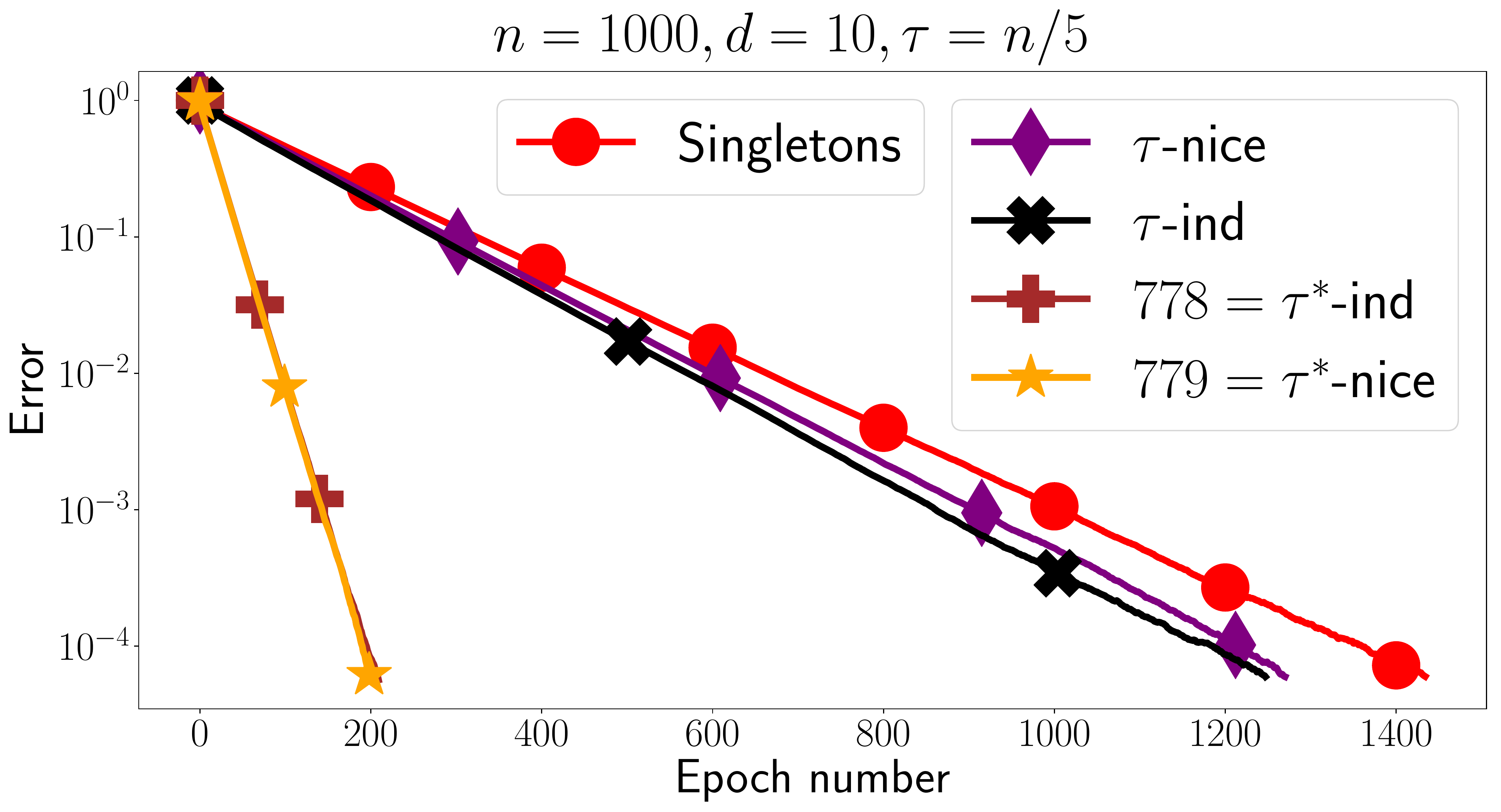}
\end{subfigure}\\
\caption{Above: Comparison between constant and decreasing step size regimes of SGD for PCA. Below:  comparison of different sampling strategies of SGD for PCA.}
\label{FigPCA}
\end{figure}


\section*{Acknowledgements}
RMG acknowledges the support by a public grant as part of the Investissement d'avenir project, reference ANR-11-LABX-0056-LMH, LabEx LMH, in a joint call with Gaspard Monge Program for optimization, operations research and their interactions with data sciences.

{
\footnotesize
\bibliographystyle{icml2019blank}
\bibliography{biblio1}
}

\appendix 

\onecolumn
\icmltitle{APPENDIX \\ SGD: General Analysis and Improved Rates}

\section{Elementary Results}

In this section we collect some elementary results; some of them we use repeatedly. 

\begin{proposition}\label{prop:ihs9hd} Let $\phi:\R^d \to \R$ be $L_\phi$--smooth, and assume it has a minimizer $x^*$ on $\R^d$. Then \[\|\nabla \phi(x) - \nabla \phi(x^*)\|^2 \leq 2 L_\phi (\phi(x)-\phi(x^*)).\] 
\end{proposition}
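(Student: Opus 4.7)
The plan is to use the descent lemma, which follows from $L_\phi$-smoothness alone (no convexity needed), combined with the first-order optimality condition at $x^*$. Since $x^*$ is an unconstrained minimizer of $\phi$ on $\R^d$, we have $\nabla \phi(x^*) = 0$, so the inequality reduces to showing $\|\nabla \phi(x)\|^2 \leq 2 L_\phi (\phi(x) - \phi(x^*))$.

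First I would recall that $L_\phi$-smoothness of $\phi$ (i.e., $\nabla \phi$ being $L_\phi$-Lipschitz) yields, via the integral form of Taylor's theorem, the quadratic upper bound
\[
\phi(y) \leq \phi(x) + \langle \nabla \phi(x), y - x \rangle + \frac{L_\phi}{2}\|y - x\|^2, \qquad \forall x,y \in \R^d.
\]
This is the only consequence of smoothness I will need, and crucially it does not require $\phi$ to be convex.

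Next, I would choose the specific point $y = x - \frac{1}{L_\phi}\nabla\phi(x)$, which is the minimizer of the right-hand side viewed as a quadratic in $y$. Plugging in gives
\[
\phi\!\left(x - \tfrac{1}{L_\phi}\nabla\phi(x)\right) \leq \phi(x) - \frac{1}{2L_\phi}\|\nabla\phi(x)\|^2.
\]
Since $x^*$ is a global minimizer, $\phi(x^*) \leq \phi\!\left(x - \tfrac{1}{L_\phi}\nabla\phi(x)\right)$, so rearranging yields $\|\nabla\phi(x)\|^2 \leq 2L_\phi(\phi(x) - \phi(x^*))$. Combined with $\nabla\phi(x^*)=0$ this is exactly the claim.

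There is no real obstacle here; the only subtlety worth flagging is that one must resist the temptation to invoke the more familiar co-coercivity inequality $\langle \nabla\phi(x)-\nabla\phi(y), x-y\rangle \geq \frac{1}{L_\phi}\|\nabla\phi(x)-\nabla\phi(y)\|^2$, which does require convexity. The gradient-step substitution sidesteps convexity entirely and makes the argument work for the non-convex $\phi$ used in the preceding example.
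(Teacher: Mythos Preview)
Your proof is correct and follows essentially the same route as the paper: apply the descent lemma, substitute the gradient step $y = x - \tfrac{1}{L_\phi}\nabla\phi(x)$ (the paper writes this as $h = -\tfrac{1}{L_\phi}\nabla\phi(x)$), bound below by $\phi(x^*)$, and use $\nabla\phi(x^*)=0$. Your added remark that co-coercivity would require convexity whereas this argument does not is a nice clarification the paper leaves implicit.
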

\begin{proof} Lipschitz continuity of the gradient implies that
\[\phi(x+h) \leq \phi(x) + \dotprod{\nabla \phi(x), h} + \frac{L_\phi}{2}\|h\|^2.\]
Now plugging $h=-\frac{1}{L_\phi} \nabla \phi(x)$ into the above inequality, we get $\frac{1}{2 L_\phi} \|\nabla \phi(x)\|^2 \leq \phi(x)-\phi(x+h) \leq  \phi(x)-\phi(x^*)$. It remains to note that $\nabla \phi(x^*)=0$.
\end{proof}

In this section we summarize some elementary results which we  use often in our proofs. We do not claim novelty; we but we include them for completeness and clarity.

\begin{lemma}[Double counting]\label{lem:2count}
Let $a_{i,C} \in \R$ for $i=1,\ldots, n$ and $C \in \cC$, where $\cC$ is some collection of subsets of $[n]$. Then
\begin{equation}
\sum_{C \in \cC} \sum_{i\in C} a_{i,C} \quad 
=\quad   \sum_{i=1}^n \sum_{C \in \cC \; : \; i \in C} a_{i,C}.
\end{equation}
\end{lemma}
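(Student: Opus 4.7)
The plan is to prove this by rewriting both sides as a single sum over the same set of index pairs, namely
\[
\mathcal{I} \eqdef \{(i,C) : C \in \cC,\ i \in [n],\ i \in C\},
\]
and then reindexing. The identity is essentially a finite Fubini / swap-of-summation-order statement, so there is no analytic subtlety: all sums are finite.

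Concretely, I would introduce the indicator $\mathbf{1}_{i \in C}$ and write the left-hand side as
\[
\sum_{C \in \cC} \sum_{i \in C} a_{i,C} \;=\; \sum_{C \in \cC} \sum_{i=1}^n \mathbf{1}_{(i \in C)}\, a_{i,C}.
\]
Since the collection $\cC$ is a set (or at worst, every term is well-defined and the sum is absolutely summable by virtue of being finite), I can interchange the two finite sums to obtain
\[
\sum_{C \in \cC} \sum_{i=1}^n \mathbf{1}_{(i \in C)}\, a_{i,C} \;=\; \sum_{i=1}^n \sum_{C \in \cC} \mathbf{1}_{(i \in C)}\, a_{i,C} \;=\; \sum_{i=1}^n \sum_{\substack{C \in \cC \\ i \in C}} a_{i,C},
\]
which is the right-hand side.

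There is no real obstacle here: the only thing to be slightly careful about is that $\cC$ should be a genuine (multi-)set indexing so that the interchange is unambiguous; since the rest of the paper only applies this lemma with $\cC$ a finite collection of subsets of $[n]$, finiteness is automatic and the identity follows without any further assumption. Thus the whole proof reduces to introducing the indicator, swapping the two finite sums, and collapsing the indicator back into the restricted inner summation.
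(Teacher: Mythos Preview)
Your proof is correct: introducing the indicator $\mathbf{1}_{(i\in C)}$ and swapping the two finite sums is exactly the right argument. The paper itself does not supply a proof of this lemma (it is stated as an elementary result and used without justification), so there is nothing further to compare.
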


\begin{lemma}[Complexity bounds] \label{lem:complexbndlog}
Let $E>0$, $\;0< \rho \leq 1$ and $ 0 \leq c<1$.
If $k \in \mathbb{N}$ satisfies
\begin{equation}\label{eq:oak938jfse}
 k \geq \frac{1}{1-\rho} \log\left(\frac{ E}{ (1-c)  }\right)  , 
\end{equation}
then
\begin{equation} \label{eq:acb378babuhi3A3}
\rho^k \leq  (1-c) E.
\end{equation}
\end{lemma}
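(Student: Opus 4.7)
The plan is to prove this elementary complexity bound via the standard logarithmic inequality $\log t \leq t - 1$, valid for all $t > 0$, applied at $t = \rho$. This is essentially the only analytic ingredient; everything else is bookkeeping on the hypothesis \eqref{eq:oak938jfse}.

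First I would rewrite the target bound in logarithmic form. Taking the logarithm of both sides of $\rho^k \leq (1-c)E$ and noting that $\log \rho \leq 0$ (since $0 < \rho \leq 1$), the conclusion becomes the equivalent linear inequality $-k\log\rho \geq \log\bigl(E/(1-c)\bigr)$ (up to a sign arrangement that I would line up carefully, since we are dividing by a non-positive quantity). The strategy is thus to lower-bound $-k\log\rho$ using the hypothesis.

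Next I would invoke $\log\rho \leq \rho - 1$, which after negation yields $-\log\rho \geq 1 - \rho$. Multiplying by $k \geq 0$ (and noting $1 - \rho \geq 0$) gives $-k\log\rho \geq k(1-\rho)$. Combining this with the hypothesis
\begin{equation*}
k(1-\rho) \;\geq\; \log\!\left(\frac{E}{1-c}\right),
\end{equation*}
we obtain $-k\log\rho \geq \log\bigl(E/(1-c)\bigr)$. Exponentiating then delivers the claim.

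The step I expect to be the only mild nuisance is sign-tracking: because $\rho \leq 1$ the quantity $\log \rho$ is non-positive, so directions of inequalities flip when we pass between $\rho^k$ and $k\log\rho$, and one has to be careful when $\rho = 1$ (where $1-\rho = 0$ makes the hypothesis vacuous unless the conclusion is trivially satisfied, e.g., when $E \leq 1-c$, so this boundary case needs a separate line). Apart from this, the proof is a one-line application of the log inequality and no deeper machinery is required.
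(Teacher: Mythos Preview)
Your argument is correct and is essentially identical to the paper's proof: the paper rewrites the target as $\log\bigl(E/(1-c)\bigr) \leq k\log(1/\rho)$ and then invokes $\log(1/\rho) \geq 1-\rho$, which is exactly your inequality $-\log\rho \geq 1-\rho$ in different notation. The paper does not comment on the degenerate case $\rho=1$ that you flagged; your remark there is an extra observation, not a departure in method.
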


\begin{proof}
Taking logarithms and rearranging~\eqref{eq:acb378babuhi3A3}  gives
\begin{equation}
  \log\left(\frac{ E}{ 1-c  }\right) \leq k\log\left(\frac{1}{\rho}\right).\label{eq:acb378babuhi3A32}
\end{equation}
Now using that $\log\left(\frac{1}{\rho}\right) \geq 1-\rho,$ for $0<\rho \leq 1$ gives~\eqref{eq:oak938jfse}. 
\end{proof}

\subsection{The iteration complexity~\eqref{eq:itercomplexlin} of Theorem~\ref{theo:strcnvlin} } \label{sec:itercomplextheo}

To analyse the iteration complexity, let $\epsilon >0$ 
and choosing the stepsize so that
$
\frac{2\gamma \sigma^2}{\mu} \leq \frac{1}{2}\epsilon,
$ 
gives~\eqref{eq:stepbndmax}.
Next we choose $k$ so that 
\[\left( 1 - \gamma \mu \right)^k \|r^0\|^2 \leq \frac{1}{2}\epsilon. \]
Taking logarithms and re-arranging the above gives
\begin{equation}
  \log\left(\frac{2 \|r^0\|^2}{  \epsilon }\right) \leq k\log\left(\frac{1}{1 - \gamma \mu }\right).\label{eq:acb378babuhi3}
\end{equation}
Now using that $\log\left(\frac{1}{\rho}\right) \geq 1-\rho,$ for $0<\rho \leq 1$ gives
\begin{eqnarray}
k &\geq & \frac{1}{\gamma\mu }  \log\left(\frac{ 2 \|r^0\|^2 }{  \epsilon }\right) \nonumber \\
& \overset{\eqref{eq:stepbndmax}}{=} & \frac{1}{\mu } \max \left\{ 2\cL,\; \frac{4 \sigma^2}{\epsilon\mu}\right\} \log\left(\frac{ 2 \|r^0\|^2 }{  \epsilon }\right).
\end{eqnarray}
Which concludes the proof.

\section{Proof of Lemma~\ref{lem:weakgrowth}}

For brevity, let us write
 $\Exp[\cdot]$ instead of $\Exp_{\cD}[\cdot]$. Then
 \begin{align*}
\Exp \|\nabla f_{v} (x)\|^2 &= \Exp \| \nabla f_v(x) - \nabla f_{v}(x^*) + \nabla f_{v}(x^*) \|^2 \\
&\vspace{-1cm}\leq  2 \Exp \| \nabla f_{v}(x) - \nabla f_{v}(x^*)\|^2 
 + 2  \Exp \|\nabla f_{v}(x^*)\|^2 \notag\\ 
&\leq 4 \cL [f(x)-f(x^*)] + 2 \Exp \|\nabla f_{v}(x^*)\|^2.
\end{align*}
The first inequality follows from the estimate $\|a+b\|^2 \leq 2\|a\|^2 + 2\|b\|^2$, and the second inequality follows from \eqref{eq:expsmooth}.

\section{Proof of Theorem~\ref{theo:decreasingstep} }

\begin{proof}
Let  $\gamma_k \eqdef \frac{2k+1}{(k+1)^2 \mu}$ and let $k^*$ be an integer that satisfies
$
\gamma_{k^*} \leq \frac{1}{2\cL}.
$ 
In particular this holds for
 \[k^* \geq  \lceil 4\mathcal{K} -1\rceil.\] 
Note that $\gamma_k$ is decreasing in $k$ and  consequently $\gamma_k \leq \frac{1}{2\cL}$ for all $k \geq k^*.$ This in turn guarantees that~\eqref{eq:la9ja38jf}  holds for all $k\geq k^*$ with $\gamma_k$ in place of $\gamma$, that is
\begin{equation}
\mathbb{E}\| r^{k+1}\|^2 \leq \frac{k^2}{(k+1)^2}\mathbb{E} \|r^k\|^2 + \frac{2\sigma^2}{\mu^2}\frac{(2k+1)^2}{(k+1)^4 }.
\end{equation}
Multiplying both sides by $(k+1)^2$ we obtain
\begin{eqnarray*}
(k+1)^2 \mathbb{E}\| r^{k+1}\|^2 &\leq & 
k^2 \mathbb{E} \|r^k\|^2 + \frac{2\sigma^2}{\mu^2} \left(\frac{2k+1}{k+1}\right)^2 \\
 &\leq & k^2 \mathbb{E} \|r^k\|^2 + \frac{8 \sigma^2}{\mu^2},
\end{eqnarray*}
where the second inequality holds because  $\frac{2k+1}{k+1} <2$. Rearranging and summing from $t= k^* \ldots k$ we obtain:
\begin{equation}
\sum_{t=k^*}^{k} \left[ (t+1)^2 \mathbb{E}\| r^{t+1}\|^2 - t^2 \mathbb{E} \|r^t\|^2 \right] \leq  \sum_{t=k^*}^{k} \frac{8 \sigma^2}{\mu^2}. 
\end{equation}
Using telescopic cancellation gives
\[
(k+1)^2 \mathbb{E}\| r^{k+1}\|^2 \leq  (k^*)^2 \mathbb{E} \|r^{k^*}\|^2 +\frac{8 \sigma^2 (k-k^*)}{\mu^2}.
\]
Dividing the above by $(k+1)^2$ gives
\begin{equation}
 \mathbb{E}\| r^{k+1}\|^2 \leq  \frac{(k^*)^2}{(k+1)^2 } \mathbb{E} \|r^{k^*}\|^2 +\frac{8 \sigma^2 (k-k^*)}{\mu^2(k+1)^2 }. \label{eq:cndsiu48js}
\end{equation}
For $k \leq k^*$ we have that~\eqref{eq:la9ja38jf} holds, which combined with~\eqref{eq:cndsiu48js}, gives 
\begin{eqnarray}
 \mathbb{E}\| r^{k+1}\|^2 &\leq &
  \frac{(k^*)^2}{(k+1)^2 } \left( 1 -  \frac{\mu}{2\cL} \right)^{k^*} \|r^{0}\|^2 \nonumber \\ &   
  +&\frac{\sigma^2 }{\mu^2 (k+1)^2}\left(8 (k-k^*) +   \frac{(k^*)^2}{\mathcal{K} } \right).  \label{eq:sdaiuna3}
\end{eqnarray}
Choosing $k^*$ that minimizes the second line of the above gives $k^* = 4\lceil\mathcal{K} \rceil$, which when inserted into~\eqref{eq:sdaiuna3} becomes
\begin{eqnarray}
 \mathbb{E}\| r^{k+1}\|^2 &\leq &
  \frac{16 \lceil\mathcal{K} \rceil^2}{(k+1)^2 } \left( 1 -  \frac{1}{2\mathcal{K}} \right)^{ 4\lceil\mathcal{K} \rceil} \|r^0\|^2  \nonumber \\
 & & +\frac{\sigma^2 }{\mu^2 }\frac{8 (k-2\lceil\mathcal{K} \rceil)}{(k+1)^2} \nonumber \\
  & \leq &  \frac{16 \lceil\mathcal{K} \rceil^2}{e^2(k+1)^2 }  \|r^{0}\|^2  +  \frac{\sigma^2 }{\mu^2 }\frac{8 }{k+1}, \label{eq:sdaiuna32}
\end{eqnarray}
where we have used that $\left( 1 -  \frac{1}{2x} \right)^{ 4x} \leq e^{-2}$  for all $x \geq 1.$

\end{proof}

\section{Proof of Theorem~\ref{theo:expsmoothas}}

\begin{proof}
Since $v_i=v_i(S) = \mathbf{1}_{(i\in S)}\frac{1}{p_i}$.
and since  $f_i$ is $\mM_i$-smooth, the function
	\begin{equation}\label{eq:fvas1}
	f_v(x) = \frac{1}{n} \sum_{i=1}^n f_i(x)v_i = \frac{1}{n} \sum_{i \in S}\frac{f_i(x)}{p_i},
	\end{equation}
 is $L_S$--smooth where
 \[L_S\eqdef  \frac{1}{n}\lambda_{\max}\left(\sum_{i \in S} \frac{\mM_i}{p_i} \right).\] 

We also define the following smoothness related quantities
\begin{equation}\label{eq:smoothnessquants}
 \cL_i \eqdef \sum_{C\;:\; i\in C} \frac{p_C}{p_i} L_C,
  \;\quad \cL_{\max} \eqdef \max_i \cL_i, \;\mbox{and}
 ;\quad L_{\max} = \max_{i \in [n]} \lambda_{\max}(\mM_i).
\end{equation}
	Since the $f_i$'s are convex and the sampling vector $v \in \R^d_+$ has positive elements, each realization of $f_v$ is convex and smooth, thus it follows from equation (2.1.7) in Theorem 2.1.5 in~\cite{nesterov2013introductory}  that 
	\begin{equation}\label{eq:fvas3}
	\|\nabla f_v(x) - \nabla f_v(y)\|^2 \leq 2L_S\left( f_v(x) - f_v(y) - \langle \nabla f_v(y), x-y \rangle \right).
	\end{equation}
	Taking expectation in (\ref{eq:fvas3}) gives 
	\begin{eqnarray*}
		\mathbb{E} [\|\nabla f_v(x) - \nabla f_v(y)\|^2 ] &\leq& 2\sum_{C} p_CL_C \left( f_{v(C)}(x) - f_{v(C)}(y) - \langle \nabla f_{v(C)}(y), x-y \rangle \right) \\
		&\overset{\eqref{eq:fvas1}}{=}& 2\sum_C p_C L_C \sum_{i \in C} \frac{1}{np_i} \left( f_i(x) - f_i(y) - \langle \nabla f_i(y), x-y \rangle \right) \\
		&\overset{\scriptsize \mbox{Lemma}~\ref{lem:2count} }{=}& \frac{2}{n}\sum_{i=1}^n \sum_{C: i\in C} p_C\frac{1}{p_i} L_C \left( f_i(x) - f_i(y) - \langle \nabla f_i(y), x-y \rangle \right) \\
		&\overset{(\ref{eq:expsmoothas})}{\leq}& \frac{2}{n} \sum_{i=1}^n \cL_{\max} \left( f_i(x) - f_i(y) - \langle \nabla f_i(y), x-y \rangle \right) \\
		&=& 2{\cal L}_{\max} \left( f(x) - f(y) - \langle \nabla f(y), x-y \rangle  \right).
	\end{eqnarray*}
	
Furthermore, for each $i$, 
\begin{eqnarray}
\cL_i =	\sum_{C: i\in C}\frac{p_C}{p_i}L_C  &=& \frac{1}{n}\sum_{C: i\in C}\frac{p_C}{p_i}\lambda_{\max}\left(\sum_{j\in C}\frac{\mM_j}{p_j} \right) \label{eq:tempcalcneed}\\
	&\leq&  \frac{1}{n} \sum_{C:i\in C}\frac{p_C}{p_i} \sum_{j\in C} \frac{\lambda_{\max}(\mM_j)}{p_j} \nonumber \\
	&\overset{\text{Lemma}~\ref{lem:2count} }{=}& \frac{1}{n} \sum_{j=1}^n \sum_{C: i\in C \ \& \ j\in C} \frac{p_C}{p_ip_j} \lambda_{\max}(\mM_j)  \nonumber\\
	&=& \frac{1}{n} \sum_{j=1}^n\frac{\mP_{ij}}{p_ip_j} \lambda_{\max}(\mM_j).  \nonumber
\end{eqnarray}
Hence, 
\begin{equation}\label{eq:lmaxiupbound}
\cL_{\max} \leq \frac{1}{n} \max_{i\in [n]} \left\{ \sum_{j\in [n]}\mP_{ij}\frac{\lambda_{\max}(\mM_j)}{p_ip_j} \right\}. 
\end{equation}

Let $y=x^*$ and notice that $\nabla f(x^*) = 0$, which gives~\eqref{eq:expsmoothas}.
We prove~\eqref{eq:cLmaxbound} in the following slightly more comprehensive Lemma~\ref{lem:98s9g8s}.
\end{proof}

\section{Bounds on the Expected Smoothness Constant $\cL$}

Below we establish some lower and upper bounds on the expected smoothness constant $\cL = \cL_{\max}$. These bounds were referred to in the main paper in Section~\ref{sec:bifg79f9d}. We also make use of notation introduced in Section~\ref{sec:cLandsigma}.

\begin{lemma}\label{lem:98s9g8s} Assume that there exists $\tau\in [n]$ such that $|S|=\tau$ with probability 1. Let 
\[\cL_i \eqdef \E{L_S \;|\; i\in S} = \sum_{C\;:\; i\in C} \frac{p_C}{p_i} L_C,\]
 and
\[\bar{\cL}_S \eqdef \frac{1}{|S|}\sum_{i\in S} \cL_i.\]
 Then $\E{\bar{\cL}_S} = \E{L_S}$. Moreover,
\begin{equation}\label{eq:cLmaxbigbound}
 L \leq \E{\bar{\cL}_S} \leq \cL_{\max} \leq L_{\max}.\end{equation}
\end{lemma}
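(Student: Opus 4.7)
My plan is to verify the three nontrivial assertions in the chain \eqref{eq:cLmaxbigbound}---the identity $\E{\bar{\cL}_S} = \E{L_S}$, the lower bound $L \leq \E{L_S}$, and the upper bound $\cL_{\max} \leq L_{\max}$---since $\E{\bar{\cL}_S} \leq \cL_{\max}$ is immediate from $\bar{\cL}_S \leq \cL_{\max}$ pointwise.

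For the identity, I would first expand $\bar{\cL}_S = \tfrac{1}{\tau}\sum_{i\in S}\cL_i$ and take expectation using $\Prob{i\in S}=p_i$ to get $\E{\bar{\cL}_S} = \tfrac{1}{\tau}\sum_{i=1}^n p_i \cL_i$. Substituting the closed form $p_i \cL_i = \sum_{C:i\in C}p_C L_C$ and swapping the two sums by the double-counting identity of Lemma~\ref{lem:2count} rewrites the right-hand side as $\tfrac{1}{\tau}\sum_C p_C L_C |C|$. Because $|S|\equiv \tau$, every $C$ in the support of the sampling has $|C|=\tau$, so this collapses to $\sum_C p_C L_C = \E{L_S}$.

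For the lower bound $L \leq \E{L_S}$, the idea is a one-line Jensen argument on the convex map $\lambda_{\max}(\cdot)$ on symmetric matrices. The sampling-vector identity $\E{v_i}=1$ gives $\E{\sum_{j\in S}\mM_j/p_j}=\sum_j\mM_j$, so Jensen yields $\E{\lambda_{\max}(\sum_{j\in S}\mM_j/p_j)} \geq \lambda_{\max}(\sum_j\mM_j)= nL$, and dividing by $n$ produces $\E{L_S}\geq L$, hence also $\E{\bar{\cL}_S}\geq L$ by the identity just established.

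The inequality $\cL_{\max}\leq L_{\max}$ is the main obstacle. My plan is to bound $L_C \leq (L_{\max}/n)\sum_{j\in C}1/p_j$ using $\mM_j \preceq L_{\max}\bI$ and the subadditivity/monotonicity of $\lambda_{\max}$ on PSD matrices, then substitute into $\cL_i = \sum_{C:i\in C}(p_C/p_i)L_C$ and invoke Lemma~\ref{lem:2count} again to reach $\cL_i \leq (L_{\max}/(n p_i))\sum_j \mP_{ij}/p_j$. The problem then reduces to verifying $\sum_j \mP_{ij}/(p_i p_j)\leq n$, equivalently $\E{\sum_{j\in S}1/p_j\mid i\in S}\leq n$. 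The tools at hand are the unconditional identity $\E{\sum_{j\in S}1/p_j}=n$ and the consequence $\sum_j \mP_{ij}=\tau p_i$ of the constant-cardinality hypothesis; I expect closing the conditional version to be the delicate step, most naturally handled either by an exchangeability argument or by specializing to uniform marginals $p_i=\tau/n$, which is exactly the regime (covering $\tau$-nice and balanced partition sampling) where \eqref{eq:cLmaxbound} is used in the main text.
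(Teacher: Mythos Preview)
Your treatment of the identity $\E{\bar{\cL}_S}=\E{L_S}$, the Jensen lower bound $L\leq\E{L_S}$, and the pointwise inequality $\bar{\cL}_S\leq\cL_{\max}$ coincides with the paper's argument essentially line for line.

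For the upper bound $\cL_{\max}\leq L_{\max}$ the paper takes a shorter route than yours. Rather than bounding $L_C\leq (L_{\max}/n)\sum_{j\in C}1/p_j$ and then reducing to the conditional identity $\sum_j \mP_{ij}/(p_ip_j)\leq n$, the paper asserts the direct bound
\[
L_C \;\leq\; \frac{1}{\tau}\sum_{j\in C}L_j \;\leq\; L_{\max},
\]
after which $\cL_i \leq L_{\max}\sum_{C:i\in C}p_C/p_i = L_{\max}$ follows in one line. This avoids the probability-matrix detour entirely.

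That said, your hesitation is well placed. The paper's displayed bound $L_C\leq \tfrac{1}{\tau}\sum_{j\in C}L_j$ unpacks to $\lambda_{\max}\big(\sum_{j\in C}\mM_j/p_j\big)\leq \tfrac{n}{\tau}\sum_{j\in C}L_j$, which is only guaranteed when $p_j=\tau/n$. In fact the claim $\cL_{\max}\leq L_{\max}$ itself fails under the stated hypothesis $|S|\equiv\tau$ alone: take $n=2$, $\tau=1$, $p_1=1/4$, $p_2=3/4$, $\mM_1=\mM_2=\bI$; then by \eqref{eq:lmaxi1} one gets $\cL_{\max}=2>1=L_{\max}$. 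So your instinct to specialize to uniform marginals $p_i=\tau/n$ is exactly right; under that assumption the paper's one-line bound $L_C\leq L_{\max}$ is the cleanest way to finish, and your probability-matrix condition $\sum_j\mP_{ij}/(p_ip_j)\leq n$ also holds (indeed with equality, since $\sum_j\mP_{ij}=\tau p_i$).
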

\begin{proof}


Define $\mM_S \eqdef \frac{1}{n}\sum_{i\in S} \frac{\mM_i}{p_i}$ and note that $f$ is $\frac{1}{n}\sum_{i \in [n]} \mM_i$--smooth.
 Furthermore
\[\E{\mM_S} = \frac{1}{n}\E{ \sum_{i=1}^n \frac{\mM_i}{p_i} \mathbf{1}_{(i\in S)}} =
\frac{1}{n} \sum_{i=1}^n \frac{\mM_i}{p_i} \E{\mathbf{1}_{(i\in S)}} = \frac{1}{n}\sum_{i \in [n]} \mM_i.\]
We will now establish the inequalities in~\eqref{eq:cLmaxbigbound} starting from left to the right.

{\bf (Part I $L \leq \E{L_S}$).} Recalling that $L_S = \lambda_{\max}(\mM_S)$ and by Jensen's inequality,
\[L=\lambda_{\max}\left(\E{\mM_S}\right) \leq \E{\lambda_{\max}(\mM_S)} = \E{L_S}.\]
Furthermore
\begin{align*}
\E{\bar{\cL}_S} &= \E{\frac{1}{\tau}\sum_{i\in S} \cL_i} = \frac{1}{\tau}\sum_i p_i \cL_i \\
& \overset{\eqref{eq:smoothnessquants}}{=} \frac{1}{\tau}\sum_i \sum_{C\;:\; i\in C} p_C L_i  \overset{\text{Lemma}~\ref{lem:2count} }{=} \frac{1}{\tau}\sum_{C} \sum_{i\in C} p_C L_C\\
& =\frac{1}{\tau}\sum_{C} |C| p_C L_C=\sum_{C}  p_C L_C = \E{L_S}
\end{align*}

{\bf (Part II $\E{\bar{L}_S} \leq \cL_{\max}$).} We have that
\[\bar{L}_S =  \frac{1}{|S|}\sum_{i\in S} \cL_i \leq \frac{1}{|S|} \sum_{i\in S}\max_{i\in [n]}\cL_i = \cL_{\max}. \]

{\bf (Part III $\cL_{\max} \leq L_{\max}$).} Finally, since 
\begin{equation}\label{eq:LCbound}
 L_C \leq \frac{1}{\tau}\sum_{j \in C} L_j \leq L_{\max},\end{equation}
we have that
\[ \cL_i \overset{\eqref{eq:smoothnessquants}+\eqref{eq:LCbound} }{\leq}  \sum_{C\;:\; i\in C} \frac{p_C}{p_i} \frac{1}{\tau}\sum_{j \in C} L_j \overset{\eqref{eq:LCbound} }{\leq} 
 \sum_{C\;:\; i\in C} \frac{p_C}{p_i} L_{\max} = L_{\max}. \]
 Consequently taking the maximum over $i\in[n]$ in the above gives $\cL_{\max} \leq L_{\max}.$
\end{proof}

\section{Proof of Proposition \ref{pro:lmaxi}}
\begin{proof}
First note that by combining~\eqref{eq:expsmoothas} and~\eqref{eq:tempcalcneed} we have that
	\begin{eqnarray}
		\cL_{\max} &\overset{\eqref{eq:expsmoothas}}{=} &
 \max_{i\in [n]} \left\{ \sum_{C: i\in C}\frac{p_C}{p_i}L_C \right\} \nonumber \\
	& \overset{\eqref{eq:tempcalcneed}}{=} &   \max_{i\in [n]} \left\{ 	 \frac{1}{n}\sum_{C: i\in C}\frac{p_C}{p_i}\lambda_{\max}\left(\sum_{j\in C}\frac{\mM_j}{p_j}\right) \right\}. \label{eq:otherLimaxbnd}
	\end{eqnarray}

	(i) By straight forward calculation from (\ref{eq:otherLimaxbnd}) and using that each set $C$ is a singleton. \\

	(ii) For every partition sampling we have that $p_i = p_C$ if $i\in C$, hence 
	\begin{eqnarray*}
		\cL_{\max} &\overset{\eqref{eq:otherLimaxbnd}}{=}&		
		\max_{i\in [n]} \left\{ 	 \frac{1}{n}\sum_{C: i\in C}\frac{p_i}{p_i}\lambda_{\max}\left(\sum_{j\in C}\frac{\mM_j}{p_C}\right) \right\} \\
		&\overset{\eqref{eq:tempcalcneed}}{=}& \frac{1}{n}\max_{i\in [n]} \left\{ \sum_{C: i\in C} \frac{1}{p_C}\lambda_{\max} (\sum_{j\in C}\mM_j) \right\} \\
		&=& \frac{1}{n} \max_{C\in {\cal G}} \left\{ \frac{1}{p_C} \lambda_{\max}(\sum_{j\in C}\mM_j) \right\}. 
	\end{eqnarray*}
	
\end{proof}

\section{Proof of Proposition \ref{pro:cL}} 
\label{sec:A:pro:cL}
\begin{proof}

 First, since $f_i$ is $L_i$-smooth  with $L_i = \lambda_{\max}({\bf M}_i)$ and convex,  it follows from equation (2.1.7) in Theorem 2.1.5 in~\cite{nesterov2013introductory}  that 
\begin{equation}\label{eq:procL1}
\|\nabla f_i(x) - \nabla f_i(y)\|^2  \leq 2L_i(f_i(x) - f_i(y) - \langle \nabla f_i(y), x-y\rangle ). 
\end{equation}

Since $f$ is $L$-smooth, we have 
\begin{equation}\label{eq:procL2}
\|\nabla f(x) - \nabla f(y)\|^2  \leq 2L(f(x) - f(y) - \langle \nabla f(y), x-y\rangle ). 
\end{equation}

Noticing that 
\begin{eqnarray*}
\|\nabla f_v(x) - \nabla f_v(y)\|^2 &=& \frac{1}{n^2} \left \|\sum_{i\in S}\frac{1}{p_i}(\nabla f_i(x) - \nabla f_i(y)) \right \|^2 \\
&=& \sum_{i,j\in S} \left\langle \frac{1}{np_i}(\nabla f_i(x) - \nabla f_i(y)), \frac{1}{np_j}(\nabla f_j(x) - \nabla f_j(y)) \right\rangle,
\end{eqnarray*}
we have 
\begin{eqnarray*}
\mathbb{E}[\|\nabla f_v(x) - \nabla f_v(y)\|^2] &=& \sum_C p_C  \sum_{i,j\in C} \left\langle \frac{1}{np_i}(\nabla f_i(x) - \nabla f_i(y)), \frac{1}{np_j}(\nabla f_j(x) - \nabla f_j(y)) \right\rangle \\ 
&=& \sum_{i, j=1}^n \sum_{C: i,j\in C }p_C  \left\langle \frac{1}{np_i}(\nabla f_i(x) - \nabla f_i(y)), \frac{1}{np_j}(\nabla f_j(x) - \nabla f_j(y)) \right\rangle \\ 
&=& \sum_{i, j=1}^n \frac{\mP_{ij}}{p_ip_j} \left\langle \frac{1}{n}(\nabla f_i(x) - \nabla f_i(y)), \frac{1}{n}(\nabla f_j(x) - \nabla f_j(y)) \right\rangle. 
\end{eqnarray*}

Now consider the case where $\mP_{ij}/(p_ip_j) = c_2$ for $i \neq j.$ Recalling that $\mP_{ii} =p_i$ we have from the above that

\begin{eqnarray*}
\mathbb{E}[\|\nabla f_v(x) - \nabla f_v(y)\|^2] &=& 
\sum_{i \neq j} c_2 \left\langle \frac{1}{n}(\nabla f_i(x) - \nabla f_i(y)), \frac{1}{n}(\nabla f_j(x) - \nabla f_j(y)) \right\rangle + \sum_{i=1}^n\frac{1}{n^2} \frac{1}{p_i} \norm{\nabla f_i(x) - \nabla f_i(y))}_2^2 \\
&= &  \sum_{i,j=1}^n c_2 \left\langle \frac{1}{n}(\nabla f_i(x) - \nabla f_i(y)), \frac{1}{n}(\nabla f_j(x) - \nabla f_j(y)) \right\rangle \\
& &+ \sum_{i=1}^n\frac{1}{n^2} \frac{1}{p_i}\left(1 -p_ic_2 \right) \norm{\nabla f_i(x) - \nabla f_i(y))}_2^2\\
& \overset{\eqref{eq:procL1}}{\leq} &  c_2 \norm{\nabla f(x) - \nabla f(y)}_2^2 \\
& &+ 2 \sum_{i=1}^n\frac{1}{n^2} \frac{L_i}{p_i}\left(1 -p_ic_2 \right) (f_i(x) - f_i(y) - \langle \nabla f_i(y), x-y\rangle ) \\
& \overset{\eqref{eq:procL2}}{\leq} & 2\left(c_2 L +\max_{i=1,\ldots, n}\frac{L_i}{np_i}\left(1 -p_ic_2 \right)  \right) (f(x) - f(y) - \langle \nabla f(y), x-y\rangle ).
\end{eqnarray*}
Substituting $y =x^*$ and comparing the above to the definition of expected smoothness~\eqref{eq:expsmooth} we have that
\begin{equation} \label{eq:CLinterpolc2}
\cL \quad \leq \quad  c_2 L +\max_{i=1,\ldots, n}\frac{L_i}{np_i}\left(1 -p_ic_2 \right).
\end{equation}

(i) For independent sampling, we have that ${\bf P}_{ij} = p_ip_j$ for $i \neq j$, consequently $c_2 =1.$ Thus~\eqref{eq:CLinterpolc2} gives~\eqref{eq:cLis}.

(ii) For $\tau$-nice sampling, we have that $\mP_{ij} = \frac{\tau(\tau-1)}{n(n-1)}$ for $j \neq i$ and $\mP_{ii} = p_i = \frac{\tau}{n}$, hence $c_2 = \frac{n(\tau-1)}{\tau(n-1)}$ and \eqref{eq:CLinterpolc2} gives~\eqref{eq:cLns}.
\end{proof}

\section{Proof of Theorem~\ref{lem:sigma}}

\begin{proof}
	\begin{eqnarray*}
		\sigma^2 = \mathbb{E}[\|\nabla f_v(x^*) \|^2] &=& \mathbb{E}\left[ \left\| \frac{1}{n}\sum_{i=1}^n\nabla f_i(x^*)v_i \right\|^2\right] = \frac{1}{n^2}\mathbb{E}\left[\left\| \sum_{i=1}^n \nabla f_i(x^*)v_i \right\|^2\right]= \frac{1}{n^2} \mathbb{E} \left[\left\| \sum_{i\in {S}} \frac{1}{p_i}h_i \right\|^2\right] \\
		&=& \frac{1}{n^2} \mathbb{E} \left[ \left\| \sum_{i=1}^n 1_{i \in { S}}\frac{1}{p_i}h_i\right\|^2 \right] = \frac{1}{n^2} \mathbb{E} \left[\sum_{i=1}^n \sum_{j=1}^n 1_{i \in { S}} 1_{j \in { S}} \langle \frac{1}{p_i}h_i, \frac{1}{p_j}h_j \rangle \right]\\
		&=& \frac{1}{n^2}\sum_{i,j}\frac{\mP_{ij}}{p_ip_j} \langle h_i, h_j \rangle .
	\end{eqnarray*}
\end{proof}

\section{Proof of Proposition \ref{pro:sigma}}

\begin{proof}
	(i) By straight calculation from (\ref{eq:sigma}). \\
	
	(ii) For independent sampling $S$, $\mP_{ij} = p_ip_j$ for $i\neq j$, hence, 
	\begin{eqnarray*}
		\sigma^2 &=& \frac{1}{n^2} \sum_{i,j\in [n]} \frac{\mP_{ij}}{p_ip_j} \langle h_i, h_j \rangle = \frac{1}{n^2} \sum_{i,j\in [n]} \langle h_i, h_j \rangle + \frac{1}{n^2} \sum_{i\in [n]} \left(\frac{1}{p_i} -1\right)\|h_i\|^2 \\
		&=& \frac{1}{n^2} \|\nabla f(x^*) \|^2 +  \frac{1}{n^2} \sum_{i\in [n]} \left(\frac{1}{p_i} -1\right)\|h_i\|^2  = \frac{1}{n^2} \sum_{i\in [n]} \left(\frac{1}{p_i} -1\right)\|h_i\|^2. 
	\end{eqnarray*}
	
	(iii) For $\tau$-nice sampling $S$, if $\tau =1$, it is obvious. If $\tau \geq 1$,
	 then $\mP_{ij} = \frac{C_{n-2}^{\tau-2}}{C_n^\tau}$ for $i\neq j$, and $p_i = \frac{\tau}{n}$ for all $i$. Hence, 
	\begin{eqnarray*}
		\sigma^2 &=& \frac{1}{n^2} \sum_{i,j\in [n]} \frac{\mP_{ij}}{p_ip_j} \langle h_i, h_j \rangle \\
		&=& \frac{1}{n^2} \sum_{i\neq j} \frac{\tau(\tau-1)}{n(n-1)}\cdot \frac{n^2}{\tau^2} \langle h_i, h_j \rangle + \frac{1}{n^2} \sum_{i\in [n]} \frac{n}{\tau} \|h_i\|^2 \\
		&=& \frac{1}{n\tau} \left( \sum_{i\neq j} \frac{\tau-1}{n-1}\langle h_i, h_j \rangle +  \sum_{i\in [n]} \|h_i\|^2 \right)\\ 
		&=& \frac{1}{n\tau} \left( \sum_{i,j\in [n]} \frac{\tau-1}{n-1}\langle h_i, h_j \rangle +  \sum_{i\in [n]} \frac{n-\tau}{n-1}\|h_i\|^2 \right)\\  
		&=& \frac{1}{n\tau} \cdot \frac{n-\tau}{n-1} \sum_{i\in [n]} \|h_i\|^2. 
	\end{eqnarray*}

(iv) For partition sampling, $\mP_{ij} = p_C$ if $i, j\in C$, and $\mP_{ij} = 0$ otherwise. Hence, 
\[
	\sigma^2 = \frac{1}{n^2} \sum_{i,j\in [n]} \frac{\mP_{ij}}{p_ip_j} \langle h_i, h_j \rangle = \frac{1}{n^2} \sum_{C\in {\cal G}} \sum_{i, j\in C} \frac{1}{p_C} \langle h_i, h_j \rangle = \frac{1}{n^2}\sum_{C\in {\cal G}} \frac{1}{p_C} \|\sum_{i\in C}h_i\|^2.
\]

\end{proof}

\section{Importance sampling}
\label{sec:important}
\subsection{Single element sampling}
\label{eq:importancesamp}
From (\ref{eq:lmaxi1}) it is easy to see that the probabilities that minimize $\cL_{\max}$ are 
$
p_i^{\cal L} = \left. L_i \right/ \sum_{j\in [n]}L_j,
$
for all $i$, and consequently  ${\cal L}_{\max} = \overline{L}$. On the other hand the probabilities that minimize~\eqref{eq:sigma1} are given by
$
p_i^{\sigma^2} = \left. \|h_i\| \right/ \sum_{j\in [n]}\|h_j\|,
$
for all $i$, with $\sigma^2 =  (\sum_{i\in [n]}\|h_i\| /n)^2 \eqdef \sigma_{opt}^2$. 

\paragraph{Importance sampling.} From $p_i^{\cal L}$ and $p_i^{\sigma^2}$, we construct interpolated probabilities $p_i$ as follows: 
\begin{equation} \label{eq:optimal_prob_sing}
p_i = p_i(\alpha) = \alpha p_i^{\cal L} + (1-\alpha)p_i^{\sigma^2},
\end{equation}
where $\alpha \in (0, 1)$. Then $0< p_i< 1$ and from (\ref{eq:lmaxi1}) we have 
$$
\cL_{\max} \leq \frac{1}{\alpha} \cdot \frac{1}{n} \max_{i\in [n]} \frac{L_i}{p_i^{\cal L}(\tau)} = \frac{1}{\alpha} \overline{L}.
$$
Similarly, from~(\ref{eq:sigma1}) we have that $\sigma^2 \leq \frac{1}{1-\alpha}\sigma^2_{opt}$. Now by letting $p_i = p_i(\alpha)$, from (\ref{eq:itercomplexlinis}) in Theorem \ref{theo:strcnvlin}, we get an upper bound of the right hand side of (\ref{eq:itercomplexlin}): 
\begin{equation}\label{eq:alpha1iterbound}
\max\left\{ \frac{2\overline{L}}{\alpha \mu}, \frac{4\sigma^2_{opt}}{(1-\alpha)\epsilon\mu^2} \right\}.
\end{equation}
By minimizing this bound in $\alpha$ we can get 
\begin{equation}\label{eq:alpha1}
\alpha = \frac{\overline{L}}{2\sigma^2_{opt}/\epsilon\mu + \overline{L}},
\end{equation}
and then the upper bound~\eqref{eq:alpha1iterbound} becomes 
\begin{equation}\label{eq:alpha1bound}
\frac{4\sigma^2_{opt}}{\epsilon \mu^2} + \frac{2  \overline{L} }{ \mu} \leq 2\max \left\{ \frac{2\overline{L}}{ \mu}, \frac{4\sigma^2_{opt}}{\epsilon\mu^2} \right\},
\end{equation}
where the right hand side comes by setting $\alpha = 1/2$. Notice that the minimum of the iteration complexity in (\ref{eq:itercomplexlin}) is not less than $\max \left\{ \frac{2\overline{L}}{ \mu}, \frac{4\sigma^2_{opt}}{\epsilon\mu^2} \right\}$. Hence, the iteration complexity of this importance sampling(left hand side of (\ref{eq:alpha1bound})) is at most two times larger than the minimum of the iteration complexity in (\ref{eq:itercomplexlin}) over $p_i$. 

\subsection{Independent sampling}
\label{sec:indepensamp}

For the independent sampling $S$, in this section we will use the following upper bound on $\cL$ given by
	\begin{equation}\label{eq:lmaxiis}
	 \cL_{\max} \leq   \sum_{i} \frac{L_i}{n} + \max_{i\in [n]} \frac{1-p_i}{p_i} \frac{L_i}{n},
	\end{equation}
which follows immediatly from~\eqref{eq:cLis} by using that $L \leq \frac{1}{n}\sum_{i=1}^n L_i \eqdef \overline{L}.$
\paragraph{Calculating $p_i^{\cL}(\tau)$.} Minimizing the upper bound of $\cL_{\max}$ in~\eqref{eq:lmaxiis} boils down to minimizing $\max_{i\in [n]} (\frac{1}{p_i}-1)L_i$, which is not easy generally. Instead, as a proxy we obtain the probabilities $p_i$ by solving  
\begin{equation}\label{eq:lmaxiis22}
\begin{array}{rcl}
\min & & \max_{i\in [n]}\frac{L_i}{p_i} \\
{\rm s.t.\ }& & \sum_{i\in [n]}p_i = \tau, \  0< p_i \leq 1, \forall i. 
\end{array}
\end{equation}
Let $q_i = \frac{L_i}{\sum_{j\in [n]}L_j}\cdot \tau$ for all $i$, and $T = \{ i | q_i >1 \}$. If $T = \emptyset$, it is easy to see $p_i = p_i^{\cal L}(\tau)= q_i$ solves (\ref{eq:lmaxiis22}). Otherwise, in order to solve (\ref{eq:lmaxiis22}), we can choose $p_i = p_i^{\cal L}(\tau) =1$ for $i\in T$, and $q_i\leq p_i = p_i^{\cal L}(\tau) \leq 1$ for $i\notin T$ such that $\sum_{i\in [n]} p_i^{\cal L}(\tau) = \tau$. By letting $p_i = p_i^{\cal L}(\tau)$, we have that~\eqref{eq:lmaxiis22} becomes
\begin{equation} \label{eq:cLtempappis}
	\cL_{\max} \leq \frac{1}{n} \left( \left(1+ \frac{1}{\tau}\right)\sum_{j\in [n]} L_j - \min_{j\in [n]} L_j \right) \leq  \left(1+\frac{1}{\tau}\right) \overline{L}.
\end{equation}
\paragraph{Calculating $p_i^{\sigma^2}(\tau)$.}For $\sigma^2$, from (\ref{eq:sigmais}), we need to solve 
\begin{equation}\label{eq:sigmais22}
\begin{array}{rcl}
\min & & \sum_{i\in [n]} \frac{\|h_i\|^2}{p_i} \\
{\rm s.t.\ }& & \sum_{i\in [n]}p_i = \tau, \  0< p_i \leq 1, \forall i. 
\end{array}
\end{equation}
Let $q_i = \frac{\|h_i\|}{\sum_{j\in [n]} \|h_j\|}\cdot \tau$ for all $i$, and let $T = \{ i | q_i >1  \}$. If $T = \emptyset$, it is easy to see that $p_i = p_i^{\sigma^2}(\tau) = q_i$ solve (\ref{eq:sigmais22}). Otherwise, it is a little complicated to find the optimal solution. For simplicity, if $T \neq \emptyset$, we choose $p_i = p_i^{\sigma^2}(\tau) = 1$ for $i\in T$, and $q_i \leq p_i = p_i^{\sigma^2}(\tau) \leq 1$ for $i\notin T$ such that $\sum_{i\in [n]} p_i^{\sigma^2}(\tau)= \tau$. By letting $p_i = p_i^{\sigma^2}(\tau)$, from (\ref{eq:sigmais}), we have 
\begin{eqnarray*}
	\sigma^2 &\leq& \frac{1}{n^2} \sum_{i\notin T} \left( \frac{\|h_i\| \sum_{j\in [n]}\|h_j\|}{\tau} - \|h_i\|^2 \right) \\ 
	&\leq& \frac{1}{\tau}\left(\frac{\sum_{i\in [n]}\|h_i\|}{n}\right)^2 \eqdef \sigma^2_{opt}(\tau). 
\end{eqnarray*}

\paragraph{Importance sampling.} Since by~\eqref{eq:cLtempappis} we have that $\cL_{\max} \leq \left(1+\frac{1}{\tau}\right) \overline{L}$ and $\sigma = \sigma^2_{opt}(\tau)$ are obtained by using the upper bounds in (\ref{eq:lmaxiis}) and (\ref{eq:sigmais}), and the upper bounds are nonincreasing as $p_i$ increases,  we get the following property. 
\begin{proposition}\label{pro:landsigma}
	If $p_i \geq p_i^{\cal L}(\tau)$ for all $i$, then $\cL_{\max} \leq  (1+\frac{1}{\tau}) \overline{L}$, and if $p_i \geq p_i^{\sigma^2}(\tau)$, then $\sigma^2 \leq \sigma^2_{opt}(\tau)$.
\end{proposition}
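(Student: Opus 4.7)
The plan is to exploit the elementwise monotonicity of the upper bounds on $\cL_{\max}$ and $\sigma^2$ in the probabilities $p_i$. Looking at \eqref{eq:lmaxiis}, we can rewrite the bound on $\cL_{\max}$ as $\overline{L} + \frac{1}{n}\max_{i}\left(\frac{L_i}{p_i} - L_i\right)$. Since $L_i \geq 0$ and $\frac{1}{p_i}$ is strictly decreasing in $p_i$, each quantity $\frac{L_i}{p_i} - L_i$ is nonincreasing in $p_i$, and therefore so is its maximum over $i$. Similarly, \eqref{eq:sigmais} expresses $\sigma^2 = \frac{1}{n^2}\sum_i \frac{1-p_i}{p_i}\|h_i\|^2$, and each summand $\frac{1-p_i}{p_i}\|h_i\|^2$ is nonincreasing in $p_i$.

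With monotonicity in hand, the first claim follows immediately: if $p_i \geq p_i^{\cL}(\tau)$ for all $i$, then evaluating the bound \eqref{eq:lmaxiis} at $\{p_i\}$ gives a value no larger than evaluating it at $\{p_i^{\cL}(\tau)\}$. The latter was already computed in the derivation leading to \eqref{eq:cLtempappis} and shown to be at most $(1+1/\tau)\overline{L}$; concatenating the two inequalities yields $\cL_{\max} \leq (1+1/\tau)\overline{L}$. The argument for $\sigma^2$ is analogous: if $p_i \geq p_i^{\sigma^2}(\tau)$ for all $i$, then the sum in \eqref{eq:sigmais} at $\{p_i\}$ is no larger than the same sum at $\{p_i^{\sigma^2}(\tau)\}$, which was shown (in the construction of $p_i^{\sigma^2}(\tau)$) to equal $\sigma^2_{opt}(\tau)$.

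There is no real obstacle here — the proposition is essentially a record of monotonicity, reducing a statement that must hold for an entire range of probability vectors to the single boundary case $p_i = p_i^{\cL}(\tau)$ (respectively $p_i = p_i^{\sigma^2}(\tau)$) that was already handled when those reference probabilities were defined. The only mild subtlety is checking, for the $\cL_{\max}$ part, that when some coordinates get clipped to $1$ (i.e.\ when $T \neq \emptyset$), increasing $p_i$ beyond $p_i^{\cL}(\tau)$ still does not increase the maximum in \eqref{eq:lmaxiis}; but this is covered by the coordinatewise monotonicity, since each individual term $\frac{L_i}{np_i} - \frac{L_i}{n}$ is already nonincreasing on $(0,1]$ regardless of whether the index belongs to $T$ or not.
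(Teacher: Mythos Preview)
Your proof is correct and follows exactly the paper's approach: the paper simply remarks, immediately before stating the proposition, that the upper bounds in \eqref{eq:lmaxiis} and \eqref{eq:sigmais} are nonincreasing as each $p_i$ increases, and you spell this monotonicity out explicitly. One tiny slip: you say the value of \eqref{eq:sigmais} at $\{p_i^{\sigma^2}(\tau)\}$ ``equals'' $\sigma^2_{opt}(\tau)$, whereas the paper only established it is \emph{at most} $\sigma^2_{opt}(\tau)$; this does not affect the argument, since chaining the two inequalities still gives $\sigma^2 \leq \sigma^2_{opt}(\tau)$.
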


From Proposition \ref{pro:landsigma}, we can get the following result. 

\begin{proposition}\label{pro:pialpha}
For $0< \alpha <1$, let $p_i(\alpha)$ satisfy 
\begin{equation}\label{pialphais}
\left\{
\begin{array}{l}
1\geq p_i(\alpha) \geq \min\{ 1, p_i^{\cal L}(\alpha \tau) + p_i^{\sigma^2}((1-\alpha)\tau) \}, \quad \forall i, \\
\sum_{i\in [n]}p_i(\alpha) = \tau.
\end{array}\right.
\end{equation}
If $p_i = p_i(\alpha)$ where $p_i(\alpha)$ satisfies (\ref{pialphais}), then we have 
$$
\cL_{\max} \leq \left(1+\frac{1}{\alpha \tau}\right)\overline{L},
$$
and 
$$
\sigma^2 \leq \sigma^2_{opt}((1-\alpha)\tau) = \frac{1}{(1-\alpha)\tau}(\frac{\sum_{i\in [n]}\|h_i\|}{n})^2.
$$
\end{proposition}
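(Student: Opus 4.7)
\textbf{Proof plan for Proposition \ref{pro:pialpha}.} The plan is to reduce the claim to two applications of the previously established Proposition~\ref{pro:landsigma}, once with the ``budget'' $\alpha\tau$ (to bound $\cL_{\max}$) and once with the ``budget'' $(1-\alpha)\tau$ (to bound $\sigma^2$). The key observation is that the bounds stated in Proposition~\ref{pro:landsigma} come from the upper bounds~\eqref{eq:lmaxiis} and~\eqref{eq:sigmais}, both of which are \emph{coordinate-wise nonincreasing} in the $p_i$'s. Hence any sampling whose per-coordinate probabilities dominate $p_i^{\cL}(\alpha\tau)$ inherits the $\cL_{\max}$-bound associated with $\alpha\tau$, and any sampling that dominates $p_i^{\sigma^2}((1-\alpha)\tau)$ inherits the $\sigma^2$-bound associated with $(1-\alpha)\tau$.

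First I would verify feasibility of the definition of $p_i(\alpha)$ in~\eqref{pialphais}: since $p_i^{\cL}(\alpha\tau)\in(0,1]$, $p_i^{\sigma^2}((1-\alpha)\tau)\in(0,1]$, and by construction $\sum_i p_i^{\cL}(\alpha\tau)=\alpha\tau$, $\sum_i p_i^{\sigma^2}((1-\alpha)\tau)=(1-\alpha)\tau$, we have
\begin{equation*}
\sum_i \min\bigl\{1,\; p_i^{\cL}(\alpha\tau)+p_i^{\sigma^2}((1-\alpha)\tau)\bigr\} \;\leq\; \alpha\tau+(1-\alpha)\tau \;=\; \tau,
\end{equation*}
so there is room to pick $p_i(\alpha)\in[\min\{1,\cdot\},1]$ with total mass exactly $\tau$.

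Next I would show the two domination inequalities coordinate-wise. Fix $i$ and consider two cases. If $p_i^{\cL}(\alpha\tau)+p_i^{\sigma^2}((1-\alpha)\tau)\leq 1$, then $p_i(\alpha)\geq p_i^{\cL}(\alpha\tau)+p_i^{\sigma^2}((1-\alpha)\tau)\geq p_i^{\cL}(\alpha\tau)$ and similarly $\geq p_i^{\sigma^2}((1-\alpha)\tau)$, using nonnegativity of the other term. If instead $p_i^{\cL}(\alpha\tau)+p_i^{\sigma^2}((1-\alpha)\tau)>1$, then $p_i(\alpha)=1\geq p_i^{\cL}(\alpha\tau)$ and $\geq p_i^{\sigma^2}((1-\alpha)\tau)$, since each of these lies in $(0,1]$. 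Either way, $p_i(\alpha)\geq p_i^{\cL}(\alpha\tau)$ and $p_i(\alpha)\geq p_i^{\sigma^2}((1-\alpha)\tau)$ for every $i\in[n]$.

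Finally I would apply Proposition~\ref{pro:landsigma} twice. The first application (with the role of $\tau$ played by $\alpha\tau$) combined with the domination $p_i(\alpha)\geq p_i^{\cL}(\alpha\tau)$ yields $\cL_{\max}\leq (1+\tfrac{1}{\alpha\tau})\overline{L}$. The second application (with the role of $\tau$ played by $(1-\alpha)\tau$) combined with $p_i(\alpha)\geq p_i^{\sigma^2}((1-\alpha)\tau)$ yields $\sigma^2\leq \sigma^2_{opt}((1-\alpha)\tau)$, which by definition equals $\tfrac{1}{(1-\alpha)\tau}\bigl(\tfrac{\sum_{i\in[n]}\|h_i\|}{n}\bigr)^2$. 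The only mildly delicate step is the monotonicity argument used implicitly in Proposition~\ref{pro:landsigma}: one must observe that the upper bound~\eqref{eq:lmaxiis} is nondecreasing if any $p_i$ is decreased and that the expression in~\eqref{eq:sigmais} has the same property, so enlarging each $p_i$ above the ``optimal-budget'' probability $p_i^{\cL}(\alpha\tau)$ (resp.\ $p_i^{\sigma^2}((1-\alpha)\tau)$) can only tighten the bound, allowing us to invoke Proposition~\ref{pro:landsigma} with the smaller budget even though $\sum_i p_i(\alpha)=\tau$.
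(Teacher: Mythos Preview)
Your proposal is correct and follows essentially the same approach as the paper's proof: verify feasibility of~\eqref{pialphais} via the budget inequality $\sum_i \min\{1,p_i^{\cL}(\alpha\tau)+p_i^{\sigma^2}((1-\alpha)\tau)\}\leq \tau$, extract the coordinate-wise dominations $p_i(\alpha)\geq p_i^{\cL}(\alpha\tau)$ and $p_i(\alpha)\geq p_i^{\sigma^2}((1-\alpha)\tau)$, and then invoke Proposition~\ref{pro:landsigma} twice. Your explicit case split on whether $p_i^{\cL}(\alpha\tau)+p_i^{\sigma^2}((1-\alpha)\tau)$ exceeds $1$ is slightly more detailed than the paper's version, but the argument is the same.
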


\begin{proof}
First , we claim that $p_i(\alpha)$ can be constructed to satisfy (\ref{pialphais}). Since $0< p_i^{\cL}(\alpha) \leq 1$ and $0< p_i^{\sigma^2}((1-\alpha)\tau) \leq 1$, we know 
$$
0< \min\{ 1, p_i^{\cal L}(\alpha \tau) + p_i^{\sigma^2}((1-\alpha)\tau) \} \leq 1,
$$
for all $i$. Hence, we can first construct ${\tilde q}_i$ such that 
$$
1\geq {\tilde q}_i \geq \min\{ 1, p_i^{\cal L}(\alpha \tau) + p_i^{\sigma^2}((1-\alpha)\tau) \},
$$
for all $i$. Furthermore, since $\sum_{i\in [n]} p_i^{\cal L}(\alpha \tau) = \alpha \tau$ and $\sum_{i\in [n]} p_i^{\sigma^2}((1-\alpha)\tau) = (1-\alpha)\tau$, we know $\sum_{i\in [n]} {\tilde q}_i \leq \tau$. At last, we increase some ${\tilde q}_i$ which is less than one to make the sum equal to $\tau$, and hence, by letting $p_i(\alpha) = {\tilde q}_i$, $p_i(\alpha)$ satisfies (\ref{pialphais}). 

From (\ref{pialphais}), we have $p_i = p_i(\alpha) \geq p_i^{\cL}(\alpha\tau)$. Then by Proposition \ref{pro:landsigma}, we have 
$$
\cL_{\max} \leq \left(1+\frac{1}{\alpha \tau}\right)\overline{L}.
$$
We also have $p_i(\alpha) \geq p_i^{\sigma^2}((1-\alpha)\tau)$, hence, by Proposition \ref{pro:landsigma}, we get 
$$
\sigma^2 \leq \sigma^2_{opt}((1-\alpha)\tau) = \frac{1}{(1-\alpha)\tau}\left(\frac{\sum_{i\in [n]}\|h_i\|}{n}\right)^2.
$$
\end{proof}

From (\ref{eq:itercomplexlin}) in Theorem \ref{theo:strcnvlin}, by letting $p_i = p_i(\alpha)$ in Proposition \ref{pro:pialpha}, we get an upper bound of the right hand side of (\ref{eq:itercomplexlin}): 
$$
\max\left\{ \frac{2 (1+\frac{1}{\alpha \tau}) \overline{L}}{ \mu}, \frac{4\sigma^2_{opt}((1-\alpha)\tau)}{\epsilon\mu^2} \right\}.
$$
By minimizing this upper bound, we get 
\begin{equation} \label{eq:alpha_star}
\alpha = \frac{\tau -a-1 + \sqrt{4\tau + (\tau -a-1)^2}}{2\tau},
\end{equation}
and the upper bound becomes  
$$
\frac{2(1+\frac{1}{\alpha\tau})}{\mu}\cdot \overline{L} 
$$
where $a = 2(\frac{\sum_{i\in [n]}\|h_i\|}{n})^2/(\epsilon \mu \overline{L} )$. 
So suboptimal probabilities 
\begin{equation} \label{eq: optimal_prob_independ}
p_i = \min\{ 1, p_i^{\cal L}(\alpha \tau) + p_i^{\sigma^2}((1-\alpha)\tau) \},
\end{equation}
where $\alpha$ is given in Equation~(\ref{eq:alpha_star}).

\paragraph{Partially biased sampling.} In practice, we do not know $\|h_i\|$ generally. But we can use $p_i^{\cL}(\tau)$ and the uniform probability $\frac{\tau}{n}$ to construct a new probability just as that in Proposition \ref{pro:pialpha}. More specific, we have the following result. 

\begin{proposition}\label{pro:pialphapbs}
	Let $p_i$ satisfy 
	\begin{equation}\label{pialphapbs}
	\left\{
	\begin{array}{l}
	1\geq p_i \geq\min\{1, p_i^{\cal L}(\frac{\tau}{2}) +  \frac{1}{2}\cdot \frac{\tau}{n}  \}, \quad \forall i, \\
	\sum_{i\in [n]}p_i = \tau.
	\end{array}\right.
	\end{equation}
	Then we have 
	$$
	\cL_{\max} \leq  \left(1+\frac{2}{ \tau}\right) \overline{L},
	$$
	and 
	$$
	\sigma^2 \leq \left(\frac{2}{\tau} - \frac{1}{n}\right)\cdot \frac{1}{n}\sum_{i\in [n]}\|h_i\|^2. 
	$$
\end{proposition}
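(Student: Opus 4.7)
The plan is to mimic the proof of Proposition~\ref{pro:pialpha}, but with the role of the second ``sub-probability'' $p_i^{\sigma^2}((1-\alpha)\tau)$ replaced by the uniform quantity $\tau/n$ (both scaled by $1/2$), and with $\alpha$ fixed to $1/2$. The two bounds then come from two separate monotonicity arguments: the $\cL_{\max}$ bound follows because the constructed $p_i$ dominates $p_i^{\cL}(\tau/2)$, and the $\sigma^2$ bound follows because the constructed $p_i$ is bounded below by half of the uniform probability.

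First I would verify that probabilities satisfying~\eqref{pialphapbs} actually exist. Set $\tilde q_i \eqdef \min\{1, p_i^{\cL}(\tau/2) + \tfrac{1}{2}\cdot\tfrac{\tau}{n}\}$ and observe that
\begin{equation*}
\sum_{i\in [n]} \tilde q_i \;\leq\; \sum_{i\in[n]}\bigl(p_i^{\cL}(\tau/2) + \tfrac{\tau}{2n}\bigr) \;=\; \tfrac{\tau}{2} + \tfrac{\tau}{2} \;=\; \tau,
\end{equation*}
where we used $\sum_i p_i^{\cL}(\tau/2)=\tau/2$. Hence we can raise some coordinates $\tilde q_i<1$ until the sum equals $\tau$ while maintaining $\tilde q_i\le p_i\le 1$, giving the required $p_i$. (This is exactly the construction step carried out in the proof of Proposition~\ref{pro:pialpha}.)

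Next I would derive the $\cL_{\max}$ bound. Since $p_i\geq \tilde q_i\geq p_i^{\cL}(\tau/2)$ for all $i$, Proposition~\ref{pro:landsigma} applied with mini-batch size $\tau/2$ in place of $\tau$ gives
\begin{equation*}
\cL_{\max}\;\leq\; \Bigl(1+\tfrac{1}{\tau/2}\Bigr)\overline{L}\;=\;\Bigl(1+\tfrac{2}{\tau}\Bigr)\overline{L},
\end{equation*}
which is the first claim.

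For the $\sigma^2$ bound I would argue directly from formula~\eqref{eq:sigmais}. In the case $\tilde q_i=1$ we have $p_i=1$, so the corresponding summand $\frac{1-p_i}{p_i}\|h_i\|^2$ vanishes. In the remaining case $\tilde q_i = p_i^{\cL}(\tau/2)+\tfrac{\tau}{2n}$, monotonicity gives $p_i \geq \tfrac{\tau}{2n}$, hence $\frac{1-p_i}{p_i}\le \tfrac{2n}{\tau}-1$. Combining the two cases (the inequality trivially also holds when $p_i=1$, as $0\le \tfrac{2n}{\tau}-1$),
\begin{equation*}
\sigma^2 \;\overset{\eqref{eq:sigmais}}{=}\; \frac{1}{n^2}\sum_{i\in[n]} \frac{1-p_i}{p_i}\|h_i\|^2 \;\leq\; \Bigl(\tfrac{2}{n\tau}-\tfrac{1}{n^2}\Bigr)\sum_{i\in[n]}\|h_i\|^2 \;=\; \Bigl(\tfrac{2}{\tau}-\tfrac{1}{n}\Bigr)\cdot\frac{1}{n}\sum_{i\in[n]}\|h_i\|^2,
\end{equation*}
completing the proof. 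No step is truly hard; the only mild subtlety is the case split required in the last bound to handle the clipping at $1$ inside $\tilde q_i$, and the preliminary arithmetic check $\sum_i\tilde q_i\le\tau$ needed to justify existence.
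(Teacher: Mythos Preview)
Your proof is correct and follows essentially the same approach as the paper's own proof: the $\cL_{\max}$ bound is inherited from Proposition~\ref{pro:pialpha} via $p_i\geq p_i^{\cL}(\tau/2)$ and Proposition~\ref{pro:landsigma}, and the $\sigma^2$ bound is obtained directly from \eqref{eq:sigmais} using $p_i\geq \tau/(2n)$. Your version is slightly more explicit (you spell out the existence argument and the case split for the clipping at $1$), but there is no substantive difference.
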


\begin{proof}
The proof for $\cL_{\max}$ is the same as Proposition \ref{pro:pialpha}. For $\sigma^2$, from  (\ref{eq:sigmais}), since $p_i \geq \tau/2n$, we have 
\[
\sigma^2 = \frac{1}{n^2} \sum_{i\in [n]} \left(\frac{1}{p_i}-1\right)\|h_i\|^2 
\leq \frac{1}{n^2} \sum_{i\in [n]} \left(\frac{2n}{\tau}-1\right)\|h_i\|^2 
= \left(\frac{2}{\tau} - \frac{1}{n}\right)\cdot \frac{1}{n}\sum_{i\in [n]}\|h_i\|^2.
\]

\end{proof}

This sampling is very nice in the sense that it can maintain ${\cal L}_{\max}$ at least close to $\overline{L}$, and meanwhile, can acheive nearly linear speedup in $\sigma^2$ by increasing $\tau$. We can compare the upper bounds of ${\cal L}_{\max}$ and $\sigma^2$ for this sampling, $\tau$-nice sampling, and $\tau$-uniform independent sampling when $1< \tau = {\cal O}(1)$ in the following table.   

\begin{table}[t]
	\caption{Comparison of the upper bounds of ${\cal L}_{\max}$ and $\sigma^2$ for $\tau$-nice sampling, $\tau$-partially biased independent sampling, and $\tau$-uniform independent sampling.}
	\label{table1}
	\vskip 0.15in
	\begin{center}
			\begin{sc}
				\begin{tabular}{lcccr}
					\toprule
					& ${\cal L}_{\max}$ & $\sigma^2$  \\
					\midrule
					$\tau$-nice sampling   & $\frac{n}{\tau}\cdot \frac{\tau-1}{n-1}{\bar L} + \frac{1}{\tau}(1-\frac{\tau-1}{n-1})L_{\max}$ & $\frac{1}{\tau}\cdot \frac{n-\tau}{n-1}{\bar h}$ \\
					 &  & \\
					$\tau$-uniform is & ${\bar L} +(\frac{1}{\tau} - \frac{1}{n})L_{\max}$ & $(\frac{1}{\tau} - \frac{1}{n}){\bar h}$\\
					&  & \\
					$\tau$-pba-is    & $(1+ \frac{2}{\tau}){\bar L}$ & $(\frac{2}{\tau} - \frac{1}{n}){\bar h}$ \\
					\bottomrule
				\end{tabular}
			\end{sc}
	\end{center}
	\vskip -0.1in
\end{table}

From Table \ref{table1}, compared to $\tau$-nice sampling and $\tau$-uniform independent sampling, the iteration complexity of this $\tau$-partially biased independent sampling is at most two times larger, but could be about $\frac{2\tau}{n}$ smaller in some extremely case where 
$L_{\max} \approx n{\bar L}$ and $2{\cal L}/\mu$ dominates in (\ref{eq:itercomplexlin}). 

\section{Additional Experiments}
\label{addExp}

\subsection{From fixed to decreasing stepsizes: analysis of the switching time}

Here we evaluate the choice of the switching moment from a constant to a decreasing step size according to~\eqref{eq:gammakdef} from Theorem~\ref{theo:decreasingstep}. We are using synthetic data that was generated in the same way as it had been in the Section~\ref{exper} for the ridge regression problem $(n=1000, d=100)$. In particular we evaluate 4 different cases: (i) the theoretical  moment of regime switch at moment $k$ as predicted from the Theorem, (ii) early switch at $ 0.3 \times k$, (iii) late switch at $ 0.7 \times k$ and (iv) the optimal $k$ for switch, where the optimal $k$ is obtained using one-dimensional numerical minimization of~\eqref{eq:sdaiuna3} as a function of $k^*$.
\begin{figure}[H] 
 \centering
    \includegraphics[width=0.6\linewidth]{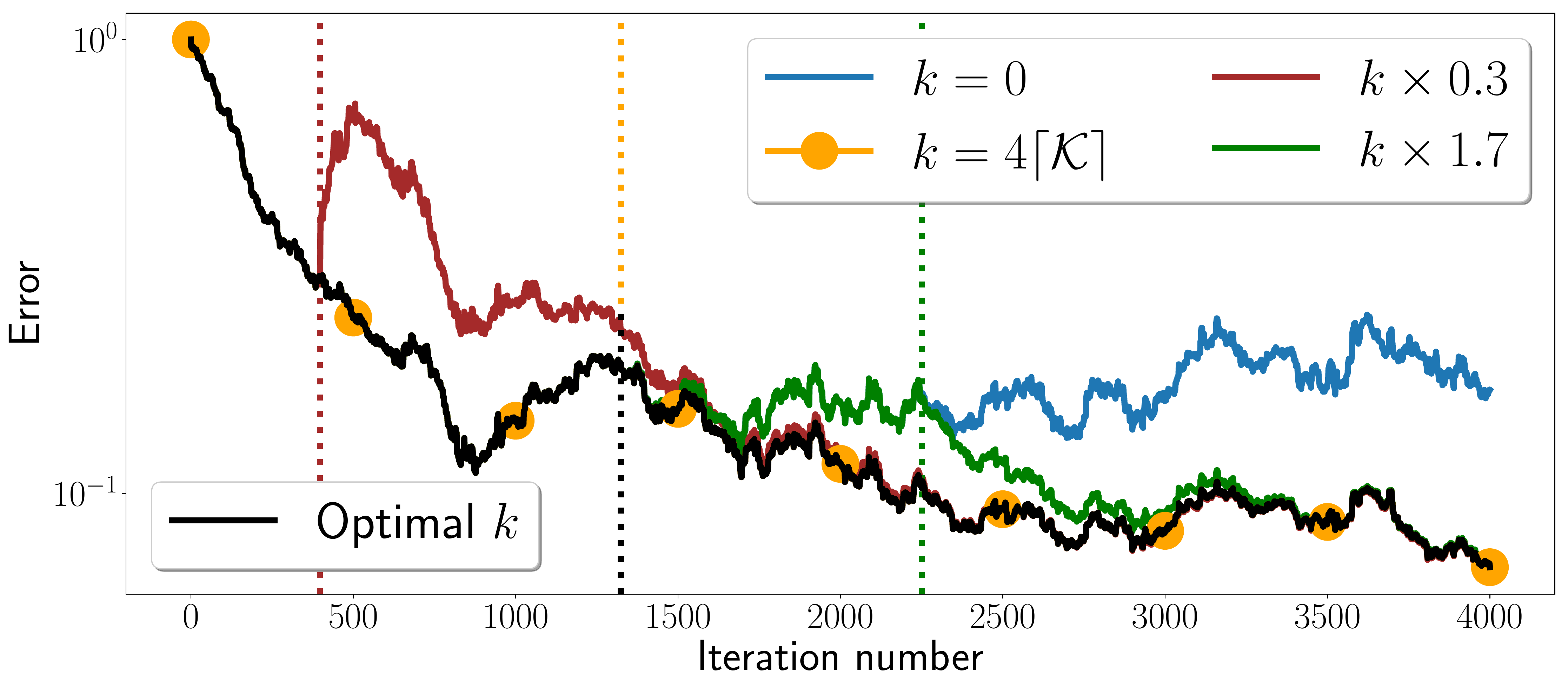}
    \includegraphics[width=0.6\linewidth]{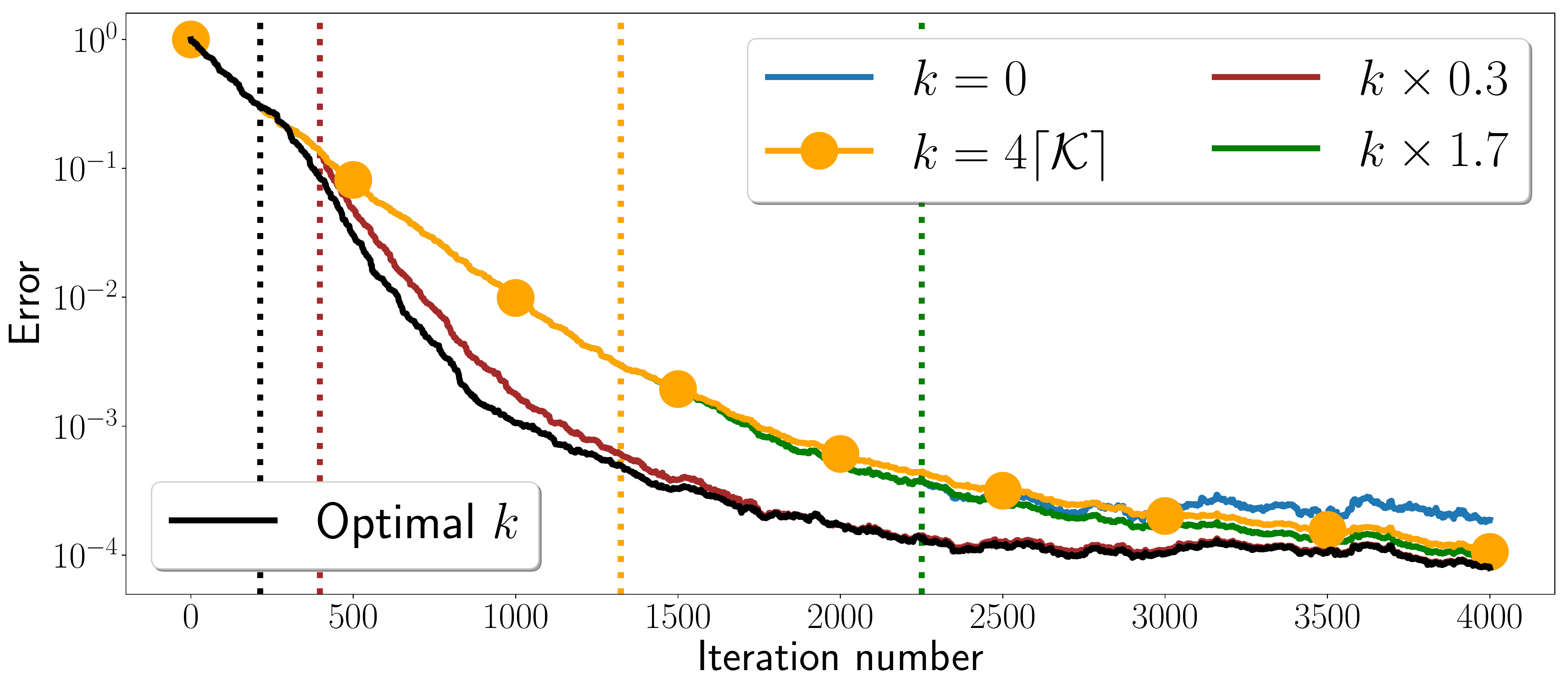}
    \caption{The first plot refers to situation when $x^0$ is close to $x^*$ (for our data $\norm{r^0}^2 = \norm{x^0 - x^*}^2 \approx 1.0$).
The second one covers the opposite case $(\norm{r^0}^2 \approx 864.6)$. Dotted verticals denote the moments of regime switch for the curves of the corresponding colour. The blue curve refers to constant step size $\frac{1}{2 \mathcal{L}}$. Notice that in the upper plot optimal and theoretical $k$ are very close}
    \label{fig:k_choice_}
\end{figure}

According to Figure~\ref{fig:k_choice_}, when $x^0$ is close to $x^*$, the moment of regime switch does not play a significant role in minimizing the number of iteration except for a very early switch, which actually also leads to almost the same situation in the long run. The case when $x^0$ is far from $x^*$ shows that preliminary one-dimensional optimization makes sense and allows to reduce the error at least during the early iterations.

\subsection{More on minibatches}

Figure~\ref{FigSamplingsXXX} reports on the same experiment as that described in Section~\ref{sec:exp2} (Figure~\ref{FigSamplings}) in the main body of the paper, but on ridge regression instead of logistic regression, and using different data sets. Our findings are similar, and corroborate the conclusions made in Section~\ref{sec:exp2}.

\begin{figure}[t] 
\centering
\begin{subfigure}{.49\textwidth}
  \centering
  \includegraphics[width=1\linewidth]{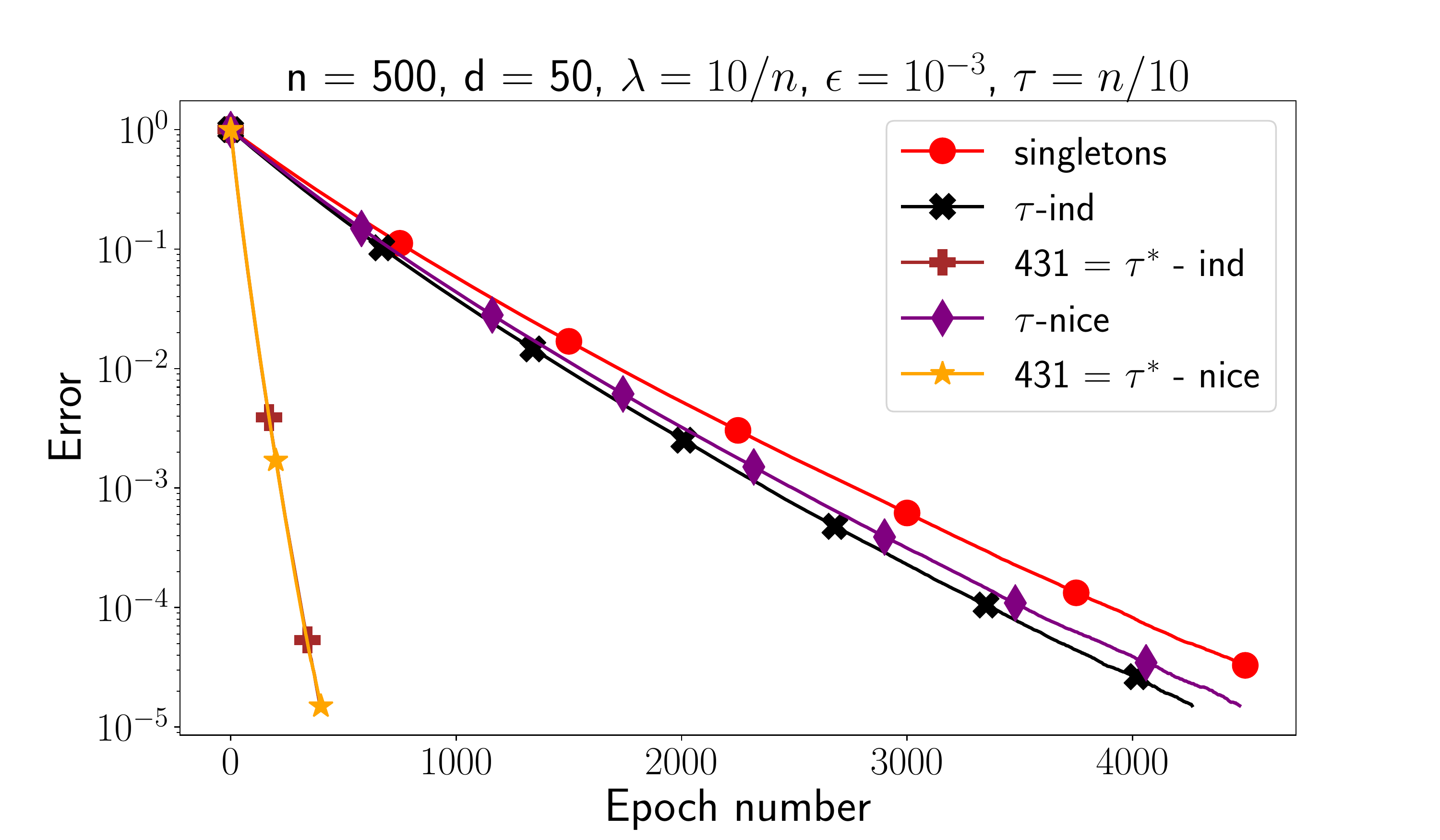}
\end{subfigure}
\begin{subfigure}{.49\textwidth}
 \centering
  \includegraphics[width=1\linewidth]{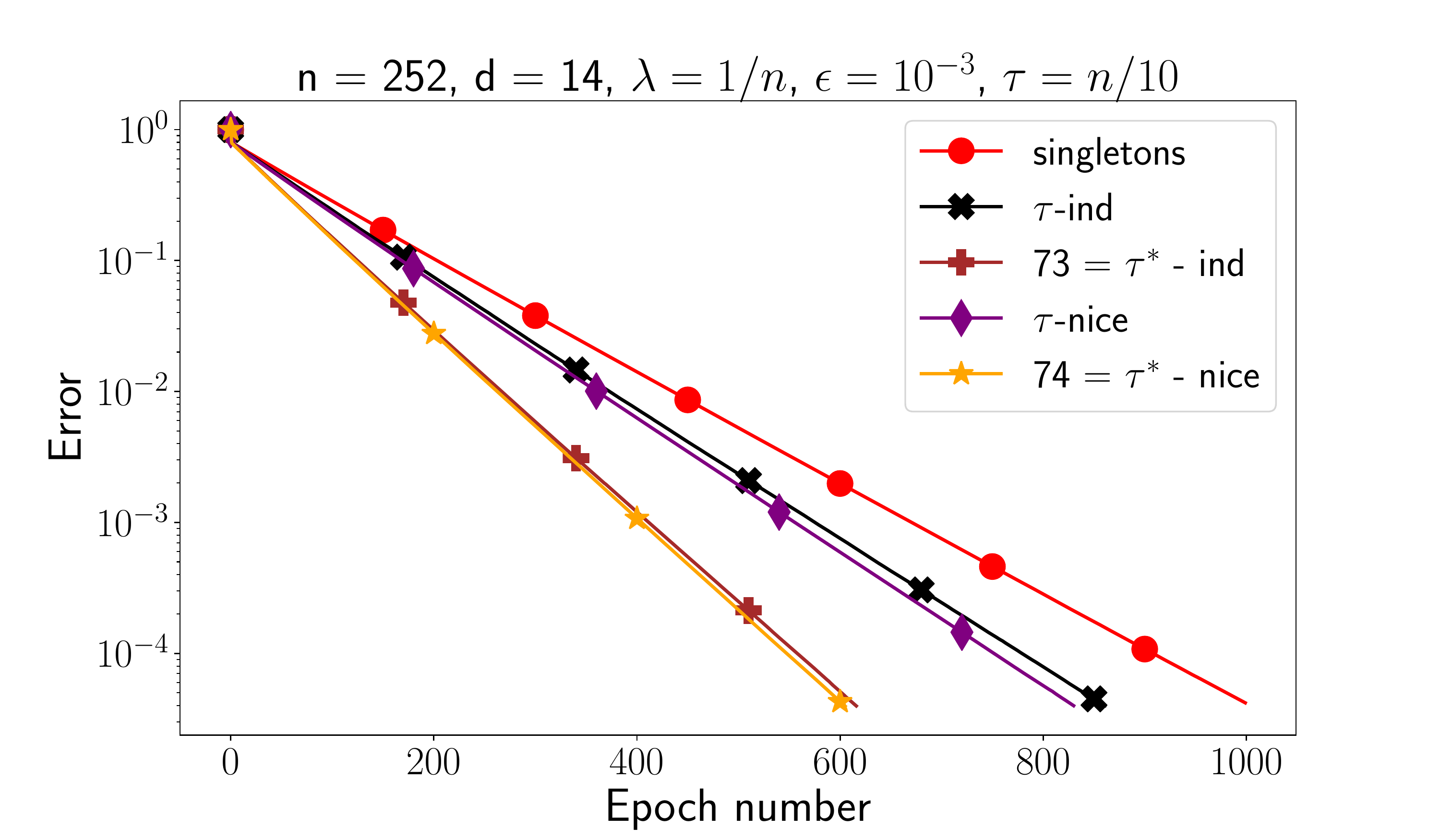}
\end{subfigure} \\
\caption{\footnotesize Performance of SGD with several minibatch strategies for  ridge regression. On the left: the real data-set bodyfat from LIBSVM. On the right: synthetic data.}
\label{FigSamplingsXXX}
\end{figure}

\subsection{Stepsize as a function of the minibatch size}

In our last experiment we calculate  the stepsize $\gamma$ as a function of the minibatch size 
 $\tau$ for $\tau$-nice sampling  using equation (\ref{eq:stepsize_tau}). Figure~\ref{fig:step_minibatch} depicts three plots, for three synthetic data sets of sizes $(n,d)\in \{(50,5),(100,10), (500,50)\}$. We consider regularized ridge regression problems with $\lambda=1/n$.  Note that  the stepsize is an increasing function of $\tau$.
 
\begin{figure}[H]
 \centering
    \includegraphics[width=0.45\linewidth]{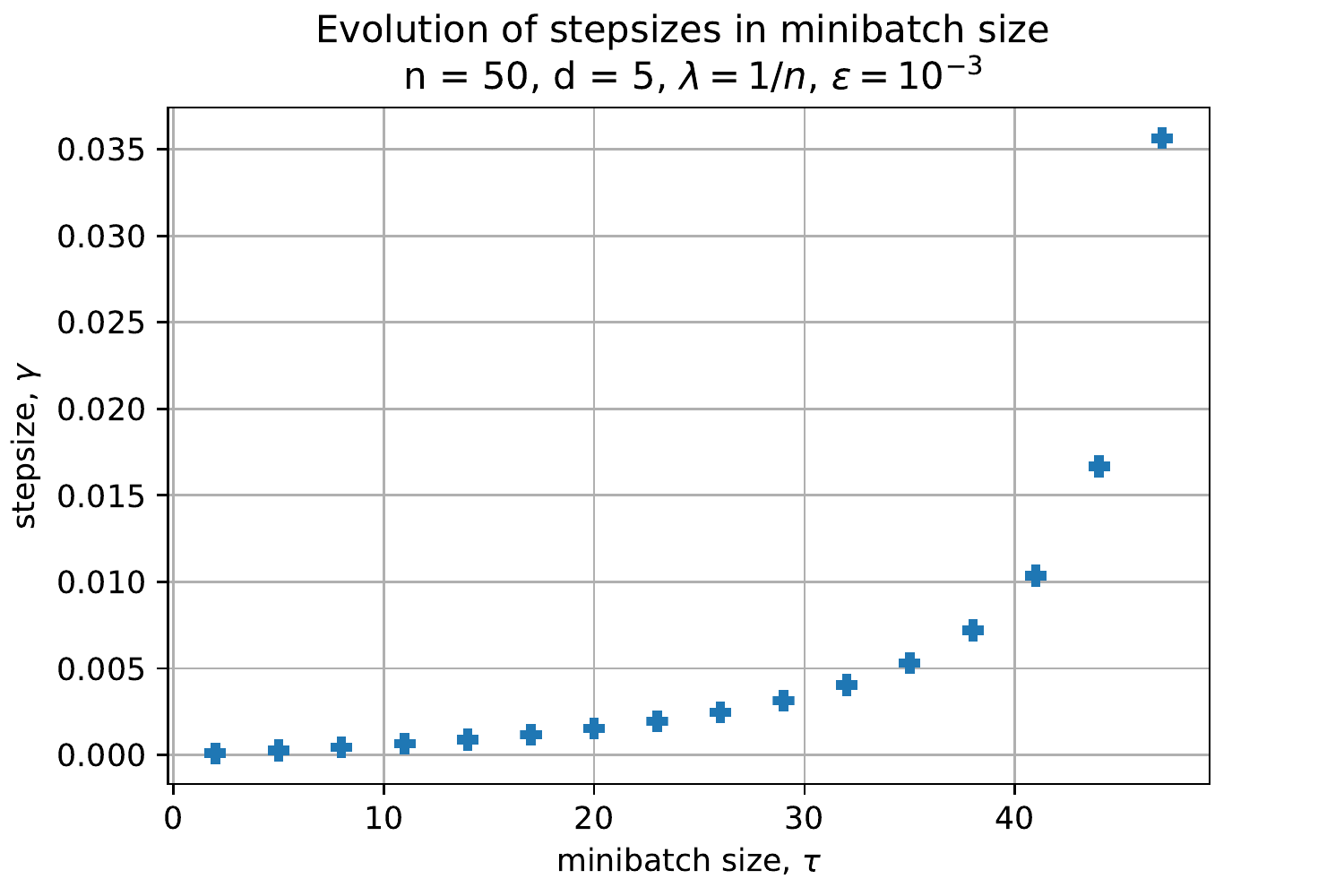}
    \includegraphics[width=0.45\linewidth]{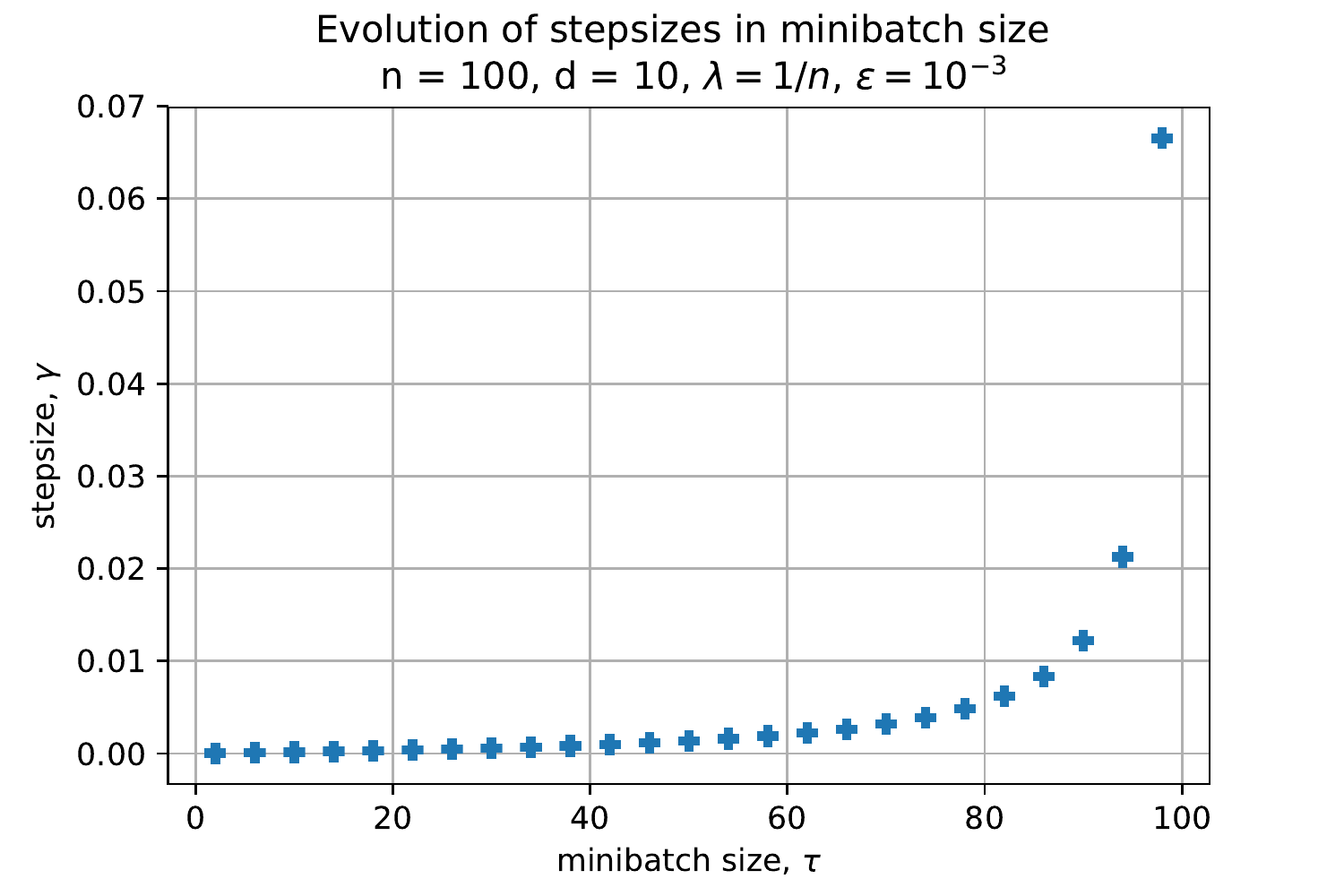}
    \includegraphics[width=0.45\linewidth]{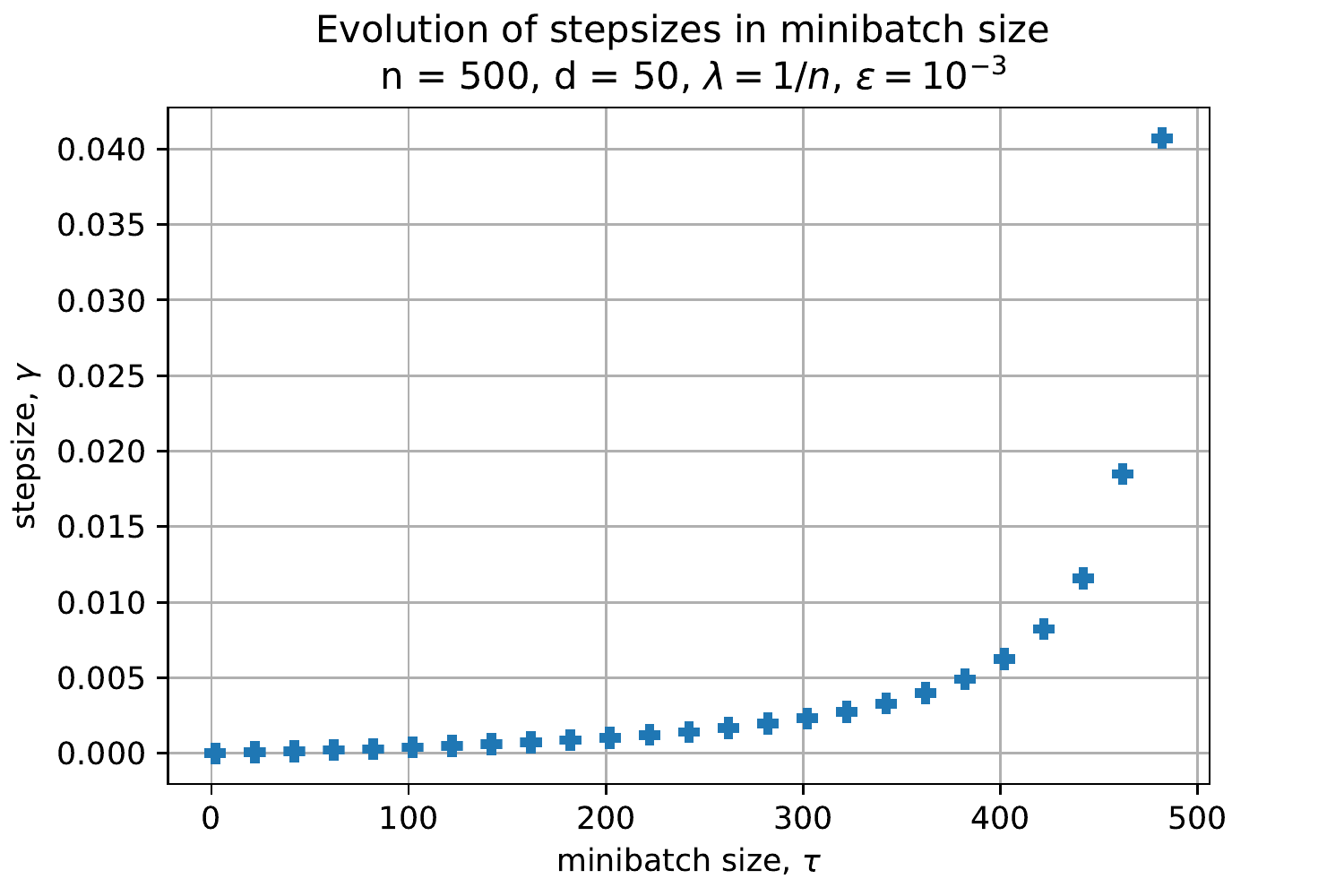}
   \caption{Evolution of stepsize with minibatch size $\tau$ for  $\tau$ nice sampling.}
    \label{fig:step_minibatch}
\end{figure}

\end{document}